\DeclareMathOperator*{\trace}{Tr}
\DeclareMathOperator*{\Diag}{Diag}
\newcommand{\tp}{^{\mathrm{T}}}
\newcommand{\invtp}{^{-\mathrm{T}}}
\newcommand{\grad}{\nabla}
\newcommand{\hessian}{\nabla^2}
\newcommand{\df}[1]{\mathrm{d}{#1}}
\newcommand{\rbrac}[1]{({#1})}
\newcommand{\rBrac}[1]{\left({#1}\right)}
\newcommand{\cbrac}[1]{\{{#1}\}}
\newcommand{\cBrac}[1]{\left\{{#1}\right\}}
\newcommand{\sbrac}[1]{[{#1}]}
\newcommand{\sBrac}[1]{\left[{#1}\right]}
\newcommand{\norm}[1]{\Vert{#1}\Vert}
\newcommand{\Norm}[1]{\left\Vert{#1}\right\Vert}
\newcommand{\abs}[1]{\vert{#1}\vert}
\newcommand{\ceil}[1]{\lceil{#1}\rceil}
\newtheorem{proposition}{Proposition}
\newtheorem{theorem}{Theorem}
\author{Zhepei Wang, Xin Zhou, Chao Xu, and Fei Gao
    \thanks{All authors are with the College of Control Science and Engineering, Zhejiang University, Hangzhou, 310027, China, and also with the Huzhou Institute of Zhejiang University, Huzhou, 313000, China. {\tt\small \{wangzhepei, iszhouxin, cxu, fgaoaa\}@zju.edu.cn} This work was supported by the National Natural Science Foundation of China under Grant 62003299 and Grant 62088101. \textit{(Corresponding authors: Fei Gao, Chao Xu.)}}
}
\title{Geometrically Constrained Trajectory Optimization \\ for Multicopters}
\begin{document}
    \maketitle

\begin{abstract}
In this article, we present an optimization-based framework for multicopter trajectory planning subject to geometrical configuration constraints and user-defined dynamic constraints. The basis of the framework is a novel trajectory representation built upon our novel optimality conditions for unconstrained control effort minimization. We design linear-complexity operations on this representation to conduct spatial-temporal deformation under various planning requirements. Smooth maps are utilized to exactly eliminate geometrical constraints in a lightweight fashion. A variety of state-input constraints are supported by the decoupling of dense constraint evaluation from sparse parameterization, and backward differentiation of flatness map. As a result, this framework transforms a generally constrained multicopter planning problem into an unconstrained optimization that can be solved reliably and efficiently. Our framework bridges the gaps among solution quality, planning efficiency, and constraint fidelity for a multicopter with limited resources and maneuvering capability. Its generality and robustness are both demonstrated by applications to different flight tasks. Extensive simulations and benchmarks are also conducted to show its capability of generating high-quality solutions while retaining the computation speed against other specialized methods by orders of magnitude. The source code of our framework is available at: \url{https://github.com/ZJU-FAST-Lab/GCOPTER}.
\end{abstract}

\begin{IEEEkeywords}
    Aerial Systems: Applications, Motion and Path Planning, Autonomous Vehicle Navigation, Collision Avoidance.
\end{IEEEkeywords}

\section{Introduction}
\IEEEPARstart{M}{ulticopters} rely on robust and efficient trajectory planning for safe yet agile autonomous navigation in complex environments~\cite{Ryll2019EfficientTP, Oleynikova2020OpenMPF, Zhang2020Falco, Campos2021AutonomousMAVs, Zhou2021EgoPLANNER, Foehn2021Alphapilot}. For robotics, precisely incorporating dynamics, smoothness, and safety is essential to generate high-quality motions.
Moreover, lightweight robots, such as multicopters under SWaP (size, weight, and power) constraints, put further hard requirements on the real-time computing using limited onboard resources.
Despite that various successful tools in general-purpose kinodynamic planning or optimal control have been presented, few of them guarantee efficient online planning while also considering general constraints on dynamics for multicopters.
Consequently, existing applications often use oversimplified requirements on trajectories for better computation efficiency, thus limiting the full exploitation of vehicle's capability.

\begin{figure}[ht]
    \begin{center}
        \subfigure[\label{fig:HighSpeedSideView}High-speed flights in the garage.]
        {\includegraphics[width=0.49\columnwidth]{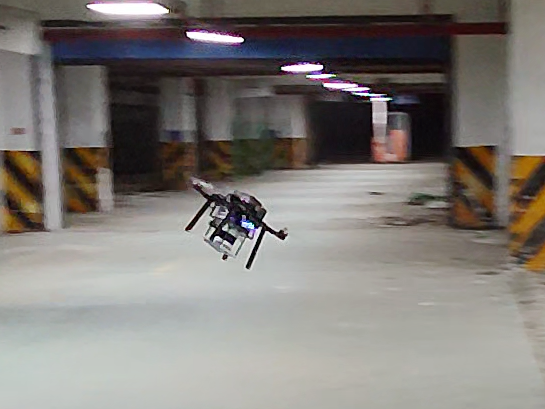}}%
        \hspace{0.02cm}
        \subfigure[\label{fig:Consecutive}$\mathrm{SE}(3)$ motions through windows.]
        {\includegraphics[width=0.49\columnwidth]{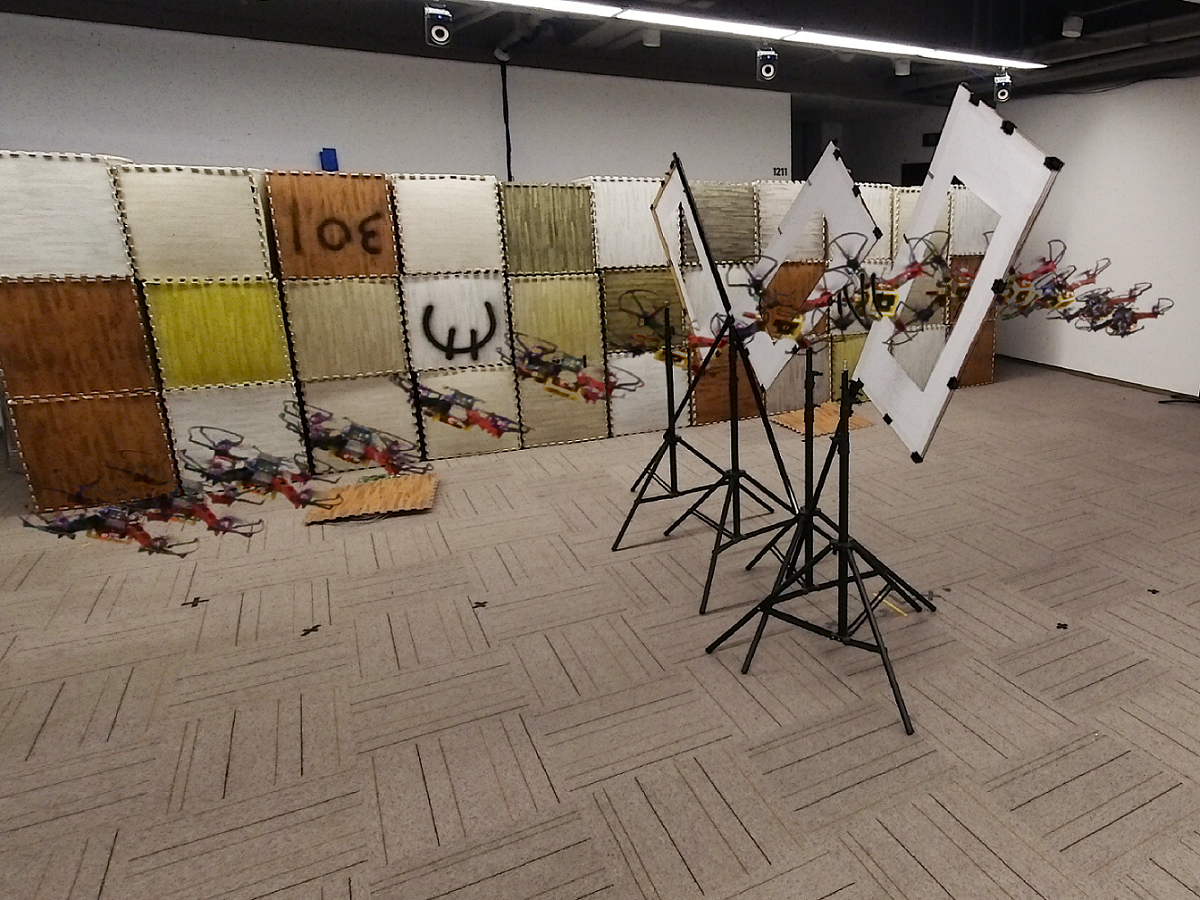}}
        \subfigure[\label{fig:HighSpeedGlobalView}Global trajectory planning results.]
        {\includegraphics[width=0.49\columnwidth]{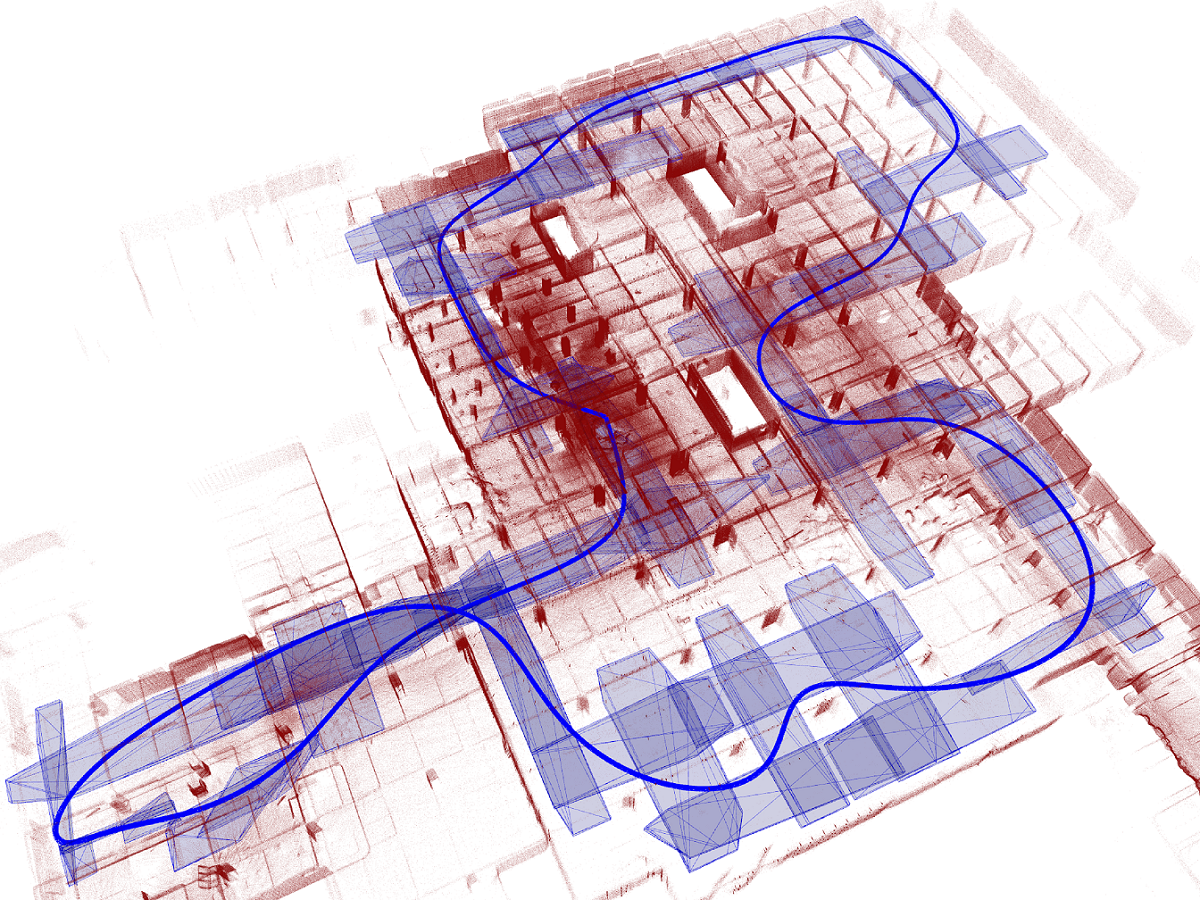}}%
        \hspace{0.02cm}
        \subfigure[\label{fig:ConsecutiveSE3Trajectory}$\mathrm{SE}(3)$ trajectory planning results.]
        {\includegraphics[width=0.49\columnwidth]{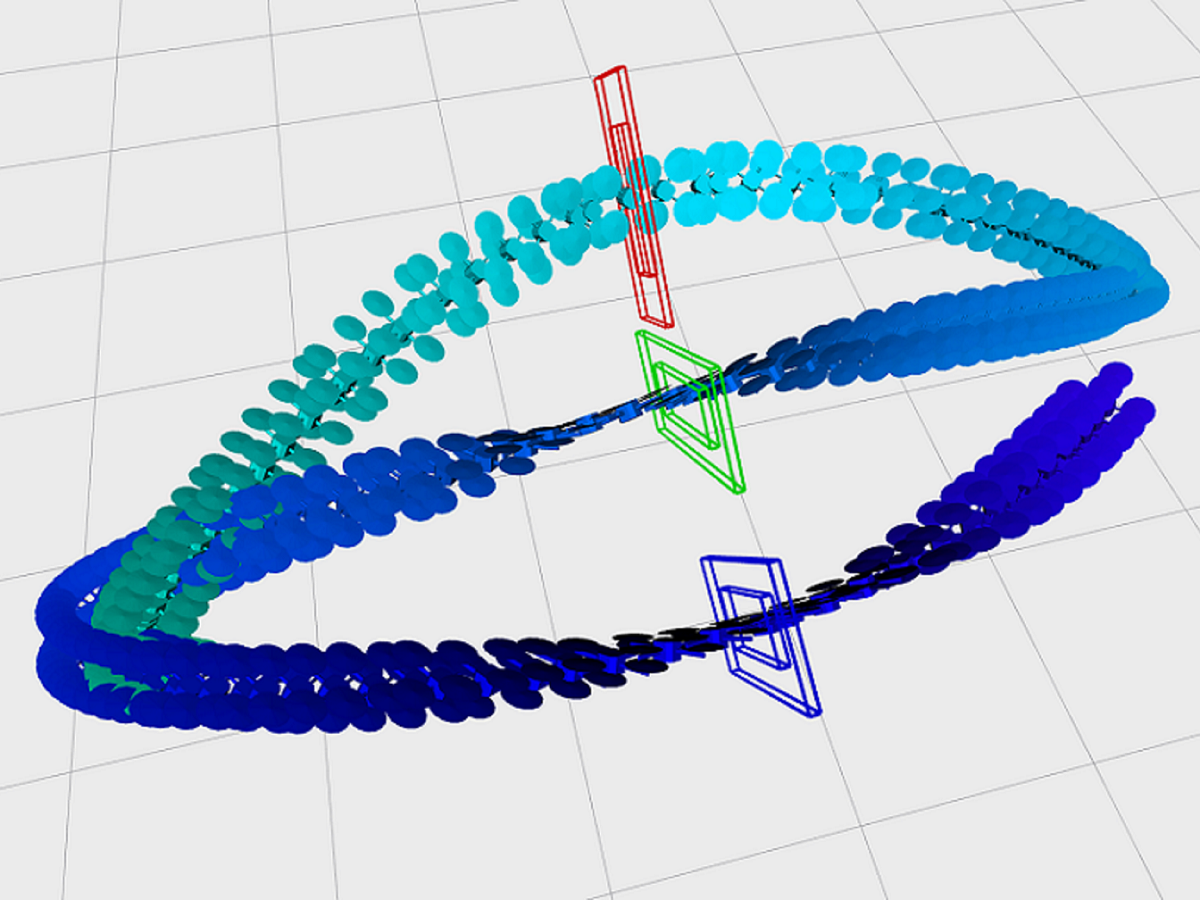}}
    \end{center}
    \caption{\label{fig:ExtremeFlights}Experimental validation of our framework through extreme flights. The left figures show long-distance high-speed flights in an underground garage. A $343.57m$ trajectory is computed in $0.29s$. The right figures show aggressive yet robust $\mathrm{SE}(3)$ maneuvers through narrow windows repeated for $12$ times. The flight speed and tilt angle in our experiments reach $12.0m/s$ and $90^\circ$, respectively. Details can be found in the attached multimedia.}
    \vspace{0.0cm}
\end{figure}

The high-performance planning mentioned above possesses four major algorithmic challenges. First, ensuring safety often involves frequent interactions with a large volume of highly discretized environment data. Second, the nonlinearity of vehicle dynamics brings difficulties to directly enforcing physically acceptable states and inputs when the multicopter is flying at the limit of its capabilities.
Third, high-quality motions conventionally need fine discretization of the dynamic process, where requirements for task resources tend to be unrealistic.
Fourth, methods that use sparse representation for trajectories lack an effective way to optimize the temporal profile while satisfying continuous-time constraints.

In this article, we overcome these challenges by designing a lightweight and flexible optimization framework to meet user-defined requirements based on a novel trajectory class.

As the theoretical foundation of our framework, we present necessary and sufficient optimality conditions to multistage control effort minimization for the concerned linear dynamics, which are given for the first time to the best of our knowledge.
The conditions are easy to use in that the unique optimal solution can be directly constructed with linear complexity in both time and space aspects. More importantly, the existence and uniqueness of conditions further provide crucial information on smoothness of the problem parameter sensitivity.

To ease computation burden without sacrificing trajectory quality, it is essential to use sparse parameterization while keeping the flexibility to suit multicopter dynamics.
Therefore, we design a novel trajectory class based on our optimality conditions. Any element in this class is by default an unconstrained control effort minimizer, thus we name it as MINCO (Minimum Control). MINCO differs from conventional splines that majorly focus on the smoothness of the geometrical shape, such as B-Splines and B\'ezier curves.
Its sparse parameters are designed to directly control both the spatial and temporal profile of a trajectory, which are of equal importance for dynamic feasibility. Besides, a spatial-temporal deformation scheme is also designed such that MINCO can be optimized under any user-defined objective.

Our framework utilizes the geometrical approximation of low-dimensional free space based on results of sampling-based or search-based global methods. The safety is ensured via configuration constraints formed by the union of obstacle-free convex primitives. Constraint elimination schemes are proposed such that MINCO can be freely deformed through unconstrained optimization. The schemes exactly eliminate constraints that are directly defined on decision variables without introducing extra local minima.

Reliable motion planning requires admissible states and inputs, while most existing flatness-based methods only support differential constraints. To ensure high-fidelity feasibility, we propose a systematic way to enforce user-defined state-input constraints for our sparse parameterization without resorting to a fine discretization of trajectories. We exploit the backward differentiation of flatness map such that the constraint violation can be reflected in their gradient w.r.t. sparse parameters. Besides, a differentiable penalty functional is also proposed to enforce general continuous-time constraints.

Our framework focuses on computationally efficient yet high-quality trajectory planning for multicopters where there are complex constraints for safety, critical limits on dynamics, and task-specified requirements.
To validate its effectiveness, we conduct extensive benchmarks against various cutting-edge multicopter trajectory planning methods.
Results show that our method exceeds existing methods for orders of magnitude in efficiency, and retains comparable solution quality against general-purpose optimal-control solvers.
We also conduct versatile simulations and extreme real-world flights to show the practical performance of our approach.

The contributions of this article are as follows.
\begin{itemize}
    \item Optimality conditions in a general form on multi-stage control effort minimization are proposed with a proof of both the necessity and sufficiency for the first time.
    \item A novel trajectory class is designed to meet user-defined objectives while retaining local smoothness by spatial-temporal deformation via linear-complexity operations.
    \item A flexible trajectory planning framework that leverages both constraint elimination and constraint transcription is proposed for multicopter systems with user-defined state-input constraints.
    \item A set of simulations and experiments that validate our method significantly outperforms state-of-the-art works in efficiency, optimality, robustness, and generality.
\end{itemize}

\section{Related Work}
Despite various planning approaches in existing literature, there has yet to emerge a complete framework to accomplish time-critical large-scale trajectory planning for multicopters while incorporating user-defined continuous-time constraints on state and control. Our framework bridges this gap by exploring and exploiting different capabilities from both optimal control and motion planning.

\subsection{Differentially Flat Multicopters}
The concept of differential flatness has been introduced by Fliess et al.~\cite{Fliess1995Flatness} and drawn great attentions in robotics trajectory planning~\cite{Van1998RealTGDFS, Martin2003FlatSETG, Ryu2011DifferentialFBRCMR}.
The property makes it possible to recover the full state and input of a flat system from finite derivatives of its flat outputs. Mellinger and Kumar~\cite{Mellinger2011MinimumST} validate the flatness of quadcopters with aligned propellers, which takes the thrust and three-dimensional torque as inputs. Watterson and Kumar~\cite{Watterson2019HOPF} use Hopf fibration to decompose the quadcopter rotation, thus achieving the minimum singularity number in flatness maps. Ferrin et al.~\cite{Ferrin2011DifferentialFBCHR} show the flatness of a hexacopter whose inputs are desired orientation and thrust. They utilize the flatness to compute the nominal state where a Linear Quadratic Regulator (LQR) is applied. Faessler et al.~\cite{Faessler2018DiffFRD} further consider linear drags that produce extra linear and angular accelerations.
They show the flatness of parallel-rotor multicopters subject to the drag effect.
Moreover, Mu and Chirarattananon~\cite{Mu2019TrajGTP} investigate underactuated multicopters with tilted propellers.
They prove that the flatness holds for a wide range of tricopters, quadcopters, and hexacopters as long as the input rank condition is satisfied.

The flatness of a multicopter, if holds, benefits trajectory generation and tracking control in obtaining the reference state and input without integrating differential equations. Literature mentioned above uses flatness to avoid confronting system dynamics during planning. However, dynamics of a real physical system are only valid for reasonable state and admissible input. Although our framework also utilizes the flatness property, it differs from previous works in that a general form of state-input constraints is formally supported.

\subsection{Sampling-Based Motion Planning}
Sampling-based motion planners focus on global solutions of problems by exploration and exploitation, where the complexity mainly originates from the configuration space. The Probabilistic Roadmap (PRM)~\cite{Kavraki1996ProbabilisticRM} and the Rapidly-exploring Random Tree (RRT)~\cite{Lavalle1998RapidlyRT} are both probabilistically complete since their probability of failure decays to zero exponentially as the sample number goes to infinity~\cite{Kavraki1998AnalysisPRM}. Karaman and Frazzoli~\cite{Karaman2011SamplingbasedAF} propose asymptotically optimal variants of PRM and RRT, known as PRM* and RRT*, which ensure the convergence to globally optimal solutions as the sample number goes to infinity. There are also algorithms~\cite{Janson2015FastMT, Li2016SamplingbasedKDP, Gammell2018InformedRRT} that further improve the efficiency or applicability of randomized motion planning. Our method exploits sampling-based planners to overcome the complexity from environments. It accomplishes the optimization of a dynamically feasible trajectory that is homotopic to a given low-dimensional collision-free path. It is designed to flexibly incorporate system state-input constraints, which is not the strength of sampling-based methods. In this way, the complexity from both the environments and dynamics are divided and conquered.

\subsection{Optimization-Based Motion Planning}
Optimization-based planners focus on local solutions by using high-order information of the problems. They depend on specific environment pre-processing methods such that the obstacle information is encoded into the optimization.

Trajectory optimization has long been studied for general systems in the control community~\cite{Betts2010PracticalNOC}.
Many general-purpose methods are designed for high-quality solutions, such as the collocation-based method GPOPS-II~\cite{Patterson2014GPOPS2}, and the shooting-based one ACADO~\cite{Houska2011Acado}. They transcribe the original problem into a Nonlinear Programming (NLP) using a lot of equalities and variables, then resort to well-established NLP solvers such as SNOPT~\cite{Gill2005SnoptLCO} or IPOPT~\cite{Wachter2006ImplementationIPOPT}.
However, trajectory planning in robotics may impose hard-to-formulate constraints, nonsmoothness, and integer variables.
Besides, general-purpose methods often take a long computation time, making them inappropriate for time-critical tasks.
For example, Bry et al.~\cite{Bry2015AggressiveFO} report that Direct Collocation (DC) with SNOPT takes several minutes to optimize a $4.5m$ trajectory for a $12$-state airplane flying among cylindrical obstacles~\cite{Barry2014Flying}.
Therefore, specialized methods are on calling to overcome these difficulties.

For differentially flat multicopters, motion planning can be transformed into optimization of low-dimensional trajectories of flat outputs.
Mellinger and Kumar~\cite{Mellinger2011MinimumST} use fixed-duration splines to represent quadcopter trajectories.
A Quadratic Programming (QP) is formulated by the quadratic cost of snap and linear constraints of safety.
However, perturbation problems need to be solved in finite difference to estimate the gradient for time allocation. Its actuator constraints are also oversimplified.
Bry et al.~\cite{Bry2015AggressiveFO} propose a closed-form solution for this QP without safety constraints. They heuristically add waypoints from a collision-free path of RRT* to recompute the solution until the safety is satisfied.
This method is admittedly efficient but cannot guarantee high-quality solutions in obstacle-rich environments.
Besides, it involves the inverse of a matrix whose non-singularity is never discussed.
Deits and Tedrake~\cite{Deits2015EfficientMISOS} approximate the free space using polytopes.
The safety of each piece of trajectory is equivalent to a Sum-of-Square (SOS) condition if it entirely lies inside a polytope. They solve interval assignment using Mixed Integer Second Order Conic Programming (MISOCP).
It generates globally optimal trajectories while the computation time is unacceptable.
Gao et al.~\cite{Gao2020TeachRepeatReplanAC} also use the polyhedron-shaped free space representation.
They alternately optimize the geometrical and temporal profile of a trajectory.
The safety is enforced by the convex-hull property of B\'ezier curves, and the dynamic profile is optimized via Time-Optimal Path Parameterization (TOPP)~\cite{Verscheure2009TimeOPT}.
There are also variants that propose improvements over the above methods.
For example, Tordesillas et al.~\cite{Tordesillas2019Faster} improve the efficiency of~\cite{Deits2015EfficientMISOS} by substituting SOS conditions on polynomials with linear constraints on B\'ezier curves at the cost of conservatism~\cite{Tordesillas2020MinvoBASIS}. Sun et al.~\cite{Sun2020BilevelTO} avoid integer variables by optimizing time allocation instead, where the sensitivity of a bilevel optimization is exploited.

These specialized methods utilize the continuous-time trajectory parameterization to avoid the computation burden from the fine discretization. However, they do not support flexibly optimizing its time allocation, decoupling temporal resolutions of constraints from decision variable dimensions, or enforcing high-fidelity constraints except for restrictions on derivative norms. In this article, our framework supports all these features by introducing unified techniques for a novel sparse parameterization. Moreover, the solution quality is comparable with that of the general-purpose optimal-control solvers.

\section{Preliminaries}
\subsection{Differential Flatness}

\begin{figure}[ht]
    \centering
    \includegraphics[width=1.0\columnwidth]{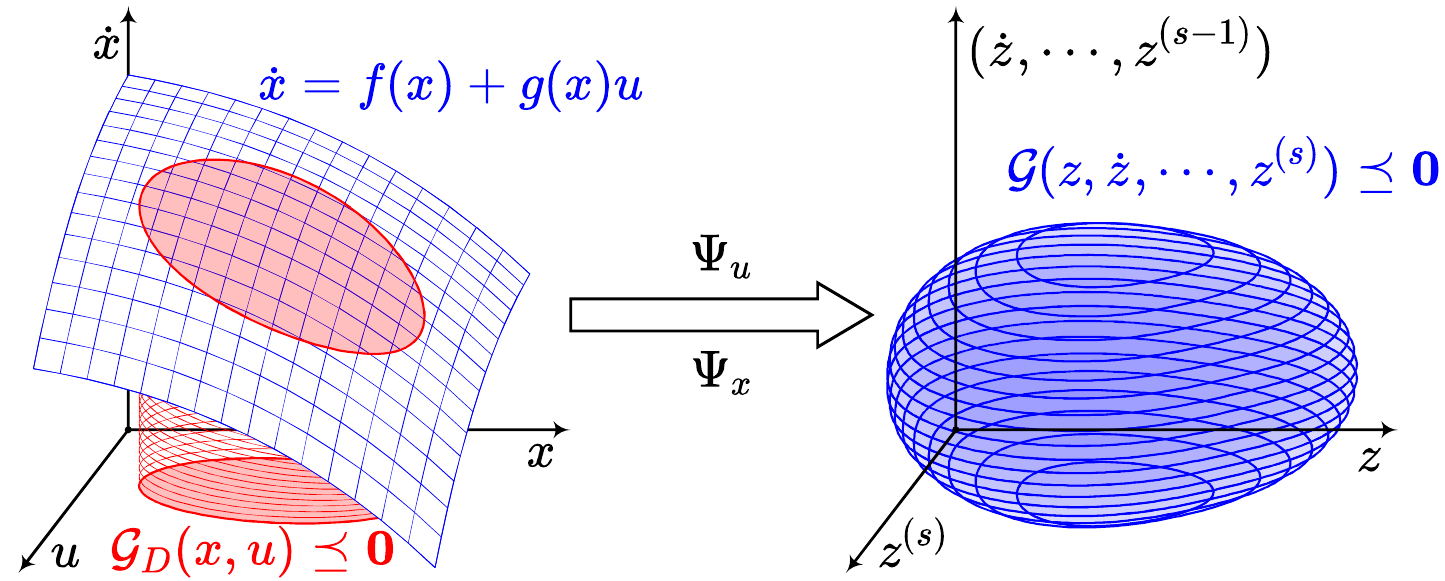}
    \caption{Transform $\Psi_u$ and $\Psi_x$ of a flat system eliminate differential constraints (blue surface) from dynamics in the state-input space (left coordinate). The original state-input constraint $\mathcal{G}_\mathcal{D}$ (red area) is also transformed into a new constraint $\mathcal{G}$ (blue volume) in the flat-output space (right coordinate).\label{fig:StateInputToFlatOutput}}
    \vspace{0.0cm}
\end{figure}

Consider a dynamical system of the following type
\begin{equation}
\label{eq:SystemDynamics}
\dot{x}=f(x)+g(x)u,
\end{equation}
with $f:\mathbb{R}^n\mapsto\mathbb{R}^n$, $g:\mathbb{R}^n\mapsto\mathbb{R}^{n\times m}$, state $x\in\mathbb{R}^n$, and input $u\in\mathbb{R}^m$. The map $g$ is assumed to have rank $m$. The system is said to be \textit{differentially flat}~\cite{Fliess1995Flatness}, if there exists a \textit{flat output} $z\in\mathbb{R}^m$ determined by $x$ and finite derivatives of $u$, such that $x$ and $u$ can both be parameterized by finite derivatives of $z$:
\begin{equation}
\label{eq:StateFromFlatOutput}
x=\Psi_x\rbrac{z,\dot{z},\dots,z^{(s-1)}},
\end{equation}
\begin{equation}
\label{eq:ControlFromFlatOutput}
u=\Psi_u\rbrac{z,\dot{z},\dots,z^{(s)}},
\end{equation}
where $\Psi_x:\rbrac{\mathbb{R}^m}^{s-1}\mapsto\mathbb{R}^n$ and  $\Psi_u:\rbrac{\mathbb{R}^m}^s\mapsto\mathbb{R}^m$ are both induced by $f$ and $g$.  Intuitively, the state and control can be determined from $z$ without explicit integration of the system dynamics (\ref{eq:SystemDynamics}).

Leveraging the flatness of a system, the trajectory generation is convenient when there are only differential constraints in (\ref{eq:SystemDynamics}). If we introduce a new control variable $v=z^{(s)}$ and denote $z^{[s-1]}\in\mathbb{R}^{ms}$ as
\begin{equation}
z^{[s-1]}=\rbrac{z\tp,\dot{z}\tp,\dots,z^{{(s-1)}\tp}}\tp,
\end{equation}
the input $u=\Psi_u\rbrac{z^{[s-1]},v}$ then exactly linearizes the original flat system into $m$ decoupled chains of $s$-integrators. Let $z_i$ denote the $i$-th entry in $z$, $v_i$ the $i$-th entry in $v$ and
$z_i^{[s-1]}=\rbrac{z_i,\dot{z}_i,\dots,z^{(s-1)}_i}\tp$. The $i$-th integrator chain is
\begin{equation}
\label{eq:IntegratorChain}
\dot{z}_i^{[s-1]}=\begin{pmatrix}\mathbf{0} & \mathbf{I}_{s-1} \\0 & \mathbf{0}\tp\end{pmatrix}
z_i^{[s-1]}+
\begin{pmatrix}\mathbf{0} \\1 \end{pmatrix}
v_i,
\end{equation}
where $\mathbf{0}$ and $\mathbf{I}$ are a zero matrix and an identity matrix with appropriate sizes, respectively. Given an initial state and a goal state, boundary values of each integrator chain (\ref{eq:IntegratorChain}) can be algebraically computed. Thus any trajectory integrated from these $m$ integrator chains can be transformed into a feasible trajectory~\cite{Van1998RealTGDFS} for the original flat system via (\ref{eq:StateFromFlatOutput}) and (\ref{eq:ControlFromFlatOutput}).

For dynamics with a small $m$, the flatness maps $\Psi_x$ and $\Psi_u$ further reduce the trajectory dimension and eliminate the differential constraints (\ref{eq:SystemDynamics}), which is illustrated in Fig.~\ref{fig:StateInputToFlatOutput}.
As a side effect, nonlinearity coming from both $\Psi_x$ and $\Psi_u$ brings additional difficulties in trajectory generation for $z$ when there are additional state-input constraints for (\ref{eq:SystemDynamics}).
However, such an effect is relieved if the flat-output space coincides with the configuration space of the relevant planning problem.

\subsection{Direct Optimization in Flat-Output Space}
Fortunately, the differential flatness of multicopters has been well studied and shown to have physically meaningful flat-output space which overlaps with the configuration space. Explicit forms of $\Psi_x$ and $\Psi_u$ are available in~\cite{Mellinger2011MinimumST, Watterson2019HOPF,Ferrin2011DifferentialFBCHR, Faessler2018DiffFRD} for a variety of underactuated multicopters. More importantly, their flat outputs share the same form in general:
\begin{equation}
\label{eq:MulticopterFlatOutput}
z=\rbrac{p_x,p_y,p_z,\psi}\tp
\end{equation}
where $(p_x,p_y,p_z)\tp$ is the translation of the Center of Gravity (CoG) and $\psi$ the yaw angle of the vehicle. The flat output $z$, especially its translation, provides a lot of convenience for the multicopter motion planning with complex spatial constraints.

To generate feasible motions for a multicopter, we first optimize the trajectory $z(t):[0,T]\mapsto\mathbb{R}^m$ in its flat-output space such that most of the spatial constraints are directly enforced. Then, the flatness maps $\Psi_x$ and $\Psi_u$ are applied to transform $z(t)$ into the state-input trajectory $x(t)$ and $u(t)$.

For motion smoothness, the quadratic control effort~\cite{Bertsekas1995DynamicPOC} with time regularization is adopted as a cost functional of $z(t)$. General constraints on multicopters can be classified into configuration constraints and user-defined dynamic constraints. Normally, a collision-free motion implies
\begin{equation}
z(t)\in\mathcal{F},~\forall t\in[0,T],
\end{equation}
where $\mathcal{F}$ is the concerned \textit{obstacle-free} region in configuration space. Besides, user-defined state-input constraints such as actuator limits or task-specific constraints are denoted by
\begin{equation}
\label{eq:OriginalStateControlConstraints}
\mathcal{G}_\mathcal{D}\rbrac{x(t),u(t)}\preceq\mathbf{0},~\forall t\in[0,T].
\end{equation}
Exploiting $\Psi_x$ and $\Psi_u$, the corresponding constraints on $z(t)$ are computed as
\begin{equation}
\label{eq:StateControlConstraintsViaFlatness}
\mathcal{G}_\mathcal{D}\rbrac{\Psi_x\rbrac{z^{[s-1]}(t)},\Psi_u\rbrac{z^{[s]}(t)}}\preceq\mathbf{0},~\forall t\in[0,T].
\end{equation}
Apparently, via the flatness, a constraint on $x$ and $u$ has its equivalent form on the finite derivatives of $z(t)$.
For simplicity, we denote (\ref{eq:StateControlConstraintsViaFlatness}) hereafter by
\begin{equation}
\label{eq:FunctionalConstraints}
\mathcal{G}\rbrac{z(t),\dot{z}(t),\dots,z^{(s)}(t)}\preceq\mathbf{0},~\forall t\in[0,T],
\end{equation}
where $\mathcal{G}$ consists of $n_g$ equivalent constraints.

It is worth noting that we do not make further assumptions on the multicopter dynamics and flatness maps. In other words, the proposed framework supports a wide range of multicopters, including, but not limited to the ones in~\cite{Mellinger2011MinimumST, Watterson2019HOPF,Ferrin2011DifferentialFBCHR, Faessler2018DiffFRD, Mu2019TrajGTP}.

\subsection{Problem Formulation}
Concluding above descriptions gives the following problem:
\begin{subequations}
    \label{eq:TrajectoryOptimization}
    \begin{align}
    \min_{z(t), T}&~{\int_{0}^{T}{v(t)\tp \mathbf{W}v(t)\df{t}}+\rho(T)},\\
    \mathit{s.t.}~&~z^{(s)}(t)=v(t),~\forall t\in[0,T],\\
    &~\mathcal{G}\rbrac{z(t),\dots,z^{(s)}(t)}\preceq\mathbf{0},~\forall t\in[0,T],\\
    &~z(t)\in\mathcal{F},~\forall t\in[0,T],\\
    &~z^{[s-1]}(0)=\bar{z}_o,~z^{[s-1]}(T)=\bar{z}_f,
    \end{align}
\end{subequations}
where $\mathbf{W}\in\mathbb{R}^{m\times m}$ is a positive diagonal matrix, $\rho:[0,\infty)\mapsto[0,\infty]$ the time regularization, $\bar{z}_o\in\mathbb{R}^{ms}$ the initial condition and $\bar{z}_f\in\mathbb{R}^{ms}$ the terminal condition. The control input $v$ is allowed to be discontinuous in a finite number of time instants, as is commonly assumed in existing literature~\cite{Bertsekas1995DynamicPOC}.

The trajectory optimization (\ref{eq:TrajectoryOptimization}) is nontrivial because of the continuous-time constraints $\mathcal{G}$ and the nonconvex set $\mathcal{F}$. We further specify some reasonable conditions to make it a well defined problem. As for time regularization $\rho$, it trades off between the control effort and the expectation of total time,
\begin{equation}
\label{eq:SoftDuration}
\rho_s(T)=\sum_{i=0}^{M_T}{b_iT^i},
\end{equation}
where $b_{M_T}$ is positive.
Common choices are $\rho_s(T)=k_\rho T$ and $\rho_s(T)=k_\rho(T-T_\Sigma)^2$ with an expected time $T_\Sigma$.
Besides, $\rho$ can also be defined to strictly fix the total time:
\begin{equation}
\label{eq:HardDuration}
\rho_f(T)=\begin{cases} 0 & \mathit{if}~T=T_\Sigma,\\ \infty & \mathit{if}~T\neq T_\Sigma. \end{cases}
\end{equation}
As for nonlinear constraints $\mathcal{G}$, they are required to be $C^2$, i.e., twice continuously differentiable. As for the feasible region $\mathcal{F}$ in the configuration space, we approximate it geometrically by the union of $M_\mathcal{P}$ closed convex sets as
\begin{equation}
\label{eq:FreeSpaceConvexDecomposition}
\mathcal{F}\simeq\tilde{\mathcal{F}}=\bigcup_{i=1}^{M_\mathcal{P}}\mathcal{P}_i.
\end{equation}
For simplicity, locally sequential connection is assumed on these convex sets:
\begin{equation}
\label{eq:LocallySequentialConnection}
\begin{cases}
\mathcal{P}_i\cap\mathcal{P}_j=\varnothing & \mathit{if}~\abs{i-j}=2,\\
\mathrm{Int}\rBrac{\mathcal{P}_i\cap\mathcal{P}_j}\neq\varnothing & \mathit{if}~\abs{i-j}\leq1,
\end{cases}
\end{equation}
where $\mathrm{Int}\rbrac{\cdot}$ means the interior of a set.
The translation of $\bar{z}_o$ and $\bar{z}_f$ is inscribed in $\mathcal{P}_1$ and $\mathcal{P}_{M_\mathcal{P}}$, respectively. As for $\tilde{\mathcal{F}}$, we consider the case that each $\mathcal{P}_i$ is a closed $m$-dimensional ball:
\begin{equation}
\mathcal{P}^\mathcal{B}_i=\cBrac{x\in\mathbb{R}^m~\Big|~\Norm{x-o_i}_2\leq r_i},
\end{equation}
or, more generally, a bounded convex polytope described by its $\mathcal{H}$-representation~\cite{Toth2017HandbookDCG} with potentially redundant constraints:
\begin{equation}
\label{eq:HPolytopeDescription}
\mathcal{P}^\mathcal{H}_i=\cBrac{x\in\mathbb{R}^m~\Big|~\mathbf{A}_ix\preceq b_i}.
\end{equation}
For the optimization in (\ref{eq:TrajectoryOptimization}), we aim to construct a computationally efficient solver while retaining the flexibility to handle different task-specific constraints $\mathcal{G}_\mathcal{D}$ in (\ref{eq:OriginalStateControlConstraints}).

\section{Multi-Stage Control Effort Minimization}
In this section, we analyze the multi-stage control effort minimization without functional constraints.
For this problem, we propose easy-to-use optimality conditions for general cases, which are proved to be necessary and sufficient. Leveraging our conditions, the optimal trajectory is directly constructed in linear complexity of time and space, without evaluating the cost functional explicitly or implicitly.
Base on them, a novel trajectory class along with linear-complexity spatial-temporal deformation is designed to meet user-defined objectives in various trajectory planning scenarios.

\subsection{Unconstrained Control Effort Minimization}

When constraint $\mathcal{F}$ exists, adjusting the waypoints~\cite{Bry2015AggressiveFO} or control points~\cite{Tordesillas2019Faster} of a trajectory helps to ensure safety. When constraint $\mathcal{G}$ exists, adjusting the time allocation also helps to enforce physical limits~\cite{Liu2017PlanningDF}. Therefore, spatial and temporal parameters are both vital to a flexible trajectory representation. A natural problem is to generate a smooth trajectory subject to these parameters. 
We solve \textit{Linear Quadratic Minimum-Time} (LQMT) problems to generate trajectories from spatial-temporal parameters. Although the LQMT problems have extensive studies and applications, only single-stage problems are considered in the literature~\cite{Verriest1991LinearQMT, Mueller2015ACE, Liu2017SearchMP}. We study the multi-stage problems where intermediate points and time vector are fixed in advance for multi-piece trajectories. Consider an $M$-stage control effort minimization without $\mathcal{F}$ and $\mathcal{G}$,
\begin{subequations}
    \label{eq:MultistageMinimumControl}
    \begin{align}
    \min_{z(t)}&~{\int_{t_0}^{t_M}{v(t)\tp \mathbf{W}v(t)}\df{t}},\\
    \mathit{s.t.}~&~z^{(s)}(t)=v(t),~\forall t\in[t_0,t_M],\\
    \label{eq:InitialTerminalConditions}
    &~z^{[s-1]}(t_0)=\bar{z}_o,~z^{[s-1]}(t_M)=\bar{z}_f,\\
    \label{eq:IntermediateConditions}
    &~z^{[d_i-1]}(t_i)=\bar{z}_i,~1\leq i<M,\\
    &~t_{i-1}<t_i,~1\leq i\leq M.
    \end{align}
\end{subequations}
The time interval $[t_0,t_M]$ is split into $M$ stages by $M+1$ fixed timestamps, with constant boundary conditions $\bar{z}_o,\bar{z}_f\in\mathbb{R}^{ms}$. Intermediate conditions $\bar{z}_i\in\mathbb{R}^{md_i}$ with $d_i\leq s$ specify the value of $z(t_i), \dot{z}(t_i),\dots,z^{(d_i-1)}(t_i)$, where $d_i$ is number of derivatives fixed at $t_i$. For example, if $z(t)$ is only required to pass a given position at $t_i$, then $d_i=1$ because $\bar{z}_i$ contains the $0$-order derivative and nothing else.

Existing works focus on the necessary conditions for special cases of (\ref{eq:MultistageMinimumControl}). In aerial robotics area, the QP formulation~\cite{Mellinger2011MinimumST} and the closed-form one~\cite{Bry2015AggressiveFO} implicitly or explicitly optimize unknown knot derivatives, taking parameterization as \textit{a priori}.
This extra computation actually makes them less efficient.
In control area, a special case where $d_i=1$ is also studied in~\cite{Zhang1997SplinesLCT} and~\cite{Egerstedt2009ControlTS} via controllability Gramian. The result is for general linear systems with possibly nonpolynomial solutions while it is less intuitive considering the computational aspect. These necessary conditions can cause potential degeneracy in trajectory representation and sensitivity, if further parametric optimization on spatial-temporal parameters is needed.

\subsection{Optimality Conditions}
We propose necessary and sufficient optimality conditions for (\ref{eq:MultistageMinimumControl}) with all possible settings of $d_i$, $\bar{z}_i$, and $t_i$. Thus, an optimal trajectory can be directly constructed from spatial-temporal parameters. Furthermore, the existence and uniqueness of the optimal trajectory are always guaranteed.

We transform (\ref{eq:MultistageMinimumControl}) into the Mayer form~\cite{Betts2010PracticalNOC} in which a new state $y\in\mathbb{R}^{ms+1}$ augmented by $\tilde{y}\in\mathbb{R}$ is defined as
\begin{equation}
y=\begin{pmatrix} z^{[s-1]} \\ \tilde{y} \end{pmatrix}.
\end{equation}
The augmented system $\dot{y}=\hat{f}\rbrac{y,v}$ has the structure
\begin{equation}
\label{eq:AugmentedSystem}
\dot{y}=
\begin{pmatrix} \bar{\mathbf{A}} & \mathbf{0} \\ \mathbf{0}\tp & 0 \end{pmatrix}
y+
\begin{pmatrix} \mathbf{0} \\ v \\ v\tp \mathbf{W}v \end{pmatrix},
\end{equation}
where
\begin{equation}
\bar{\mathbf{A}} =
\begin{pmatrix}
\mathbf{0} & \mathbf{I}_{m(s-1)} \\
\mathbf{0}_{m\times m} & \mathbf{0}\tp
\end{pmatrix}
\in\mathbb{R}^{ms\times ms}.
\end{equation}

We design a running process for the augmented system in $M$ stages, of which the $i$-th is $\Delta_i=[t_{i-1}, t_i]$. It is worth noting that state switching occurs in this running process. Strictly speaking, the state switching only occurs on $\tilde{y}$ at the beginning of each stage.
Denote by $y_{[i]}:\Delta_i\mapsto\mathbb{R}^{ms+1}$ the augmented state trajectory in the $i$-th stage, which consists of two parts, $z^{[s-1]}_{[i]}$ and $\tilde{y}_{[i]}$.
At each timestamp $t_i$, the state transfers from $y_{[i]}$ to $y_{[i+1]}$, and the part $\tilde{y}$ is reset as
\begin{equation}
\tilde{y}_{[i+1]}(t_{i})=0,~0\leq i<M,
\end{equation}
thus switching the partial state from $\tilde{y}_{[i]}(t_{i})$ to $0$.
The $z^{[s-1]}$ part remains continuous between stages, which means
\begin{equation}
\label{eq:IntermediateContinuity}
z^{[s-1]}_{[i]}(t_i)=z^{[s-1]}_{[i+1]}(t_i),~1\leq i<M.
\end{equation}
The conditions in (\ref{eq:InitialTerminalConditions}) and (\ref{eq:IntermediateConditions}) are still satisfied, i.e.,
\begin{align}
\label{eq:InitialTerminalState}
&z^{[s-1]}_{[1]}(t_0)=\bar{z}_o,~z^{[s-1]}_{[M]}(t_M)=\bar{z}_f,\\
\label{eq:IntermediateState}
&z^{[d_i-1]}_{[i]}(t_i)=\bar{z}_i,~1\leq i<M.
\end{align}
In this process, the cost functional in (\ref{eq:MultistageMinimumControl}) is converted into the sum of terminal cost of each stage for the augmented system, i.e., $\sum_{i=1}^{M}{\tilde{y}_{[i]}(t_i)}$.
Therefore, the optimal trajectories for the augmented system and the original one are identical in $z^{[s-1]}$.

We utilize the \textit{Hybrid Maximum Principle}~\cite{Dmitruk2008HMP} to derive necessary conditions for the optimal solution.
\begin{theorem}[\textbf{Hybrid Maximum Principle}]
    \label{thm:HybridMaximumPrinciple}
    Let $t_0<\dots<t_M$ be real numbers and $\Delta_k=[t_{k-1},t_k]$. For any collection of absolute continuous functions $x_k:\Delta_k\mapsto\mathbb{R}^{n_k}$, define a vector, $x_\Sigma\in\mathbb{R}^{\bar{n}}$ where $\bar{n}=2\sum_{k=1}^M{n_k}$, as
    \begin{equation}
    \label{eq:JointJunctionState}
    x_\Sigma=\rBrac{x_1\tp(t_0),x_1\tp(t_1),\dots,x_M\tp(t_{M-1}),x_M\tp(t_M)}\tp.
    \end{equation}
    On the time interval $\Delta=[t_0,t_M]$ consider the problem
    \begin{subequations}
        \label{eq:HybridOptimalControl}
        \begin{align}
        \min_{u_k,x_k}&~{J(x_\Sigma)},\\
        \mathit{s.t.}~&~\dot{x}_k(t)=f_k\rbrac{x_k(t),u_k(t)},\\
        &~u_k(t)\in U_k\subseteq\mathbb{R}^{r_k},\\
        &~\forall t\in\Delta_k,~k=1,\dots,M,\\
        &~\eta(x_\Sigma)=\mathbf{0},
        \end{align}
    \end{subequations}
    where $f_k:\mathbb{R}^{n_k}\times\mathbb{R}^{r_k}\mapsto\mathbb{R}^{n_k}$, $J:\mathbb{R}^{\bar{n}}\mapsto\mathbb{R}$ and $\eta:\mathbb{R}^{\bar{n}}\mapsto\mathbb{R}^q$ are continuously differentiable, $u_k:\mathbb{R}\mapsto\mathbb{R}^{r_k}$ are measurable and bounded on the corresponding $\Delta_k$.

    Denote an optimal process for (\ref{eq:HybridOptimalControl}) by $\rbrac{x^*(t),u^*(t)}$. Then, there exists a collection $(\alpha, \gamma, \psi_1,\dots,\psi_M)$, where $\alpha\geq0$, $\gamma\in\mathbb{R}^q$ and $\psi_k:\Delta_k\mapsto\mathbb{R}^{n_k}$ are Lipschitz continuous. It generates $M$ Pontryagin functions
    \begin{equation}
    H_k(\psi_k,x_k,u_k)=\psi_k\tp f_k(x_k,u_k),~t\in\Delta_k,
    \end{equation}
    and a Lagrange function $L(x_\Sigma)=\alpha J(x_\Sigma)+\gamma\tp\eta(x_\Sigma)$. The following conditions are satisfied for all $k=1,\dots,M$.
    \begin{itemize}
        \item Nontriviality condition:
        \begin{equation}
        \label{eq:NontrivialityCondition}
        (\alpha, \gamma\tp)\neq\mathbf{0};
        \end{equation}
        \item Adjoint equations:  for almost all $t\in\Delta_k$,
        \begin{equation}
        \label{eq:AdjointEquations}
        \dot{\psi}_k(t)=-\frac{\partial H_k}{\partial x_k}(\psi_k(t),x_k^*(t),u_k^*(t));
        \end{equation}
        \item Transversality conditions:
        \begin{equation}
        \label{eq:TransversalityConditions}
        \left\{
        \begin{gathered}
        \psi_k(t_{k-1})=L_{x_k(t_{k-1})}(x_\Sigma^*),\\
        \psi_k(t_k)=-L_{x_k(t_k)}(x_\Sigma^*);
        \end{gathered}
        \right.
        \end{equation}
        \item Maximality conditions: for all $t\in\Delta_k$,
        \begin{align}
        \label{eq:MaximalityCondition}
        &~H_k(\psi_k(t),x_k^*(t),u_k^*(t))\nonumber\\
        =&\sup_{u_k\in U_k}{H_k(\psi_k(t),x_k^*(t),u_k)}\\
        =&~0.\nonumber
        \end{align}
    \end{itemize}
\end{theorem}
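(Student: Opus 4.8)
The plan is to establish Theorem~\ref{thm:HybridMaximumPrinciple} by reducing the $M$-stage hybrid problem (\ref{eq:HybridOptimalControl}) to a single classical optimal control problem on one common reference interval, applying the ordinary Pontryagin Maximum Principle (PMP) there, and then translating its conclusions back through the reduction --- this is the route taken in~\cite{Dmitruk2008HMP}, whose title asserts precisely that the hybrid principle is a consequence of the classical one. First I would reparameterize time stage by stage: on $\Delta_k$ introduce a common variable $\tau\in[0,1]$ through the affine change $t=t_{k-1}+h_k\tau$ with duration $h_k=t_k-t_{k-1}>0$, so that each $x_k$, viewed as a function of $\tau$, obeys $\mathrm{d}x_k/\mathrm{d}\tau=h_k f_k(x_k,u_k)$ on the single interval $[0,1]$.

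Next I would concatenate the $M$ reparameterized stages into one control system on $[0,1]$ whose state stacks $\rbrac{x_1\tp,\dots,x_M\tp}\tp$, adjoining each $h_k$ as an extra state component with trivial dynamics $\mathrm{d}h_k/\mathrm{d}\tau=0$ so the durations enter as free parameters. The crucial observation is that the junction vector $x_\Sigma$ of (\ref{eq:JointJunctionState}) is exactly the list of stage endpoint values $x_k(0),x_k(1)$; hence $J(x_\Sigma)$ becomes a Mayer terminal cost and $\eta(x_\Sigma)=\mathbf{0}$ a terminal equality constraint on the concatenated endpoints. The result is a single-stage Mayer problem with endpoint constraints, and the hypotheses of the classical PMP are inherited because the change of variables is affine and $f_k,J,\eta$ are $C^1$ with measurable bounded controls.

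Applying the PMP then furnishes $\alpha\geq0$, a multiplier $\gamma$ for $\eta$, and a costate for the stacked system; its restriction to the $k$-th block defines $\psi_k$. Since $H_k^{\mathrm{ref}}=\psi_k\tp(h_k f_k)=h_k H_k$, the reference adjoint equation $\mathrm{d}\psi_k/\mathrm{d}\tau=-h_k\,\partial H_k/\partial x_k$ becomes (\ref{eq:AdjointEquations}) once the time change is undone, and maximizing $H_k^{\mathrm{ref}}$ over $u_k$ is equivalent to maximizing $H_k$ because $h_k>0$. With no running cost, the endpoints of $[0,1]$ carry only $L=\alpha J+\gamma\tp\eta$, so the PMP endpoint transversality splits blockwise into the pair (\ref{eq:TransversalityConditions}), the positive sign at $\tau=0\leftrightarrow t_{k-1}$ and the negative sign at $\tau=1\leftrightarrow t_k$. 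Treating $h_k$ as a free parameter supplies the remaining condition: its stationarity forces $\int_0^1 H_k\,\mathrm{d}\tau=0$, and since the maximized autonomous Hamiltonian is constant in $\tau$ this pins $H_k$ to the value $0$ in (\ref{eq:MaximalityCondition}). Finally, nontriviality (\ref{eq:NontrivialityCondition}) follows because the adjoint ODE is linear homogeneous with endpoint data $L_{x_k}$: were $(\alpha,\gamma)=\mathbf{0}$, then $L\equiv0$, every $\psi_k$ would vanish identically, and the whole multiplier collection would be zero, contradicting the PMP's nontriviality.

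The step I expect to be the main obstacle is the bookkeeping at the interior junctions --- verifying that the single concatenated endpoint multiplier decomposes into the per-stage transversality pair with the correct signs, and that the duration stationarity genuinely yields $H_k=0$ rather than merely a stage-wise constant. One must also check that the affine time change introduces no degeneracy and that the regularity (absolute continuity of states, measurability and boundedness of controls) survives the reduction so the classical PMP applies verbatim. A direct proof via needle variations of each $u_k$ together with a separating-hyperplane argument on the joint attainable set is possible as well, but the accounting for simultaneous variations across stages and for the free durations makes the reduction route markedly cleaner.
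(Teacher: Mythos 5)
Your overall route is exactly the one the paper relies on: the paper's proof of Theorem~\ref{thm:HybridMaximumPrinciple} is a one-line specialization of Theorem~4 of Dmitruk and Kaganovich~\cite{Dmitruk2008HMP} (time-invariant $f_k$, fixed intervals, no endpoint inequalities), and your reparameterize-and-stack reduction to a single classical Pontryagin problem on $[0,1]$ is precisely the mechanism by which that cited theorem is itself established. Your translations of the adjoint equations, the blockwise transversality pair with its signs, and the nontriviality argument (zero $(\alpha,\gamma)$ forces zero endpoint data, hence $\psi_k\equiv\mathbf{0}$ by the linear homogeneous adjoint ODE, contradicting classical nontriviality) are all correct.

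The one step that does not go through as written is your derivation of the third equality in (\ref{eq:MaximalityCondition}). You adjoin the durations $h_k$ as states with $\mathrm{d}h_k/\mathrm{d}\tau=0$ and extract $H_k\equiv0$ from stationarity in $h_k$ as a \emph{free} parameter; but the theorem fixes $t_0<\dots<t_M$, so the reduction must either keep $h_k=t_k-t_{k-1}$ as constants (in which case no stationarity condition exists) or pin them by endpoint constraints, whose multipliers then enter $L_{h_k}$ and make the stationarity condition vacuous. Treating $h_k$ as genuinely free changes the feasible set and the value of the problem (a fixed-time minimum-effort problem has a strictly positive optimum while its free-time relaxation has infimum zero), so your argument establishes the free-duration variant rather than the stated theorem. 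For fixed intervals the reduction yields only that the maximized autonomous Hamiltonian is \emph{constant} on each $\Delta_k$; that the constant is zero is in general false in the fixed-time setting: for $\min\int_0^1u^2\,\df{t}$ with $\dot{x}=u$, $x(0)=0$, $x(1)=1$ put in Mayer form, every multiplier collection satisfying nontriviality, adjoint, transversality, and attainment of the supremum has $\alpha>0$ and $\sup_{u}H=\alpha\neq0$. You flagged this yourself as the expected obstacle, and rightly so --- the delicacy is inherited from the statement itself, since the $=0$ value belongs to the free-time case of~\cite{Dmitruk2008HMP}. It is worth noting that the paper's subsequent analysis only uses finiteness and attainment of the supremum (to force $\bar{\mu}_i<0$, and $\lambda_{[i]s}\equiv\mathbf{0}$ in the degenerate case), so nothing downstream depends on the zero value; but a proof of the theorem as literally stated cannot obtain it from duration stationarity under the fixed-interval hypothesis.
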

\begin{proof}
    The proof can be directly adapted from Theorem 4 by Dmitruk and Kaganovich~\cite{Dmitruk2008HMP}. Here we only consider each system $f_k$ to be time-invariant and all intervals $\Delta_k$ to be fixed. Besides, no inequality constraints are specified on $x_\Sigma$.
\end{proof}

According to Theorem~\ref{thm:HybridMaximumPrinciple}, the costate $\psi_{[i]}:\Delta_i\mapsto\mathbb{R}^{ms+1}$ in the $i$-th stage is defined as
\begin{equation}
\label{eq:CostateDefinition}
\psi_{[i]}=\begin{pmatrix} \lambda_{[i]} \\ \mu_{[i]} \end{pmatrix}
=\rBrac{{\lambda_{[i]}}_1, {\lambda_{[i]}}_2, \dots, {\lambda_{[i]}}_s, \mu_{[i]}}\tp,
\end{equation}
where $\mu_{[i]}:\Delta_i\mapsto\mathbb{R}$.
${\lambda_{[i]}}_j:\Delta_i\mapsto\mathbb{R}^m$ is the $j$-th map in $\lambda_{[i]}:\Delta_i\mapsto\mathbb{R}^{ms}$. The $i$-th Pontryagin function of (\ref{eq:AugmentedSystem}) is
\begin{align}
&H_i(\psi_{[i]}, y_{[i]}, v_{[i]})=\psi_{[i]}\tp \hat{f}(y_{[i]}, v_{[i]})\\
&=\lambda_{[i]}\tp\bar{\mathbf{A}}z_{[i]}^{[s-1]}+{\lambda_{[i]}\tp}_s v_{[i]}+\mu_{[i]}v_{[i]}\tp\mathbf{W}v_{[i]}.\nonumber
\end{align}

By applying the adjoint equation (\ref{eq:AdjointEquations}) for $\mu_{[i]}$, we have $\dot{\mu}_{[i]}=0$, which means $\mu_{[i]}(t)=\bar{\mu}_i\in\mathbb{R}$ is a constant in $\Delta_i$. Therefore, $H_i$ is always a quadratic function of $v_{[i]}$,
\begin{equation}
\label{eq:PontryaginFunction}
H_i(\psi_{[i]}, y_{[i]}, v_{[i]})=\lambda_{[i]}\tp\bar{\mathbf{A}}z_{[i]}^{[s-1]}+{\lambda_{[i]}\tp}_s v_{[i]}+\bar{\mu}_i v_{[i]}\tp\mathbf{W}v_{[i]}.
\end{equation}
By applying the adjoint equation for $\lambda_{[i]}$, we obtain
\begin{equation}
\label{eq:CostateLambdaSystem}
\dot{\lambda}_{[i]}=-\bar{\mathbf{A}}\tp\lambda_{[i]},
\end{equation}
which is expanded as
\begin{equation}
{{}\dot{\lambda}_{[i]}}_j=\begin{cases} \mathbf{0} & \mathit{if}~j=1, \\ -{\lambda_{[i]}}_{j-1} & \mathit{if}~2\leq j\leq s. \end{cases}
\end{equation}
It is obvious that ${\lambda_{[i]}}_s(t)$ is an $s-1$ degree polynomial.

According to maximality conditions (\ref{eq:MaximalityCondition}), the supremum of $H_i$ is always $0$ in $\Delta_i$. Thus the positive definiteness of $\mathbf{W}$ implies $\bar{\mu}_i\leq0$. If $\bar{\mu}_i=0$, then (\ref{eq:PontryaginFunction}) becomes a linear function of $v_{[i]}$. The zero supremum means that ${\lambda_{[i]}}_s(t)=\mathbf{0}$ in $\Delta_i$. As the result of (\ref{eq:CostateLambdaSystem}), $\psi_{[i]}(t)=\mathbf{0}$ holds for all $t$ in $\Delta_i$. In such a case, a contradiction occurs that the nontriviality condition (\ref{eq:NontrivialityCondition}) and the transversality conditions (\ref{eq:TransversalityConditions}) cannot be satisfied at the same time. Therefore, $\bar{\mu}_i<0$ always holds in the whole $\Delta_i$. The optimal control $v_{[i]}^*$ can be obtained from
\begin{equation}
\frac{\partial H_i}{\partial v_{[i]}}(\psi_{[i]}, y_{[i]}^*, v_{[i]}^*)={\lambda_{[i]}}_s+2\bar{\mu}_i\mathbf{W}v_{[i]}^*=\mathbf{0},
\end{equation}
i.e.,
\begin{equation}
\label{eq:OptimalControlFromCostate}
v_{[i]}^*(t)=-\frac{1}{2\bar{\mu}_i}\mathbf{W}^{-1}{\lambda_{[i]}}_s(t),~\forall t\in\Delta_i.
\end{equation}
Then, $z_{[i]}^*$ produced by a chain of $s$-integrators from ${\lambda_{[i]}}_s(t)$, is a $2s-1$ degree polynomial.

To further explore structures of the solution, we generate the Lagrange function using the cost of augmented system along with all constraints in (\ref{eq:IntermediateContinuity}), (\ref{eq:InitialTerminalState}) and (\ref{eq:IntermediateState}). We have
\begin{align}
L(y_\Sigma)=&~\alpha\sum_{i=1}^{M}{\tilde{y}_{[i]}(t_i)}+\sum_{i=0}^{M-1}{\gamma_i\tilde{y}_{[i+1]}(t_{i})}\\
+&~\sum_{i=1}^{M-1}{\rbrac{\zeta_i\tp,\sigma_i\tp}\rbrac{z^{[s-1]}_{[i]}(t_i)-z^{[s-1]}_{[i+1]}(t_i)}}\nonumber\\
+&~\theta_o\tp\rbrac{z^{[s-1]}_{[1]}(t_0)-\bar{z}_o}+\theta_f\tp\rbrac{z^{[s-1]}_{[M]}(t_M)-\bar{z}_f}\nonumber\\
+&~\sum_{i=1}^{M-1}{\theta_i\tp\rbrac{z^{[d_i-1]}_{[i]}(t_i)-\bar{z}_i}},\nonumber
\end{align}
where $\gamma_i\in\mathbb{R}$, $\zeta_i\in\mathbb{R}^{md_i}$, $\sigma_i\in\mathbb{R}^{m(s-d_i)}$, $\theta_o\in\mathbb{R}^{ms}$, $\theta_f\in\mathbb{R}^{ms}$ and $\theta_i\in\mathbb{R}^{md_i}$ are all constant coefficients of corresponding constraints, $y_\Sigma$ is defined as in (\ref{eq:JointJunctionState}). Following transversality conditions (\ref{eq:TransversalityConditions}), taking the derivative of $L$ w.r.t. $y_\Sigma$ gives the boundary values of costates $\psi_{[i]}$ and $\psi_{[i+1]}$, i.e.,
\begin{equation}
\label{eq:LambdaValues}
\lambda_{[i]}(t_i)=-\begin{pmatrix} \zeta_i+\theta_i \\\sigma_i \end{pmatrix},~\lambda_{[i+1]}(t_i)=-\begin{pmatrix} \zeta_i \\\sigma_i \end{pmatrix},
\end{equation}
\begin{equation}
\mu_{[i]}(t_i)=\mu_{[i+1]}(t_{i+1})=-\alpha.
\end{equation}
Because $\mu_{[i+1]}(t)=\bar{\mu}_{i+1}$ in $\Delta_{i+1}$,  we have
\begin{equation}
\label{eq:KappaValues}
\bar{\mu}_i=-\alpha,~1\leq i\leq M.
\end{equation}
Finally, by substituting (\ref{eq:CostateLambdaSystem}), (\ref{eq:LambdaValues}) and (\ref{eq:KappaValues}) into (\ref{eq:OptimalControlFromCostate}), we obtain that the optimal controls of two consecutive stages satisfy
\begin{equation}
\label{eq:ContinuousControl}
{v^*}_{[i]}^{(j)}(t_i)={v^*}_{[i+1]}^{(j)}(t_i),~ 0\leq j<(s-d_i).
\end{equation}

We finally know that the optimal control of the problem (\ref{eq:MultistageMinimumControl}) is actually $s-d_i-1$ times continuously differentiable at the timestamp $t_i$. Accordingly, the optimal state trajectory, consisting of $M$ polynomials with $2s-1$ degree, is indeed $2s-d_i-1$ times continuously differentiable at $t_i$.

Now we conclude the conditions derived from both (\ref{eq:OptimalControlFromCostate}) and (\ref{eq:ContinuousControl}) in the following theorem, which are proved to be necessary and sufficient optimality conditions of (\ref{eq:MultistageMinimumControl}).
\begin{theorem}
    [\textbf{Optimality Conditions}]
    \label{thm:OptimalityConditions}
    A trajectory, denoted by $z^*(t):[t_0, t_M]\mapsto\mathbb{R}^{m}$, is optimal for the problem~(\ref{eq:MultistageMinimumControl}), if and only if the following conditions are satisfied:
    \begin{itemize}
        \item The map $z^*(t):[t_{i-1}, t_i]\mapsto\mathbb{R}^{m}$ is parameterized as a $2s-1$ degree polynomial for any $1\leq i\leq M$;
        \item The boundary conditions in (\ref{eq:InitialTerminalConditions});
        \item The intermediate conditions in (\ref{eq:IntermediateConditions});
        \item $z^*(t)$ is $\bar{d}_i-1$ times continuously differentiable at $t_i$ for any $1\leq i<M$ where $\bar{d}_i=2s-d_i$.
    \end{itemize}
    Moreover, a unique trajectory exists for these conditions.
\end{theorem}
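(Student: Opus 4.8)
The plan is to establish the two implications separately and then settle existence and uniqueness, reusing the Hybrid Maximum Principle computation carried out above.

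\emph{Necessity} is almost entirely assembled already. Applying Theorem~\ref{thm:HybridMaximumPrinciple} to the augmented system, I would invoke the facts derived above that each $\bar{\mu}_i<0$ and that $\lambda_{[i]s}$ is a degree $s-1$ polynomial; then \eqref{eq:OptimalControlFromCostate} forces the optimal control $v^*$ to be a degree $s-1$ polynomial on each stage, and integrating the $s$-fold chain yields a piecewise degree $2s-1$ polynomial, which is the first condition. The second and third conditions are just the hard constraints \eqref{eq:InitialTerminalConditions} and \eqref{eq:IntermediateConditions} that any feasible, hence any optimal, trajectory obeys. The fourth condition is the smoothness statement \eqref{eq:ContinuousControl}, which says $v^*$ is $s-d_i-1$ times continuously differentiable at $t_i$; integrating $s$ times raises this to $2s-d_i-1=\bar{d}_i-1$ continuous derivatives of $z^*$.

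\emph{Sufficiency} is where I would do the real work, exploiting that \eqref{eq:MultistageMinimumControl} is a convex problem: the cost is a convex quadratic of $v=z^{(s)}$ and the feasible set is an affine subspace, so any stationary feasible point is a global minimizer. Let $z^*$ satisfy the four conditions and let $z=z^*+\delta z$ be any feasible trajectory, so that $\delta z$ meets the homogeneous boundary and intermediate conditions. Expanding the cost gives $J(z)-J(z^*)=2\int_{t_0}^{t_M}(\mathbf{W}v^*)\tp\delta z^{(s)}\,\df{t}+\int_{t_0}^{t_M}(\delta z^{(s)})\tp\mathbf{W}\delta z^{(s)}\,\df{t}$, whose second term is nonnegative, so it suffices to show the cross term vanishes. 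I would integrate it by parts $s$ times on each stage. The volume integral disappears because the first condition makes $v^*=z^{*(s)}$ a degree $s-1$ polynomial per stage, so $(\mathbf{W}v^*)^{(s)}\equiv\mathbf{0}$; only endpoint terms survive. At $t_0$ and $t_M$ these vanish because the homogeneous boundary data kill $\delta z^{(0)},\dots,\delta z^{(s-1)}$. At an interior node $t_i$ the surviving sum pairs $(\mathbf{W}v^*)^{(j)}$ against $\delta z^{(s-1-j)}$: for $j<s-d_i$ the two stages contribute equal-and-opposite terms that cancel because $v^*$ is continuous to order $s-d_i-1$ and $\delta z$ is continuous to order $\bar{d}_i-1\ge s-1$, while for $j\ge s-d_i$ the factor $\delta z^{(s-1-j)}(t_i)$ is zero by the homogeneous intermediate conditions. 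Hence $J(z)\ge J(z^*)$ and $z^*$ is optimal.

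For \emph{existence and uniqueness} I would argue that the four conditions form a square, nonsingular linear system. Each of the $M$ pieces carries $2sm$ scalar coefficients, giving $2smM$ unknowns; the $ms$ boundary constraints at each of $t_0$ and $t_M$, together with the $md_i$ value constraints and $m(2s-d_i)$ continuity constraints at every interior node, sum to exactly $2smM$ equations. To prove nonsingularity I would take two solutions sharing the same data; their difference solves the homogeneous system, is optimal by the sufficiency argument, and therefore has zero cost, so $\mathbf{W}\succ\mathbf{0}$ forces its control $\delta z^{(s)}$ to vanish identically. The continuity conditions then glue the pieces into a single polynomial of degree below $s$, which the homogeneous boundary data annihilate. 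Thus the difference is zero, the system is invertible, a unique solution exists, and by sufficiency it is the unique minimizer of \eqref{eq:MultistageMinimumControl}. The main obstacle I anticipate is the interior-node bookkeeping in the sufficiency step: one must verify that precisely the orders at which $\delta z$ need not match across $t_i$ are the orders where $\delta z$ is instead pinned to zero, and conversely. This dovetailing is exactly the content of the relation $\bar{d}_i=2s-d_i$ together with $d_i\le s$, so the accounting has to be carried out explicitly rather than invoked; everything else is either inherited from the preceding derivation or a routine dimension count.
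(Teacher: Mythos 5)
Your necessity argument coincides with the paper's, which likewise reads the first and fourth conditions off the Hybrid Maximum Principle computation from (\ref{eq:CostateDefinition}) through (\ref{eq:ContinuousControl}). For sufficiency, however, you take a genuinely different route. The paper (Appendix~\ref{apd:OptimalityConditionsProof}) works inside the space of splines with prescribed smoothness $\bar{d}_i-1$ at the knots: by Schumaker's Theorem 4.4 the first and fourth conditions carve out a linear space of dimension $2s+\sum_{i=1}^{M-1}d_i$; the boundary and intermediate conditions then produce a square interpolation matrix $\mathbf{B}$ on an explicit basis built from an extended partition, and nonsingularity follows from Schumaker's Theorem 4.67 after index bookkeeping that uses $d_i\le s$ --- exactly the ``dovetailing'' you flagged, but executed as a knot-multiplicity count rather than an integration-by-parts cancellation. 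Sufficiency is then deduced indirectly: the unique solution of the necessary conditions must be the optimum. You instead prove sufficiency directly from convexity, expanding $J(z^*+\delta z)$ and integrating the cross term by parts $s$ times per stage, and you obtain existence and uniqueness from an elementary dimension count plus a trivial-kernel argument that recycles your own sufficiency step. This buys two things: the proof is self-contained (no spline-interpolation machinery), and it sidesteps the attainment issue latent in the paper's step (d), which tacitly needs an optimizer of (\ref{eq:MultistageMinimumControl}) to exist before ``unique solution of the necessary conditions'' can be upgraded to ``optimum''; your integration-by-parts identity certifies global optimality of the constructed trajectory without any appeal to attainment. What the paper's route buys in exchange is the structural spline-space picture and nonsingularity via established results on spline interpolation, stated independently of a particular coefficient basis. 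One small slip in your write-up: at an interior node you assert that $\delta z$ is continuous to order $\bar{d}_i-1$, but a difference of feasible trajectories is only guaranteed to be $C^{s-1}$ there (continuity of the state $z^{[s-1]}$); fortunately that is all your cancellation needs, since the highest derivative of $\delta z$ appearing in the surviving boundary terms is $\delta z^{(s-1)}$.
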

\begin{proof}[Proof]
    The proof of necessity is evident in the analyses from (\ref{eq:CostateDefinition}) to (\ref{eq:ContinuousControl}) that are directly derived from Theorem~\ref{thm:HybridMaximumPrinciple}. The proof of sufficiency is sketched below: (a) The first and fourth conditions always determine a linear spline space of dimension $2s+\sum_{i=1}^{M-1}{d_i}$ for any sequence of $d_i$; (b) The second and third conditions are shown to form a square coefficient matrix on a basis of the spline space; (c) The matrix is proved to be nonsingular since $t_{i-1}<t_i$ for each $i$, implying the existence and uniqueness of solution; (d) The existence and uniqueness for the necessary conditions yield their sufficiency. This proof of sufficiency is detailed in Appendix~\ref{apd:OptimalityConditionsProof}.
\end{proof}

To further explain the optimality conditions, we take the multi-stage jerk minimization as an example.
In this example, the position, velocity and acceleration are states of the jerk-controlled system ($s=3$). There are intermediate points ($d_i=1$) that the trajectory should pass through at certain timestamps.
The continuity of state only requires the continuity up to acceleration of the minimum-jerk trajectory, while jerk and snap of the optimal trajectory are also continuous everywhere. Accordingly, if we enforce all these continuity conditions, then Theorem~\ref{thm:OptimalityConditions} guarantees that only one trajectory exists, which is exactly the optimal one.

\subsection{Minimization Without Cost Functional}
Theorem~\ref{thm:OptimalityConditions} provides a direct way to construct the unique optimal trajectory. The computation enjoys linear complexity in time and space. It does not even require explicit or implicit evaluation of the cost functional or its gradient.

Consider an $m$-dimensional trajectory whose $i$-th piece is denoted by an $N=2s-1$ degree polynomial:
\begin{equation}
\label{eq:SinglePiece}
p_i(t)=\mathbf{c}_i\tp\beta(t-t_{i-1}),~t\in[t_{i-1},t_i],
\end{equation}
where $\beta(x)=\rbrac{1,x,\dots,x^N}\tp$ is the basis and $\mathbf{c}_i\in\mathbb{R}^{2s\times m}$ the coefficients. For simplicity, we use the timeline relative to $t_0=0$. The trajectory is described by a coefficient matrix $\mathbf{c}\in\mathbb{R}^{2Ms\times m}$ and a time vector $\mathbf{T}\in\mathbb{R}_{>0}^M$ defined by
\begin{equation}
\mathbf{c}=\rBrac{\mathbf{c}_1\tp, \dots, \mathbf{c}_M\tp}\tp,~\mathbf{T}=\rBrac{T_1, \dots, T_M}\tp,
\end{equation}
where $T_i$ means the duration of the $i$-th piece. Then we have the timestamp $t_i=\sum_{j=1}^{i}{T_j}$ and the total duration $T=\Norm{\mathbf{T}}_1$. The $M$-piece trajectory $p:[0,T]\mapsto\mathbb{R}^m$ is defined by
\begin{equation}
\label{eq:TrajectoryDefinition}
p(t)=p_i(t),~\forall t\in[t_{i-1},t_i),~\forall i\in\cbrac{1,\dots,M}.
\end{equation}

To compute the unique solution for (\ref{eq:MultistageMinimumControl}), we directly enforce optimality conditions on the coefficient matrix $\mathbf{c}$.
Denote by $\mathbf{D}_0,\mathbf{D}_M\in\mathbb{R}^{s\times m}$ and $\mathbf{D}_i\in\mathbb{R}^{d_i\times m}$ the specified derivatives at boundaries and intermediate timestamp $t_i$, respectively. Each column of $\mathbf{D}_i$ is related to one dimension. Then, conditions at $t_i$ are formulated by $\mathbf{E}_i, \mathbf{F}_i\in\mathbb{R}^{2s\times 2s}$:
\begin{equation}
\begin{pmatrix} \mathbf{E}_i & \mathbf{F}_i \end{pmatrix}
\begin{pmatrix} \mathbf{c}_i \\ \mathbf{c}_{i+1} \end{pmatrix}
=
\begin{pmatrix} \mathbf{D}_i \\ \mathbf{0}_{\bar{d}_i\times m} \end{pmatrix},
\end{equation}
\begin{align}
\label{eq:EiDefinition}
\mathbf{E}_i=(&\beta(T_i), \dots,\beta^{(d_i-1)}(T_i),\\
&\beta(T_i), \dots, \beta^{(\bar{d}_i-1)}(T_i))\tp \nonumber,\\
\label{eq:FiDefinition}
\mathbf{F}_i=(&\mathbf{0},-\beta(0), \dots, -\beta^{(\bar{d}_i-1)}(0))\tp.
\end{align}
Specially, define $\mathbf{F}_0, \mathbf{E}_M\in\mathbb{R}^{s\times 2s}$ as
\begin{align}
\label{eq:F0Definition}
\mathbf{F}_0=&\rbrac{\beta(0), \dots, \beta^{(s-1)}(0)}\tp,\\
\label{eq:EMDefinition}
\mathbf{E}_M=&\rbrac{\beta(T_M), \dots, \beta^{(s-1)}(T_M)}\tp.
\end{align}
The linear system for the optimal coefficient matrix is
\begin{equation}
\label{eq:OptimalConditionLinearEquationSystem}
\mathbf{M}\mathbf{c}=\mathbf{b}
\end{equation}
where $\mathbf{M}\in\mathbb{R}^{2Ms\times2Ms}$ and $\mathbf{b}\in\mathbb{R}^{2Ms\times m}$ are
\begin{equation}
\mathbf{M}=
\begin{pmatrix}
\mathbf{F}_0 & \mathbf{0} & \mathbf{0} & \cdots & \mathbf{0} \\
\mathbf{E}_1 & \mathbf{F}_1 & \mathbf{0} & \cdots & \mathbf{0} \\
\mathbf{0} & \mathbf{E}_2 & \mathbf{F}_2 & \cdots & \mathbf{0} \\
\vdots & \vdots & \vdots & \ddots & \vdots \\
\mathbf{0} & \mathbf{0} & \mathbf{0} & \cdots & \mathbf{F}_{M-1} \\
\mathbf{0} & \mathbf{0} & \mathbf{0} & \cdots & \mathbf{E}_M \\
\end{pmatrix},
\end{equation}
\begin{equation}
\label{eq:bDefinition}
\mathbf{b}=\rbrac{\mathbf{D}_0\tp, \mathbf{D}_1\tp, \mathbf{0}_{m\times \bar{d}_1}, \dots, \mathbf{D}_{M-1}\tp, \mathbf{0}_{m\times \bar{d}_{M-1}}, \mathbf{D}_M\tp}\tp.
\end{equation}

It is essential that the uniqueness in Theorem~\ref{thm:OptimalityConditions} ensures the nonsingularity of $\mathbf{M}$ for any time vector $\mathbf{T}\succ\mathbf{0}$. Consequently, the unique solution $\mathbf{c}$ can be obtained via linear equation system (\ref{eq:OptimalConditionLinearEquationSystem}) with a banded matrix $\mathbf{M}$, i.e., a \textit{banded system}.
As for a nonsingular banded system, its \textit{Banded PLU Factorization} always exists~\cite{Horn2012MatrixA}, which can be employed to compute the solution with $O(M)$ time and space complexity~\cite{Golub2013MatrixCMP}. Therefore, without the need of cost functional, the unique solution of problem (\ref{eq:MultistageMinimumControl}) is obtained in the lowest complexity, by directly applying our optimality conditions.

\subsection{MINCO Trajectories With Spatial-Temporal Deformation}
For multicopters, there are often task-specific requirements apart from feasibility, such as the perception quality in active SLAM~\cite{Zhang2018PerceptionARHN} or the occlusion rate in aerial videography~\cite{Nageli2017RealMPAV}. These user-defined requirements majorly need to flexibly and adaptively deform both the spatial and temporal profile of a trajectory. Therefore, we select the intermediate points and the time vector as two salient parameters in (\ref{eq:MultistageMinimumControl}). Fortunately, the existence and uniqueness of solution guarantee the smoothness of sensitivity for them. An iterative procedure is then designed to conduct the spatial-temporal deformation with the lowest computation complexity per iteration.

We denote the intermediate points by $\mathbf{q}=\rbrac{q_1,\dots,q_{M-1}}$ where $q_i\in\mathbb{R}^m$ is a specified $0$-order derivative at $t_i$. Denote by $\mathbf{T}=\rbrac{T_1, \dots, T_M}\tp$ the time vector where $T_i\in\mathbb{R}_{>0}$. For any pair of $\mathbf{q}$ and $\mathbf{T}$, Theorem~\ref{thm:OptimalityConditions} naturally determines a trajectory belonging to a class of control effort minimizers, named MINCO hereafter. The MINCO trajectory class, denoted by $\mathfrak{T}_{\mathrm{MINCO}}$, is defined as
\begin{align*}
\mathfrak{T}_{\mathrm{MINCO}} = \Big\{&p(t):[0, T]\mapsto\mathbb{R}^m~\Big|~\mathbf{c}=\mathbf{c}(\mathbf{q},\mathbf{T})\text{ determined} ~\\ &\text{by Theorem~\ref{thm:OptimalityConditions},}~\forall~\mathbf{q}\in\mathbb{R}^{m\times(M-1)},~\mathbf{T}\in\mathbb{R}_{>0}^M\Big\}.
\end{align*}

The dimension $m$, the system order $s$, initial and terminal conditions are omitted here for brevity.
Intuitively, all trajectories in $\mathfrak{T}_{\mathrm{MINCO}}$ are compactly parameterized by only $\mathbf{q}$ and $\mathbf{T}$. Evaluating an element in $\mathfrak{T}_{\mathrm{MINCO}}$ directly follows our linear-complexity formulation.

We denote any user-defined objective (or constraint) on a trajectory by a $C^2$ function $\mathcal{K}(\mathbf{c},\mathbf{T})$ with available gradient. This objective on $\mathfrak{T}_{\mathrm{MINCO}}$ can be computed as
\begin{equation}
\mathcal{W}(\mathbf{q},\mathbf{T})=\mathcal{K}(\mathbf{c}(\mathbf{q},\mathbf{T}), \mathbf{T}).
\end{equation}
To accomplish deformation of $\mathfrak{T}_{\mathrm{MINCO}}$, the function $\mathcal{W}$ together with its gradient $\partial\mathcal{W}/\partial\mathbf{q}$ and $\partial\mathcal{W}/\partial\mathbf{T}$ are needed for a high-level optimizer to optimize the objective.
Obviously, evaluating $\mathcal{W}$ shares the same complexity as evaluating any trajectory in $\mathfrak{T}_{\mathrm{MINCO}}$.
The key procedure is to compute the gradient.
Now we give a linear-complexity scheme to compute $\partial\mathcal{W}/\partial\mathbf{q}$ and $\partial\mathcal{W}/\partial\mathbf{T}$ from the given $\partial\mathcal{K}/\partial\mathbf{c}$ and $\partial\mathcal{K}/\partial\mathbf{T}$.
We first rewrite the linear equation system (\ref{eq:OptimalConditionLinearEquationSystem}) as
\begin{equation}
\label{eq:ParameterizedOptimalConditionLinearEquationSystem}
\mathbf{M}(\mathbf{T})\mathbf{c}(\mathbf{q},\mathbf{T})=\mathbf{b}(\mathbf{q}).
\end{equation}
Without causing ambiguity, we omit parameters in $\mathbf{M},\mathbf{b},\mathbf{c},\mathcal{K}$ and $\mathcal{W}$ temporarily for simplicity. Any notation involving $\mathbf{c}$ is interpreted as $\mathbf{c}(\mathbf{q},\mathbf{T})$.
Denote by $q_{i,j}$ the $j$-th entry in $q_i$.

As for the gradient w.r.t. $\mathbf{q}$, we first differentiate both sides of (\ref{eq:ParameterizedOptimalConditionLinearEquationSystem}) w.r.t. $q_{i,j}$, which gives
\begin{equation}
\frac{\mathbf{\partial c}}{\partial q_{i,j}}=\mathbf{M}^{-1}\frac{\partial \mathbf{b}}{\partial q_{i,j}}.
\end{equation}
Then,
\begin{align}
\frac{\partial\mathcal{W}}{\partial q_{i,j}}&=\trace\cBrac{\rBrac{\frac{\mathbf{\partial c}}{\partial q_{i,j}}}\tp\frac{\partial\mathcal{K}}{\partial\mathbf{c}}} \nonumber\\
&=\trace\cBrac{\rBrac{\mathbf{M}^{-1}\frac{\partial \mathbf{b}}{\partial q_{i,j}}}\tp\frac{\partial\mathcal{K}}{\partial\mathbf{c}}} \nonumber\\
&=\trace\cBrac{\rBrac{\frac{\partial \mathbf{b}}{\partial q_{i,j}}}\tp\rBrac{\mathbf{M}\invtp\frac{\partial\mathcal{K}}{\partial\mathbf{c}}}},
\end{align}
where $\trace{(\cdot)}$ is the trace operation.
The definition of $\mathbf{b}(\mathbf{q})$ in (\ref{eq:bDefinition}) implies that $\partial\mathbf{b}/\partial q_{i,j}$ only has a single nonzero entry $1$ at its $(2i-1)s+1$ row and $j$ column.
Thus, stacking all the resultant scalars gives
\begin{equation}
\frac{\partial\mathcal{W}}{\partial q_i}=\rBrac{\mathbf{M}\invtp\frac{\partial\mathcal{K}}{\partial\mathbf{c}}}\tp e_{(2i-1)s+1},
\end{equation}
where $e_j$ is the $j$-th column of $\mathbf{I}_{2Ms}$. Now that we have already conducted the banded PLU factorization for $\mathbf{M}$ when we compute $\mathbf{c}$. We can reuse the factorization to avoid inverting $\mathbf{M}\tp$. Define a matrix $\mathbf{G}\in\mathbb{R}^{2Ms\times m}$ as
\begin{equation}
\mathbf{M}\tp\mathbf{G}=\frac{\partial\mathcal{K}}{\partial\mathbf{c}}.
\end{equation}
We only need to compute $\mathbf{G}$ once to obtain $\partial\mathcal{W}/\partial q_i$ for all $1\leq i<M$. Denote the factorization of $\mathbf{M}$ as $\mathbf{M}=\mathbf{P}\mathbf{L}\mathbf{U}$. Specifically, $\mathbf{L}$ is a banded matrix with zero upper bandwidth and all-ones diagonal entries. $\mathbf{U}$ is a banded matrix with zero lower bandwidth and nonzero diagonal entries because of the nonsingularity of $\mathbf{M}$. The pivoting matrix $\mathbf{P}$ simply changes the row order of the operand, satisfying $\mathbf{P}\tp\mathbf{P}=\mathbf{I}$. Consequently, the transpose also has a \textit{Banded LUP Factorization}~\cite{Horn2012MatrixA}. Specifically, $\mathbf{M}\tp=\bar{\mathbf{L}}\bar{\mathbf{U}}\mathbf{P}\tp$, where
\begin{equation}
\bar{\mathbf{L}}=\mathbf{U}\tp\rBrac{\mathbf{U}\circ\mathbf{I}}^{-1},~\bar{\mathbf{U}}=\rBrac{\mathbf{I}\circ\mathbf{U}}\mathbf{L}\tp,
\end{equation}
where the inverse is only done for a diagonal matrix and $\circ$ the Hadamard product. Then, $\mathbf{G}$ can be also computed in linear complexity through such a factorization. For convenience, we partition $\mathbf{G}$ into
\begin{equation}
\mathbf{G}=\rBrac{\mathbf{G}_0\tp,\mathbf{G}_1\tp,\dots,\mathbf{G}_{M-1}\tp,\mathbf{G}_M\tp}\tp
\end{equation}
such that $\mathbf{G}_0,\mathbf{G}_M\in\mathbb{R}^{s\times m}$ and $\mathbf{G}_i\in\mathbb{R}^{2s\times m}$ for $1\leq i<M$. After that, the gradient of $\mathcal{W}$ w.r.t. $\mathbf{q}$ is determined as
\begin{equation}
\label{eq:GradientQ}
\frac{\partial\mathcal{W}}{\partial\mathbf{q}}=\rBrac{\mathbf{G}_1\tp e_1,\dots,\mathbf{G}_{M-1}\tp e_1},
\end{equation}
where $e_1$ is the first column of $\mathbf{I}_{2s}$. This operation takes out $M-1$ specific columns in $\mathbf{G}\tp$.

As for the gradient w.r.t. $\mathbf{T}$, differentiating both sides of (\ref{eq:ParameterizedOptimalConditionLinearEquationSystem}) w.r.t. $T_i$ gives
\begin{equation}
\frac{\partial\mathbf{M}}{\partial T_i}\mathbf{c}+\mathbf{M}\frac{\mathbf{\partial c}}{\partial T_i}=\mathbf{0}.
\end{equation}
Thus,
\begin{align}
\frac{\partial\mathcal{W}}{\partial T_i}&=\frac{\partial\mathcal{K}}{\partial T_i}+\trace\cBrac{\rBrac{\frac{\mathbf{\partial c}}{\partial T_i}}\tp\frac{\partial\mathcal{K}}{\partial\mathbf{c}}} \nonumber\\
&=\frac{\partial\mathcal{K}}{\partial T_i}-\trace\cBrac{\rBrac{\frac{\partial\mathbf{M}}{\partial T_i}\mathbf{c}}\tp\mathbf{M}\invtp\frac{\partial\mathcal{K}}{\partial\mathbf{c}}} \nonumber\\
&=\frac{\partial\mathcal{K}}{\partial T_i}-\trace\cBrac{\mathbf{G}\tp\frac{\partial\mathbf{M}}{\partial T_i}\mathbf{c}}
\end{align}
The banded structure of $\mathbf{M}$ implies that
\begin{equation}
\mathbf{G}\tp\frac{\partial\mathbf{M}}{\partial T_i}\mathbf{c}=\mathbf{G}_i\tp\frac{\partial\mathbf{E}_i}{\partial T_i}\mathbf{c}_i.
\end{equation}
Then we obtain the gradient w.r.t. $T_i$ computed as
\begin{equation}
\label{eq:GradientTi}
\frac{\partial\mathcal{W}}{\partial T_i}=\frac{\partial\mathcal{K}}{\partial T_i}-\trace\cBrac{\mathbf{G}_i\tp\frac{\partial\mathbf{E}_i}{\partial T_i}\mathbf{c}_i},
\end{equation}
where $\partial\mathbf{E}_i/\partial T_i$ can be analytically derived from (\ref{eq:EiDefinition}).
Computing (\ref{eq:GradientTi}) for every $1\leq i\leq M$ gives $\partial\mathcal{W}/\partial\mathbf{T}$.

Finally, we finish the computation of $\partial\mathcal{W}/\partial\mathbf{q}$ and $\partial\mathcal{W}/\partial\mathbf{T}$.
It can be verified from both (\ref{eq:GradientQ}) and (\ref{eq:GradientTi}) that the gradient propagation is also done in $O(M)$ complexity.
As for $\mathcal{K}$, we make no assumption on its concrete form. Actually, the smoothness of $\mathcal{K}$ is not even needed if only the resultant $\mathcal{W}$ is $C^2$. In other word, the linear-complexity gradient propagation enjoys both efficiency and flexibility. By incorporating it into common optimizers, we can accomplish the spatial-temporal deformation of $\mathfrak{T}_{\mathrm{MINCO}}$ for a wide range of planning purposes while maintaining the local smoothness of trajectories.

\section{\texorpdfstring{Geometrically Constrained Flight \\ Trajectory Optimization}{Geometrically Constrained Flight Trajectory Optimization}}
In this section, we provide a unified framework for flight trajectory optimization with different time regularization $\rho(T)$, spatial constraints $\tilde{\mathcal{F}}$ and continuous-time constraints $\mathcal{G}$.
This framework indeed relaxes the original problem into $\mathfrak{T}_{\mathrm{MINCO}}$.
The spatial-temporal deformation is utilized to meet various feasibility requirements. Lightweight schemes are specially designed to eliminate geometrical constraints such that the trajectory can be freely deformed.
For continuous-time constraints, a time integral penalty functional is proposed to ensure the feasibility without sacrificing the scalability.
Finally, our framework transforms the constrained trajectory optimization into a sparse unconstrained one which can be reliably solved.

\subsection{Temporal Constraint Elimination}
\begin{figure}[ht]
	\centering
    \includegraphics[width=0.9\columnwidth]{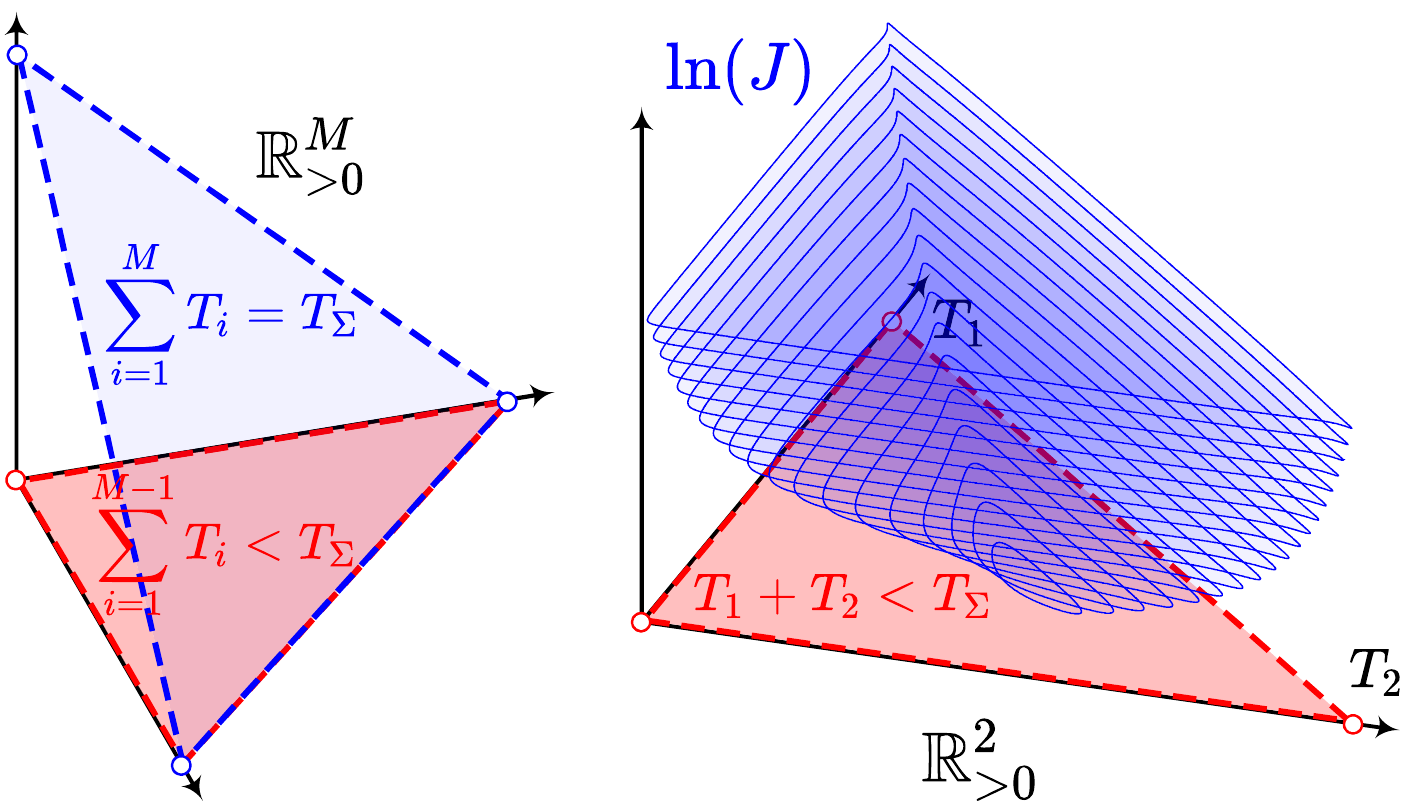}
    \caption{Left: Domain of $J$ on an $M$-piece trajectory with total time fixed as $T_\Sigma$. The domain is indeed the relative interior of an $(M-1)$-simplex in $\mathbb{R}^M_{>0}$. Right: Contour of $\ln{J}$ with $M=3$. The function goes to infinity as the time vector approaches the boundary of the open domain in $\mathbb{R}^2_{>0}$.\label{fig:LogObjectiveLevelSet}}
    \vspace{-0.0cm}
\end{figure}

Deforming MINCO needs standard optimizers that are often designed for Euclidean spaces. However, the trajectory definition and cost functional~(\ref{eq:TrajectoryOptimization}) both restrict the domain of $\mathbf{T}$ to simple manifolds, on which frequent retractions are needed during optimization. We give explicit diffeomorphisms for $\mathbf{T}$ such that surrogate variables are in Euclidean spaces. Thus, common efficient optimizers can be conveniently applied.

For polynomial splines, the control effort in (\ref{eq:TrajectoryOptimization}) is a function of $\mathbf{c}$ and $\mathbf{T}$, denoted by $J_c(\mathbf{c},\mathbf{T})$. Analytical expressions of $J_c$, $\partial J_c/\partial\mathbf{c}$, and $\partial J_c/\partial\mathbf{T}$ are available in~\cite{Bry2015AggressiveFO}. Now that $\mathfrak{T}_{\mathrm{MINCO}}$ are polynomial splines with coefficients determined by $\mathbf{c}(\mathbf{q},\mathbf{T})$, the cost functional of (\ref{eq:TrajectoryOptimization}) can be written as
\begin{equation}
\label{eq:EnergyTimeCost}
J(\mathbf{q},\mathbf{T})=J_q(\mathbf{q},\mathbf{T})+\rho(\norm{\mathbf{T}}_1),
\end{equation}
where $J_q$ is defined as $J_q(\mathbf{q},\mathbf{T})=J_c(\mathbf{c}(\mathbf{q},\mathbf{T}),\mathbf{T})$. Obviously, computing $J_q$, $\partial J_q/\partial\mathbf{q}$, and $\partial J_q/\partial\mathbf{T}$ from any provided $J_c$, $\partial J_c/\partial\mathbf{c}$, and $\partial J_c/\partial\mathbf{T}$ can be done in $O(M)$ complexity, as already shown in deformation of $\mathfrak{T}_{\mathrm{MINCO}}$.

It is natural to optimize $\mathbf{T}$ via $\partial J/\partial\mathbf{T}$.
However, $J_q(\mathbf{q},\mathbf{T})$ has its definition over $\mathbf{T}\in\mathbb{R}_{>0}^M$. It becomes unbounded when any $T_i$ approaches zero and no consecutively repeating points appear in $\mathbf{q}$. Besides, $\rho_f$ defined in (\ref{eq:HardDuration}) further restricts the domain of $J$ to $\sum_{i=1}^{M-1}{T_i}<T_\Sigma$, as shown in Fig.~\ref{fig:LogObjectiveLevelSet}.

We use diffeomorphisms to eliminate constraints for $\rho_f$ and $\rho_s$. Consider the domain of $\rho_f$ in (\ref{eq:HardDuration}),
\begin{equation}
\label{eq:TemporalDomain}
\mathcal{T}_f=\cBrac{\mathbf{T}\in\mathbb{R}^M~\Big|~\Norm{\mathbf{T}}_1=T_\Sigma,~\mathbf{T}\succ\mathbf{0}}.
\end{equation}
It is clear that $J(\mathbf{q},\cdot)$ is finite for a nontrivial $\mathbf{q}$ if and only if $\mathbf{T}\in \mathrm{RelInt}(\mathcal{T}_f)$, i.e., the relative interior of $\mathcal{T}_f$.
\begin{proposition}
    \label{ps:TemporalDiffeomorphism}
    $\mathcal{T}_f$ defined by (\ref{eq:TemporalDomain}) is diffeomorphic to $\mathbb{R}^{M-1}$. Denote by $\boldsymbol\tau=\rbrac{\tau_1,\dots,\tau_{M-1}}$ an element in $\mathbb{R}^{M-1}$. A $C^\infty$ diffeomorphism is given by the map below for $1\leq i < M$:
    \begin{equation}
    \label{eq:TDiffTransformation}
    T_i=\frac{e^{\tau_i}}{1+\sum_{j=1}^{M-1}{e^{\tau_j}}}T_\Sigma,~~T_M=T_\Sigma-\sum_{j=1}^{M-1}{T_j}.
    \end{equation}
\end{proposition}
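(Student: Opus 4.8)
The plan is to prove that the stated map is a diffeomorphism by exhibiting an explicit $C^\infty$ inverse, which is the cleanest route here since both directions turn out to be elementary closed-form expressions. First I would record the manifold structure of the domain: because every entry $T_i>0$ forces $\Norm{\mathbf{T}}_1=\sum_{i=1}^{M}T_i$, the set $\mathcal{T}_f$ is exactly the relative interior of the $T_\Sigma$-scaled standard simplex, hence an open subset of the affine hyperplane $\cBrac{\mathbf{T}\in\mathbb{R}^M\mid\sum_{i=1}^{M}T_i=T_\Sigma}$ and therefore an embedded $(M-1)$-dimensional smooth submanifold of $\mathbb{R}^M$. This already matches the dimension of $\mathbb{R}^{M-1}$, as a diffeomorphism requires.

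Next I would verify that the forward map $\Phi:\boldsymbol\tau\mapsto\mathbf{T}$ of (\ref{eq:TDiffTransformation}) is well defined into $\mathcal{T}_f$ and smooth. Writing $S(\boldsymbol\tau)=\sum_{j=1}^{M-1}e^{\tau_j}$, each of the first $M-1$ components is $T_i=e^{\tau_i}T_\Sigma/(1+S)$, which is manifestly positive, and a one-line computation gives $T_M=T_\Sigma/(1+S)>0$. Hence the image lies in $\mathcal{T}_f$ and automatically satisfies $\sum_i T_i=T_\Sigma$. Smoothness is immediate, since each component is a ratio whose denominator $1+S\ge 1$ never vanishes, so $\Phi$ is $C^\infty$ as a map into $\mathbb{R}^M$ and therefore into the embedded submanifold $\mathcal{T}_f$.

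The key step is the explicit inverse. Reading off the common factor $T_\Sigma/(1+S)$ yields $T_i/T_M=e^{\tau_i}$ for $1\le i<M$, which I would invert as $\tau_i=\ln(T_i/T_M)$. Because every $\mathbf{T}\in\mathcal{T}_f$ has all positive entries, this $\Phi^{-1}:\mathcal{T}_f\mapsto\mathbb{R}^{M-1}$ is well defined and in fact extends to a $C^\infty$ function on the open ambient set $\cBrac{\mathbf{T}\in\mathbb{R}^M\mid T_i>0,\ T_M>0}$, so its restriction to $\mathcal{T}_f$ is smooth. Substituting each direction into the other confirms $\Phi\circ\Phi^{-1}=\mathrm{id}$ and $\Phi^{-1}\circ\Phi=\mathrm{id}$, so $\Phi$ is a $C^\infty$ bijection with $C^\infty$ inverse, i.e.\ a diffeomorphism.

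The only place that demands care, and the step I would flag as the main (if modest) obstacle, is the manifold bookkeeping: justifying that smoothness of $\Phi$ \emph{into} the embedded submanifold $\mathcal{T}_f$ and of $\Phi^{-1}$ \emph{out of} it both follow from the smoothness of the underlying $\mathbb{R}^M$-valued and ambient expressions. This rests squarely on $\mathcal{T}_f$ being an open subset of an affine hyperplane, so that its inherited smooth structure is compatible with ambient restriction. The remaining algebraic verifications — positivity of $T_M$, the sum constraint, and the two composition identities — are entirely routine.
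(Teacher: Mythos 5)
Your proof is correct and complete. The paper itself states Proposition~\ref{ps:TemporalDiffeomorphism} without a formal proof, recording only the inverse map $\tau_i=\ln\rbrac{T_i/T_M}$ in the subsequent text; your explicit-inverse argument --- verifying $T_M=T_\Sigma/(1+\sum_{j=1}^{M-1}e^{\tau_j})>0$, the two composition identities, and the embedded-submanifold bookkeeping that lets smoothness of the ambient $\mathbb{R}^M$-valued expressions pass to the open scaled simplex --- is exactly the intended justification, fully fleshed out.
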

By exploiting the explicit diffeomorphism (\ref{eq:TDiffTransformation}), we directly minimize the cost function $J$ over $\mathbb{R}^{M-1}$ via $\boldsymbol\tau$, because the domain constraints are satisfied by default.

Optimizing $\boldsymbol\tau$ requires gradient propagation. We partition the gradient as $\partial J_q/\partial \mathbf{T}=\rbrac{g_a\tp,g_b}\tp$, where $g_a\in\mathbb{R}^{M-1}$ and $g_b\in\mathbb{R}$.
Differentiating the layer in (\ref{eq:TDiffTransformation}) yields the gradient of $J$ w.r.t. $\boldsymbol\tau$,
\begin{equation}
\frac{\partial J}{\partial\boldsymbol\tau}=\frac{(g_a-g_b\mathbf{1})\circ e^{[\boldsymbol\tau]}}{1+\norm{e^{[\boldsymbol\tau]}}_1}-\frac{\rBrac{g_a\tp e^{[\boldsymbol\tau]}-g_b\norm{e^{[\boldsymbol\tau]}}_1}e^{[\boldsymbol\tau]}}{\rBrac{1+\norm{e^{[\boldsymbol\tau]}}_1}^2},
\end{equation}
where $e^{[\cdot]}$ is the entry-wise exponential map, and $\mathbf{1}$ an all-ones vector. If an initial guess $\mathbf{T}$ is specified, the corresponding $\boldsymbol\tau$ can be computed via the inverse map of the diffeomorphism, given by $\tau_i=\ln{\rBrac{T_i/T_M}}$ for $1\leq i < M$. As for $\rho_s$ in (\ref{eq:SoftDuration}), only $\mathbf{T}\succ\mathbf{0}$ needs to be ensured.
It suffices to use $\mathbf{T}=e^{[\boldsymbol\tau]}$ as the diffeomorphism between $\mathbb{R}^M$ and $\mathbb{R}_{>0}^M$.

For either $\rho_f$ or $\rho_s$, we denote the diffeomorphism by $\mathbf{T}(\boldsymbol\tau)$. Unconstrained optimization on $\boldsymbol\tau$ can be directly conducted to minimize $J(\mathbf{q},\mathbf{T}(\boldsymbol\tau))$.
Although $\mathbf{T}(\boldsymbol\tau)$ does not preserve convexity, the original cost $J(\mathbf{q},\mathbf{T})$ is already nonconvex as given in (\ref{eq:ParameterizedOptimalConditionLinearEquationSystem}).
Thus, the only concern is whether $\mathbf{T}(\boldsymbol\tau)$ brings new local minima in the space of $\boldsymbol\tau$ or eliminates local minima in the space of $\mathbf{T}$.
\begin{proposition}
    \label{ps:DiffeomorphismKeepsLocalMin}
    Denote by $F:\mathbb{D}_\mathrm{F}\mapsto\mathbb{R}$ any $C^2$ function with a convex open domain $\mathbb{D}_\mathrm{F}\in\mathbb{R}^N$. Given any $C^2$ diffeomorphism $\mathbf{G}:\mathbb{R}^N\mapsto\mathbb{D}_\mathrm{F}$, define $H:\mathbb{R}^N\mapsto\mathbb{R}$ as $H(y)=F(\mathbf{G}(y))$ for $y\in\mathbb{R}^N$. For any $x\in\mathbb{D}_\mathrm{F}$ and $y\in\mathbb{R}^N$ satisfying $x=\mathbf{G}(y)$ or equivalently $y=\mathbf{G}^{-1}(x)$, the following statements always hold:
    \begin{itemize}
        \item $\grad{F(x)}=\mathbf{0}$ if and only if $\grad{H(y)}=\mathbf{0}$;
        \item $\hessian{F(x)}$ is positive-definite (or positive-semidefinite) at $\grad{F(x)}=\mathbf{0}$, if and only if $\hessian{H(y)}$ is positive-definite (or positive-semidefinite) at $\grad{H(y)}=\mathbf{0}$.
    \end{itemize}
\end{proposition}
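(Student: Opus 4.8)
The plan is to treat both claims as direct consequences of the chain rule together with the fact that a diffeomorphism has an everywhere-nonsingular Jacobian. Let $\mathbf{J}(y)=\partial\mathbf{G}/\partial y\in\mathbb{R}^{N\times N}$ denote the Jacobian of $\mathbf{G}$ at $y$; since $\mathbf{G}$ is a $C^2$ diffeomorphism, $\mathbf{J}(y)$ is nonsingular for every $y\in\mathbb{R}^N$. First I would establish the gradient relation by differentiating $H(y)=F(\mathbf{G}(y))$, which gives
\begin{equation}
\grad H(y)=\mathbf{J}(y)\tp\grad F(x),\quad x=\mathbf{G}(y).
\end{equation}
Because $\mathbf{J}(y)\tp$ is invertible, $\grad H(y)=\mathbf{0}$ holds exactly when $\grad F(x)=\mathbf{0}$, which settles the first statement.

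For the second statement, I would differentiate once more. Writing $G_k$ for the $k$-th component of $\mathbf{G}$ and $[\grad F(x)]_k$ for the $k$-th entry of $\grad F(x)$, the Hessian of the composition is
\begin{equation}
\hessian H(y)=\mathbf{J}(y)\tp\,\hessian F(x)\,\mathbf{J}(y)+\sum_{k=1}^{N}[\grad F(x)]_k\,\hessian G_k(y).
\end{equation}
The key observation is that at any point satisfying $\grad F(x)=\mathbf{0}$ (equivalently $\grad H(y)=\mathbf{0}$, by the first statement), the entire summation vanishes, leaving the clean congruence relation $\hessian H(y)=\mathbf{J}(y)\tp\,\hessian F(x)\,\mathbf{J}(y)$. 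Since $\mathbf{J}(y)$ is nonsingular, this is a congruence transformation of $\hessian F(x)$, and by Sylvester's law of inertia congruence preserves the number of positive, negative, and zero eigenvalues. Hence $\hessian H(y)$ is positive-definite (resp. positive-semidefinite) if and only if $\hessian F(x)$ is, which is the second statement.

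The main obstacle is not any single calculation but recognizing that the whole argument hinges on the cancellation of the awkward second-order term $\sum_k[\grad F(x)]_k\,\hessian G_k(y)$ \emph{precisely} at critical points; away from stationary points the Hessians of $F$ and $H$ are not congruent in general, so the claim is genuinely restricted to points where the gradient vanishes, and the statement is worded accordingly. The only remaining technical points to verify are routine: that $H$ is indeed $C^2$ on all of $\mathbb{R}^N$ (which follows from $\mathbf{G}$ being $C^2$ with range in the open convex set $\mathbb{D}_\mathrm{F}$ and $F$ being $C^2$ there), and that congruence by a nonsingular matrix preserves each definiteness class.
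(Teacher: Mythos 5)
Your proposal is correct and follows essentially the same route as the paper's own proof: the chain rule $\grad H(y)=\mathbf{J}(y)\tp\grad F(x)$ with a nonsingular Jacobian for the first claim, and the composition Hessian $\hessian H(y)=\mathbf{J}(y)\tp\hessian F(x)\mathbf{J}(y)+\sum_k[\grad F(x)]_k\hessian G_k(y)$, whose second-order correction term vanishes at stationary points, leaving a congruence that preserves (semi)definiteness. Your appeal to Sylvester's law of inertia is just the explicit justification of the congruence step that the paper handles by citation, so the two arguments are the same in substance.
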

\begin{proof}
    See Appendix~\ref{apd:DiffeomorphismKeepsLocalMin}.
\end{proof}

Proposition~\ref{ps:DiffeomorphismKeepsLocalMin} confirms that $\mathbf{T}(\boldsymbol\tau)$ preserves the first/second-order necessary optimality conditions and second-order sufficient optimality conditions~\cite{Nocedal2006NumericalOP}. It is also applicable to substitute the exponential map in this subsection with any $C^2$ diffeomorphism from $\mathbb{R}$ to $\mathbb{R}_{>0}$ for a better numerical condition.
In the sense of commonly-used optimality conditions, our constraint elimination does not produce extra spurious local minima or cancel any existing one.

\subsection{Spherical Spatial Constraint Elimination}
\begin{figure}[ht]
    \centering
    \includegraphics[width=0.75\columnwidth]{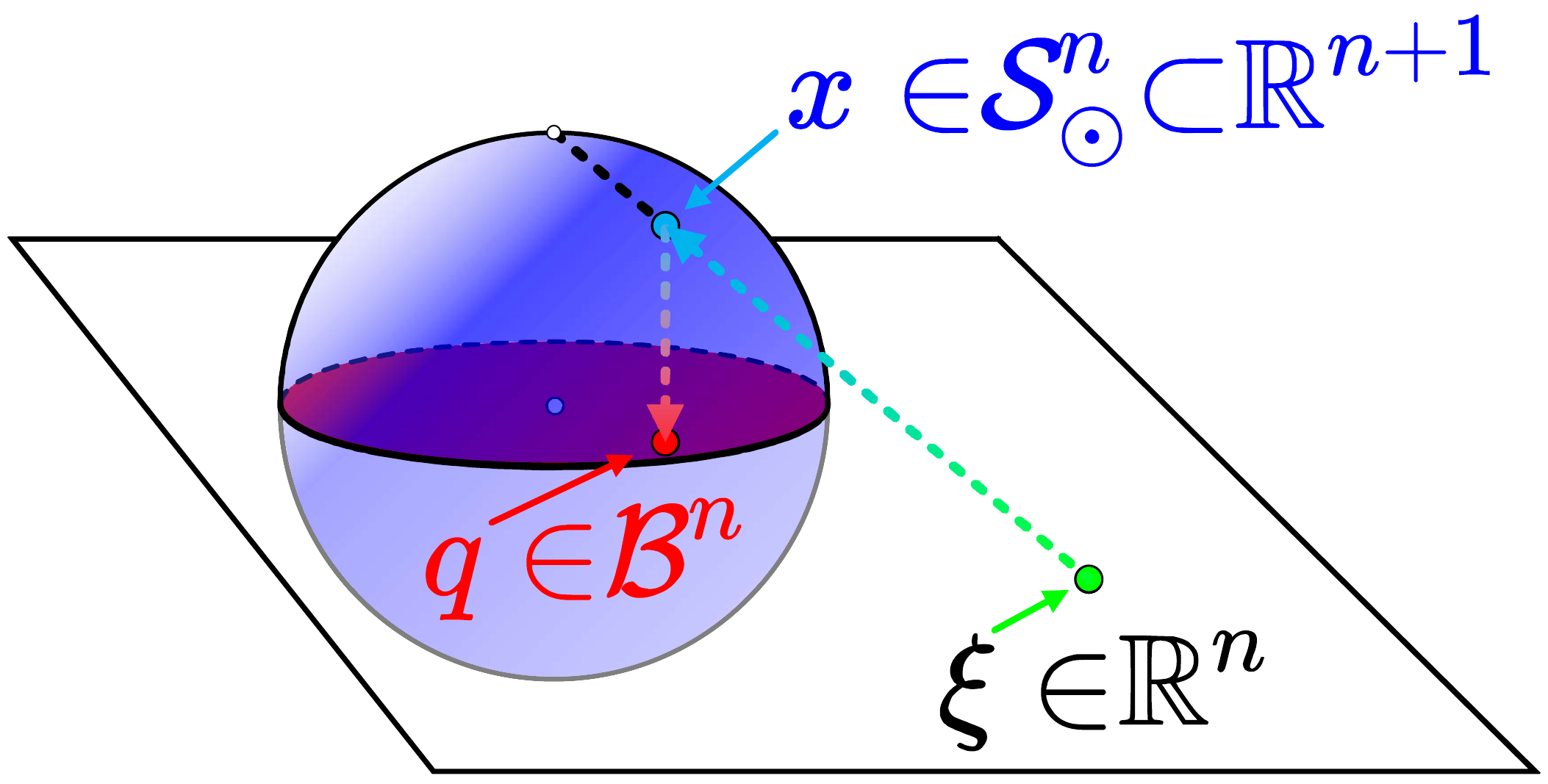}
    \caption{Inverse stereographic projection $f_s$ maps the Euclidean space $\mathbb{R}^n$ onto a sphere without north pole $\mathcal{S}^n_\odot$ in an $(n+1)$-dimensional space. The orthographic projection $f_o$ maps $\mathcal{S}^n_\odot$ onto an $n$-dimensional ball $\mathcal{B}^n$. The variable $\xi$ moves freely in $\mathbb{R}^n$ while the transformed variable $q$ stays in $\mathcal{B}^n$. Optimization on $\xi$ becomes unconstrained when $q$ is constrained by a ball.\label{fig:InverseProjection}}
    \vspace{-0.0cm}
\end{figure}

We enforce motion safety by confining trajectories into the feasible region $\tilde{\mathcal{F}}$. Although $\tilde{\mathcal{F}}$ is nonconvex, it is a union of convex primitives that are sequentially connected. If all pieces have been assigned into these primitives, the safety constraint on each piece becomes convex and thus can be conveniently encoded in $\mathcal{G}$. Owing to the feature of MINCO, the traverse time for every primitive can be directly optimized. Thus, we fix the piece assignment before optimization, rather than resorting to integer variables during optimization~\cite{Tordesillas2019Faster}. Consequently, intermediate points should be contained by the overlap between primitives, forming inequalities. For Inequality Constrained Problems (ICPs), general methods successively approximate the constraints via additional parameters. However, we aim to apply the constraints directly and efficiently. Therefore, we propose spatial constraint elimination to enforce them exactly, leveraging their geometrical properties.

Consider the constraint $q\in\mathcal{P}\subset\mathbb{R}^n$ where $\mathcal{P}$ is a closed ball. Its dimension satisfies $n\leq m$ since a low-dimensional constraint also exists in $\mathbb{R}^m$. If $\mathcal{P}$ is a closed ball $\mathcal{P}^\mathcal{B}$ centered at point $o$ with radius $r$,
\begin{equation}
\mathcal{P}^\mathcal{B}=\cBrac{x\in\mathbb{R}^n~\Big|~\Norm{x-o}_2\leq r},
\end{equation}
We utilize a smooth surjection to map $\mathbb{R}^n$ to $\mathcal{P}^\mathcal{B}$ such that optimization over $\mathbb{R}^n$ implicitly satisfies the constraint $\mathcal{P}^\mathcal{B}$.
As illustrated in Fig.~\ref{fig:InverseProjection}, the map is a composition of the inverse stereographic projection and the orthographic projection.
First, we utilize the inverse stereographic projection to map $\mathbb{R}^n$ to $\mathcal{S}_\odot^n$, where  $\mathcal{S}_\odot^n$ is a unit sphere without north pole, i.e.,
\begin{equation}
\mathcal{S}_\odot^n=\cBrac{x\in\mathbb{R}^{n+1}~\Big|~\Norm{x}_2=1,~x_{n+1}<1}.
\end{equation}
The inverse stereographic projection $f_s$ is define as
\begin{equation}
f_s(x)=\frac{\rbrac{2x\tp,x\tp x-1}\tp}{x\tp x+1}\in\mathcal{S}_\odot^n,~\forall x\in\mathbb{R}^n.
\end{equation}
Note that $f_s$ is a diffeomorphism between $\mathbb{R}^n$ and $\mathcal{S}_\odot^n$~\cite{Lee2012IntroductionSMM}.
We then project $\mathcal{S}_\odot^n$ from $\mathbb{R}^{n+1}$ back in $\mathbb{R}^n$ to obtain
\begin{equation}
\mathcal{B}^n=\cBrac{x\in\mathbb{R}^n~\Big|~\Norm{x}_2\leq1}.
\end{equation}
The map is described by
\begin{equation}
f_o(x)=\rbrac{x_1,\dots,x_n}\tp\in\mathcal{B}^n,~\forall x\in\mathcal{S}_\odot^n,
\end{equation}
which is indeed a smooth surjection onto $\mathcal{B}^n$.
Each point in $\mathcal{B}^n$, except the center, is paired with two points in $\mathcal{S}_\odot^n$. The composition of $f_s$, $f_o$, and a linear transformation, is a smooth surjection:
\begin{equation}
\label{eq:SmoothSurjectionForClosedBall}
f_\mathcal{B}(x)=o+\frac{2rx}{x\tp x+1}\in\mathcal{P}^{\mathcal{B}},~\forall x\in\mathbb{R}^n.
\end{equation}

The map $f_\mathcal{B}$ introduces a new coordinate, denoted by $\xi$, such that optimizing $\xi$ over $\mathbb{R}^n$ always satisfies the constraint on $q$ described by $\mathcal{P}^\mathcal{B}$. For the $i$-th intermediate point $q_i$, denote by $\xi_i$ the corresponding new coordinate.
Accordingly, denote by $\boldsymbol\xi$ the new coordinate for $\mathbf{q}$. Optimizing $\boldsymbol\xi$ requires gradient propagation for $\partial J/\partial\mathbf{q}$.
Denote by $g_i$ the $i$-th entry $\partial J/\partial q_i$ in $\partial J/\partial\mathbf{q}$. Differentiating the layer $f_\mathcal{B}$ gives the gradient
\begin{equation}
\frac{\partial J}{\partial \xi_i}=\frac{2r_i g_i}{\xi_i\tp\xi_i+1}-\frac{4r_i(\xi_i\tp g_i)\xi_i}{(\xi_i\tp\xi_i+1)^2}.
\end{equation}

If the optimization needs to start from an initial guess $\mathbf{q}$, the backward evaluation of $\boldsymbol\xi$ can be done by using a local inverse of $f_\mathcal{B}$, given by $\xi_i$ for $1\leq i < M$:
\begin{equation}
\xi_i=\frac{r_i-\sqrt{r_i^2-\Norm{q_i-o_i}_2^2}}{\Norm{q_i-o_i}_2^2}(q_i-o_i).
\end{equation}

\begin{figure}[t]
	\centering
	\includegraphics[width=1.0\columnwidth]{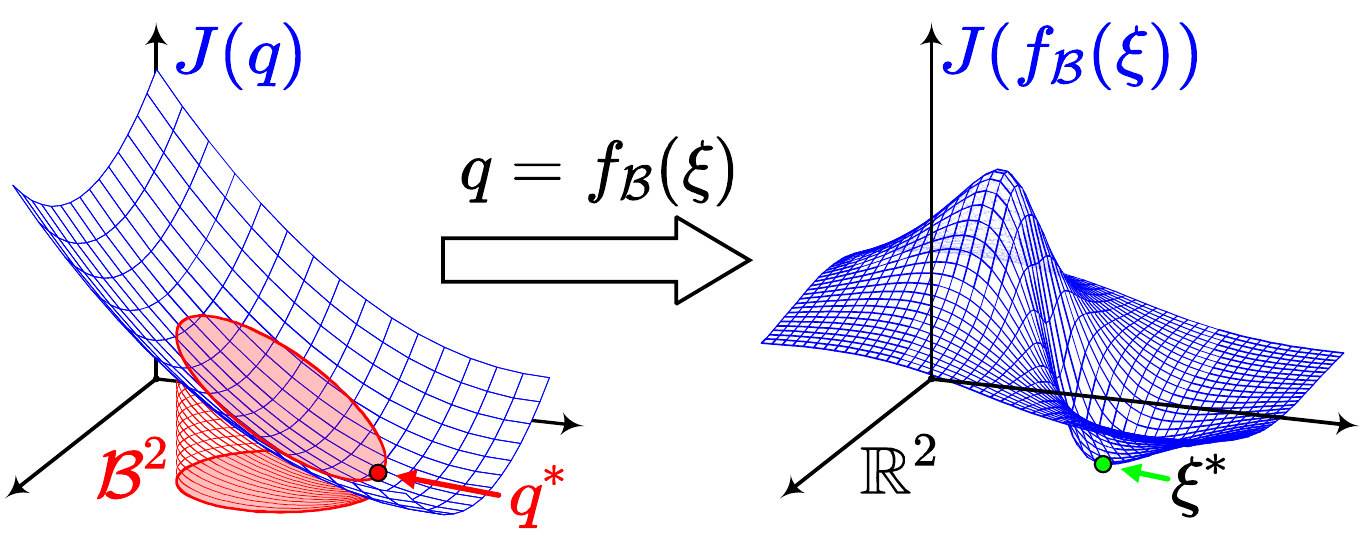}
	\caption{Constrained minimum $q^*$ of a convex function $J(q)$ within a 2-D ball. Transformed by $f_\mathcal{B}$, the resultant function $J(f_\mathcal{B}(\xi))$ becomes nonconvex but it preserves the local minimum $\xi^*$ satisfying $q^*=f_\mathcal{B}(\xi^*)$ with no additional local minimum introduced. \label{fig:TransformedQuadratic}}
	\vspace{-0.0cm}
\end{figure}

Similarly, we analyze influences that the smooth surjection $f_\mathcal{B}$ imposes on the constrained local minima in $\mathcal{P}^\mathcal{B}$. Although $f_\mathcal{B}$ lacks the one-to-one correspondence as diffeomorphisms possess, its components are all well-formed. Firstly, $f_o$ only takes the first $n$ entries of a point. This operation preserves at least the first-order necessary conditions for local minima in either $\mathcal{B}^n$ or $\mathcal{S}_\odot^n$. Secondly, $f_s$ is a diffeomorphism between $\mathcal{S}_\odot^n$ and $\mathbb{R}^n$, thus satisfying Proposition~\ref{ps:DiffeomorphismKeepsLocalMin}. Therefore, we can also confirm that $f_\mathcal{B}$ does not produce extra spurious local minima or cancel any existing one. As shown in Fig.~\ref{fig:TransformedQuadratic}, the constrained minimum within a 2-D ball is transformed into an unconstrained minimum.

\subsection{Polyhedral Spatial Constraint Elimination}

\begin{figure}[ht]
    \centering
    \includegraphics[width=1.0\columnwidth]{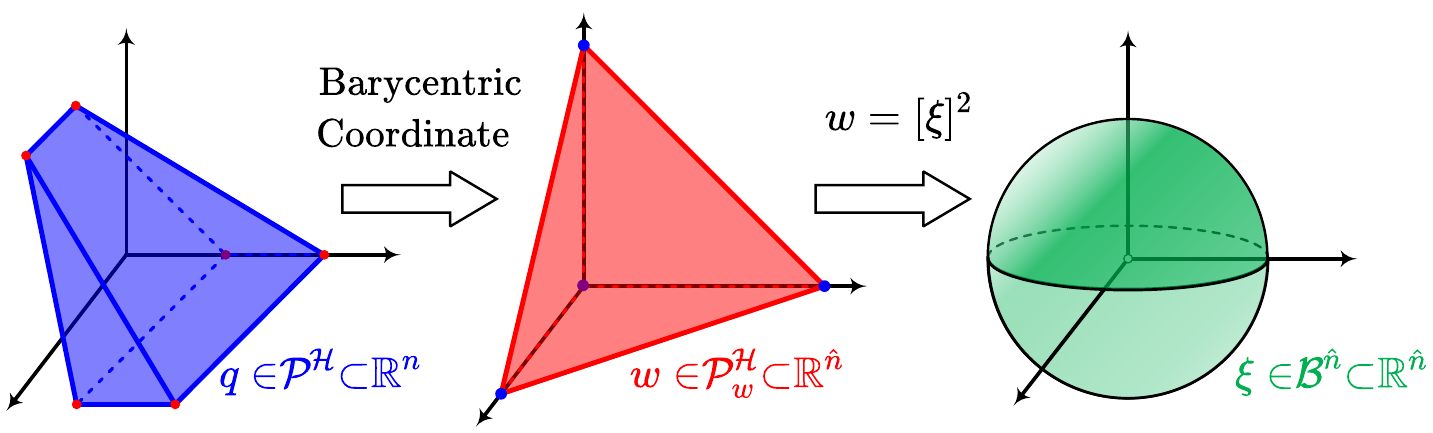}
    \caption{Transformations on a convex polytope. A convex polytope $\mathcal{P}^\mathcal{H}$ with $\hat{n}+1$ vertices is indeed a standard $\hat{n}$-simplex in the barycentric coordinate. The simplex $\mathcal{P}^\mathcal{H}_w$ is then the image of an entry-wise square map $[\cdot]^2$ with ball-shaped domain, which can be eliminated as in Fig.~\ref{fig:InverseProjection}.\label{fig:BarycentricTransform}}
    \vspace{-0.0cm}
\end{figure}

Now we consider the elimination of polyhedral constraints. Specifically, $\mathcal{P}$ is a closed convex polytope $\mathcal{P}^\mathcal{H}$ defined by
\begin{equation}
\label{eq:HPolytope}
\mathcal{P}^\mathcal{H}=\cBrac{x\in\mathbb{R}^n~\Big|~\mathbf{A}x\preceq b}.
\end{equation}
where $\mathrm{Int}\rbrac{\mathcal{P}^\mathcal{H}}\neq\varnothing$ according to (\ref{eq:LocallySequentialConnection}). Common optimization algorithms use the $\mathcal{H}$-representation of $\mathcal{P}^\mathcal{H}$ as linear inequality constraints. In our framework, we exploit their geometrical property to eliminate these constraints so that $\mathfrak{T}_{\mathrm{MINCO}}$ can be freely deformed.
To achieve this, we use the $\mathcal{V}$-representation of $\mathcal{P}^\mathcal{H}$ instead, where any $q\in\mathcal{P}^\mathcal{H}$ has a (general) barycentric coordinate, i.e., a convex combination of vertices.
To obtain the vertices, we apply the efficient convex hull algorithm~\cite{Barber1996QuickH} to the dual of $\mathcal{P}^\mathcal{H}$ based on an interior point calculated by Seidel's algorithm~\cite{Seidel1991SmallDLP}. Note that this procedure produces negligible overhead in our case ($n\leq4$).

The procedure to eliminate a polytope constraint is illustrated in the Fig.~\ref{fig:BarycentricTransform}.
We denote all $\hat{n}+1$ vertices of $\mathcal{P}^\mathcal{H}$ by $\rbrac{v_0,\dots,v_{\hat{n}}}$, where $v_i\in\mathbb{R}^n$ for each $i$. The barycentric coordinate of a point $q\in\mathcal{P}^\mathcal{H}$ consists of the weights for these vertices. To obtain a more compact form, define $\hat{v}_i=v_i-v_0$ and $\hat{\mathbf{V}}=\rbrac{\hat{v}_1,\dots,\hat{v}_{\hat{n}}}$, then the position can be calculated as
\begin{equation}
\label{eq:BarycentricEuclideanTransformation}
q=v_0+\hat{\mathbf{V}}w,
\end{equation}
where $w=\rbrac{w_1,\dots,w_{\hat{n}}}\tp\in\mathbb{R}^{\hat{n}}$ is the last $\hat{n}$ entries in the barycentric coordinate. The set of coordinates in convex combinations can also be written as
\begin{equation}
\label{eq:SimplexInterior}
\mathcal{P}_w^\mathcal{H}=\cBrac{w\in\mathbb{R}^{\hat{n}}~\Big|~w\succeq\mathbf{0},~\norm{w}_1\leq1}.
\end{equation}
The \textit{Main Theorem of Polytope Theory} in~\cite{Toth2017HandbookDCG} confirms the equivalence between $\mathcal{P}_w^\mathcal{H}$ and $\mathcal{P}^\mathcal{H}$ under (\ref{eq:BarycentricEuclideanTransformation}). The polytope is exactly converted into a standard $(\hat{n}+1)$-simplex by simply adding auxiliary variables and applying a linear map to $q$.
This process does not produce additional nonlinearity in the optimization problem except that the dimension of decision variables is increased. Therefore, we only consider the decision variables on $q$ as the corresponding $w$ hereafter.

The simplex (\ref{eq:SimplexInterior}) can be eliminated by nonlinear transformations. We first use an entry-wise square map $[\cdot]^2:\mathbb{R}^{\hat{n}}\mapsto\mathbb{R}^{\hat{n}}$ proposed in~\cite{Sisser1981EliminationBOP} to eliminate nonnegativity constraints using $w=[x]^2$. Then, the constraint $\mathcal{P}_w^\mathcal{H}$ on $w$ is transformed into a closed unit ball $\mathcal{B}^{\hat{n}}$ on $x$,
\begin{equation}
\label{eq:ClosedUnitHyperball}
\mathcal{B}^{\hat{n}}=\cBrac{x\in\mathbb{R}^{\hat{n}}~\Big|~\norm{x}_2\leq1}.
\end{equation}
Consequently, we can utilize the smooth surjection $f_\mathcal{B}$ in (\ref{eq:SmoothSurjectionForClosedBall}) again. The composition of (\ref{eq:BarycentricEuclideanTransformation}), $[\cdot]^2$, and $f_\mathcal{B}$ yields a smooth surjection $f_\mathcal{H}$ from $\mathbb{R}^{\hat{n}}$ onto $\mathcal{P}^{\mathcal{H}}$:
\begin{equation}
f_\mathcal{H}(x)=v_0+\frac{4\hat{\mathbf{V}}[x]^2}{(x\tp x+1)^2}\in\mathcal{P}^{\mathcal{H}},~\forall x\in\mathbb{R}^{\hat{n}}.
\end{equation}

A new coordinate $\xi$ is introduced by $f_\mathcal{H}$, where any $\xi\in\mathbb{R}^{\hat{n}}$ ensures $q\in\mathcal{P}^\mathcal{H}$. The boundary of $\mathcal{P}^\mathcal{H}$ is also attainable. Similarly, $\boldsymbol\xi$ is the new coordinate for $\mathbf{q}$. Optimizing $\boldsymbol\xi$ requires gradient propagation. Denote by $g_i$ the $i$-th gradient $\partial J/\partial q_i$ in $\partial J/\partial\mathbf{q}$, then differentiating the layer $f_\mathcal{H}$ gives
\begin{equation}
\frac{\partial J}{\partial \xi_i}=\frac{8\xi_i\circ \hat{\mathbf{V}}\tp g_i}{(\xi_i\tp\xi_i+1)^2}-\frac{16g_i\tp \hat{\mathbf{V}}[\xi_i]^2}{(\xi_i\tp\xi_i+1)^3}\xi_i.
\end{equation}

If an initial guess $\mathbf{q}$ is specified, the corresponding $\boldsymbol\xi$ can be computed via the local inverse of $f_\mathcal{H}$. The barycentric coordinate of each $q_i$ can be obtained using the analytic approach by Warren et al.~\cite{Warren2007Barycentric}. After that the analytic local inverses of $[\cdot]^2$ and $f_\mathcal{B}(\cdot)$ give us the desired $\xi_i$. Another flexible way is to directly minimize the squared distance between $f_\mathcal{H}(\xi)$ and the given $q_i$. Both approaches have negligible time consumption but promising results.

The map $[\cdot]^2$ in $f_\mathcal{H}$ presents additional nonlinearity into optimization. Fortunately, variable transformation via $[\cdot]^2$ is a special case of the inequality-to-equality conversion~\cite{Bertsekas2016NPG}. Concretely, the inequality constraints are $-w\preceq\mathbf{0}$. By introducing additional variables $x$, the equivalent equality constraints are $-w+[x]^2=\mathbf{0}$, yielding $w=[x]^2$. Such type of constraint conversion is proved to preserve first/second-order necessary conditions and second-order sufficient conditions for ICPs by Bertsekas as provided in Section 4.3 of~\cite{Bertsekas2016NPG}. We confirm that the additional nonlinearity in $f_\mathcal{H}$ does not exclude the desired minimum or produce undesired minimum practically.

Direct constraints on $\mathbf{q}$ are eliminated for either $\mathcal{P}^\mathcal{B}$ or $\mathcal{P}^\mathcal{H}$ using a smooth surjection $\mathbf{q}(\boldsymbol\xi)$. We can conduct unconstrained optimization on $\boldsymbol\xi$ to minimize $J(\mathbf{q}(\boldsymbol\xi),\mathbf{T}(\boldsymbol\tau))$ hereafter.

\subsection{Time Integral Penalty Functional}
After eliminating direct constraints, $\mathfrak{T}_{\mathrm{MINCO}}$ can be freely deformed to meet the continuous-time constraints $\mathcal{G}$. However, enforcing $\mathcal{G}$ over the entire trajectory involves infinitely many inequalities that cannot be solved by constrained optimization. It further needs temporal discretization that usually produces a large number of decision variables. To preserve the sparsity of trajectory parameterization, we decouple the resolution of constraint evaluation from the number of decision variables. Inspired by the constraint transcription~\cite{Jennings1990ComputationalAFQCO}, we transform $\mathcal{G}$ into finite constraints by integral of constraint violations.

For a trajectory $p:[0,T]\mapsto\mathbb{R}^m$, we define
\begin{equation}
\label{eq:ConstraintTranscription}
I_\mathcal{G}^k\sbrac{p}=\int_0^T{\max{\sbrac{\mathcal{G}\rbrac{p(t),\dots,p^{(s)}(t)},\mathbf{0}}^k}~\df{t}},
\end{equation}
where $k\in\mathbb{R}_{>0}$ and $\max\sBrac{\cdot,\mathbf{0}}^k$ is the composition of the entry-wise maximum and an entry-wise power function. Specifically, smoothing is needed if $k\leq1$. The functional-type constraint is then equivalent to equality constraints $I_\mathcal{G}^k\sbrac{p}=\mathbf{0}$. Actually, $I_\mathcal{G}^k\sbrac{p}$ is a function of trajectory parameters, which we adopt as penalty terms. If $k=1$, it forms a nonsmooth but exact penalty. If $k>1$, it forms a differentiable strictly convex penalty. Thus either $I_\mathcal{G}^3\sbrac{p}$ or a smoothing approximation of $I_\mathcal{G}^1\sbrac{p}$ can be adopted. For simplicity, we utilize $I_\mathcal{G}^3\sbrac{p}$ hereafter unless otherwise specified. There are two reasons for choosing a penalty function method. Firstly, the integral in (\ref{eq:ConstraintTranscription}) can only be evaluated numerically, making the constraint approximation inevitable.
Secondly, penalty methods have no requirement on a feasible initial guess which is nontrivial to construct.

We define the time integral penalty functional for $p(t)$ as
\begin{equation}
I_\mathcal{G}\sbrac{p}=\chi\tp I_\mathcal{G}^k\sbrac{p}.
\end{equation}
where $\chi\in\mathbb{R}^{n_g}_{\geq0}$ is a weight vector. Normally, $\chi$ should contain large constants. If no constraint is violated, $I_\mathcal{G}\sbrac{p}$ remains zero. Otherwise, if any part on $p(t)$ violates any constraint in $\mathcal{G}$, the penalty functional $I_\mathcal{G}\sbrac{p}$ grows rapidly. By incorporating $I_\mathcal{G}\sbrac{p}$ into the cost functional, continuous-time constraints are enforced within an acceptable tolerance.

Practically, $I_\mathcal{G}\sbrac{p}$ can only be evaluated by quadrature. To conduct the quadrature, we first define a sampled function $\mathcal{G}_\tau:\mathbb{R}^{2s\times m}\times\mathbb{R}_{>0}\times[0,1]\mapsto\mathbb{R}^{n_g}$ as
\begin{equation}
\mathcal{G}_\tau(\mathbf{c}_i,T_i,\tau)=\mathcal{G}\rBrac{\mathbf{c}_i\tp\beta(T_i\cdot\tau),\dots,\mathbf{c}_i\tp\beta^{(s)}(T_i\cdot\tau)},
\end{equation}
where $\tau\in[0,1]$ is a normalized stamp. Then the quadrature for $I_\mathcal{G}\sbrac{p}$, denoted by $I:\mathbb{R}^{2Ms\times m}\times\mathbb{R}_{>0}^M\mapsto\mathbb{R}_{>0}$, is computed as a weighted sum of the sampled penalty,
\begin{equation}
\label{eq:TimeIntegralPenalty}
I(\mathbf{c},\mathbf{T})=\sum_{i=1}^{M}{\frac{T_i}{\kappa_i}\sum_{j=0}^{\kappa_i}\bar{\omega}_j\chi\tp\max{\sbrac{\mathcal{G}_\tau\rbrac{\mathbf{c}_i,T_i,\frac{j}{\kappa_i}},\mathbf{0}}^k}},
\end{equation}
where $\kappa_i$ controls the resolution. We choose the trapezoidal rule $\rbrac{\bar{\omega}_0,\bar{\omega}_1,\dots,\bar{\omega}_{\kappa_i-1},\bar{\omega}_{\kappa_i}}=\rbrac{1/2,1,\dots,1,1/2}$ because of its reliable performance for ill-shaped $C^2$ integrands in our practice. Intuitively, $I(\mathbf{c},\mathbf{T})$ is a differentiable approximation to $I_\mathcal{G}[p]$, whose precision is adjustable through $\kappa_i$. The value and gradient at most timestamps can be parallelly computed then directly combined as one.

\subsection{Trajectory Optimization via Unconstrained NLP}
Due to $\mathcal{G}$ and $\mathcal{F}$ in (\ref{eq:TrajectoryOptimization}), the optimal trajectory parameterization is generally hard to know.
Unlike traditional methods approximating solutions via a large number of variables~\cite{Betts2010PracticalNOC}, we propose to solve a lightweight relaxed optimization via unconstrained NLP, where the spatial-temporal deformation of $\mathfrak{T}_{\mathrm{MINCO}}$ is applied. The relaxation to (\ref{eq:TrajectoryOptimization}) is defined as
\begin{equation}
\label{eq:Relaxation}
\min_{\boldsymbol\xi, \boldsymbol\tau}~{J(\mathbf{q}(\boldsymbol\xi),\mathbf{T}(\boldsymbol\tau))+I(\mathbf{c}(\mathbf{q}(\boldsymbol\xi),\mathbf{T}(\boldsymbol\tau)),\mathbf{T}(\boldsymbol\tau))},
\end{equation}
where $J$ is the time-regularized control effort (\ref{eq:EnergyTimeCost}) for $\mathfrak{T}_{\mathrm{MINCO}}$ and $I$ is the quadrature of penalty functional (\ref{eq:TimeIntegralPenalty}). Note that any task-specific requirement, either objectives or constraints, can be combined in (\ref{eq:Relaxation}) without affecting its structure.

To generate trajectories for a flat multicopter, we first parameterize its flat-output trajectory as $\mathfrak{T}_{\mathrm{MINCO}}$. After assigning a fixed number of pieces into each $\mathcal{P}_i$, variable transformations are applied to eliminate all direct constraints.
User-defined $\mathcal{G}_\mathcal{D}$ are also transformed into $\mathcal{G}$ via $\Psi_x$ and $\Psi_u$.
Finally, we obtain the cost function (\ref{eq:Relaxation}).
Apparently, the gradient propagation is derived for all layers except $\Psi_x$ and $\Psi_u$. One can either apply \textit{Automatic Differentiation} (AD)~\cite{Griewank2008EvaluatingD} to $\Psi_x$ and $\Psi_u$ or derive the gradient propagation analytically by following the reverse-mode AD. The efficiency is the same as the flatness map as ensured by \textit{Baur-Strassen Theorem}~\cite{Baur1983complexityPD}. The differentiation is only needed for the given flat dynamics once and for all. With available gradient, the relaxation (\ref{eq:Relaxation}) is then solved by the L-BFGS algorithm~\cite{Liu1989LBFGS}.

\section{Applications}
\subsection{Large-Scale Unconstrained Control Effort Minimization}
\begin{figure}[ht]
    \centering
    \includegraphics[width=0.95\columnwidth]{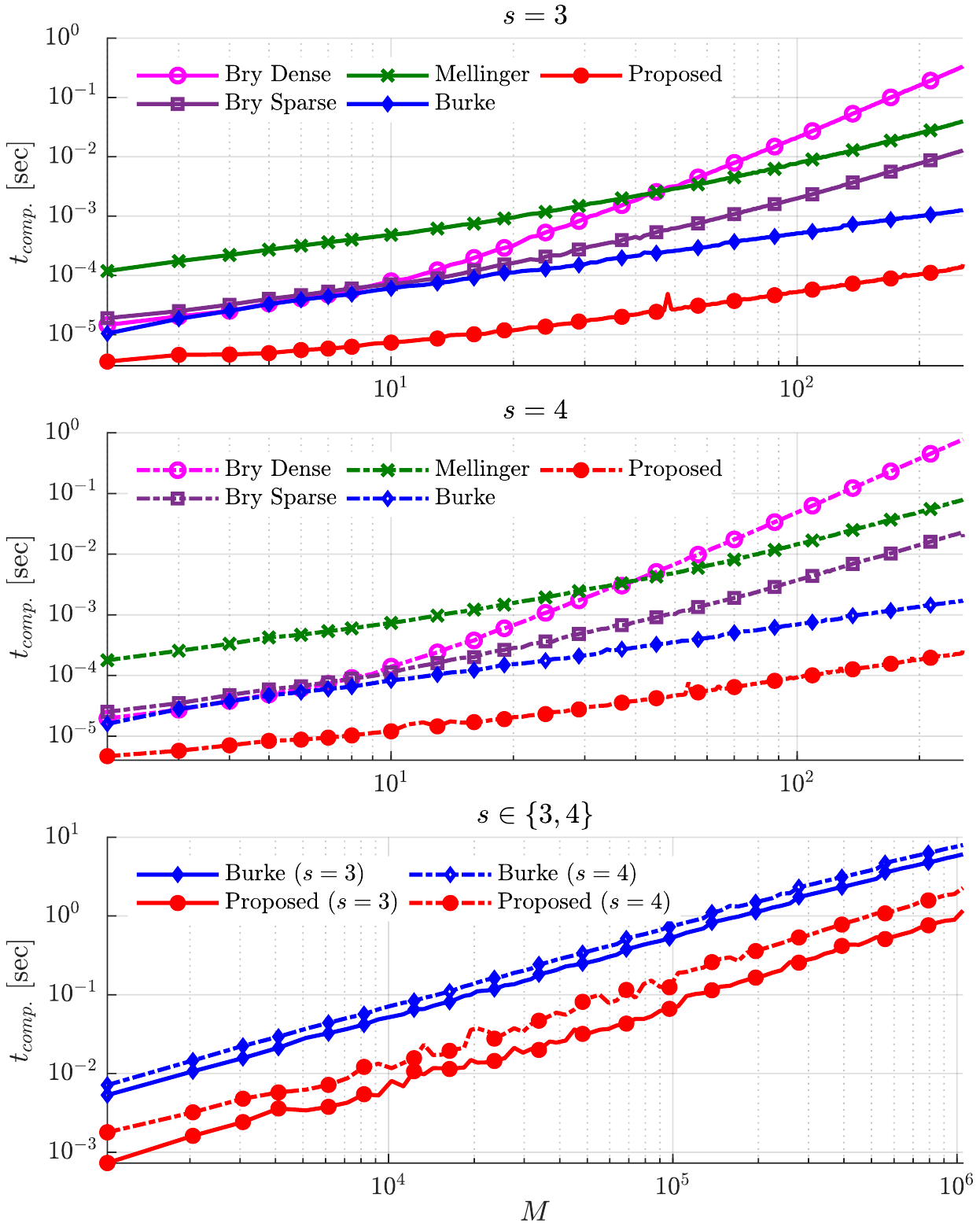}
    \caption{Computation time $t_{comp.}$ under different piece numbers $M$. The top and middle figures give the performance for jerk minimization ($s=3$) and snap minimization ($s=4$), respectively. The bottom figure shows the efficiency of two linear-complexity schemes for very large-scale problems.\label{fig:BenchmarkSubproblems}}
    \vspace{-0.0cm}
\end{figure}

We benchmark several existing schemes over problem (\ref{eq:MultistageMinimumControl}), including the QP formulation by Mellinger and Kumar~\cite{Mellinger2011MinimumST}, the closed-form solution by Bry et al.~\cite{Bry2015AggressiveFO}, and the linear-complexity scheme by Burke et al.~\cite{Burke2020GeneratingMSTRF}. We implement all these schemes in C++11 without any explicit hardware acceleration. Mellinger's scheme is implemented using OSQP~\cite{Stellato2020OSQP}. Bry's solution is evaluated by both a dense solver and a sparse one~\cite{Demmel1999SupernodalLU}. Burke's scheme is re-implemented here for fairness, which is faster than the original one~\cite{Burke2020GeneratingMSTRF}. The benchmark is conducted on an Intel Core i7-8700 CPU under Linux.

The performance is reported in Fig.~\ref{fig:BenchmarkSubproblems}. Both jerk $s=3$ and snap $s=4$ are minimized as defined in (\ref{eq:MultistageMinimumControl}). Mellinger's scheme~\cite{Mellinger2011MinimumST} only performs better than the dense evaluation of Bry's closed-form solution~\cite{Bry2015AggressiveFO} on middle-scale problems ($10^1<M<10^3$). Burke's scheme~\cite{Burke2020GeneratingMSTRF} benefits from its linear complexity, thus it can solve large-scale problems ($10^4<M<10^6$). Our scheme improves the computation speed by orders of magnitude against the others at any problem scale while retaining $O(M)$ complexity.

In conclusion, our optimality conditions provide a practical way to directly construct the solution of problem (\ref{eq:MultistageMinimumControl}), which possesses simplicity, efficiency, stability and scalability.
The trajectory class $\mathfrak{T}_{\mathrm{MINCO}}$ can serve as a reliable submodule of our optimization framework.

\subsection{Trajectory Generation Within Safe Flight Corridors}
\begin{figure}[ht]
    \centering
    \includegraphics[width=1.0\columnwidth]{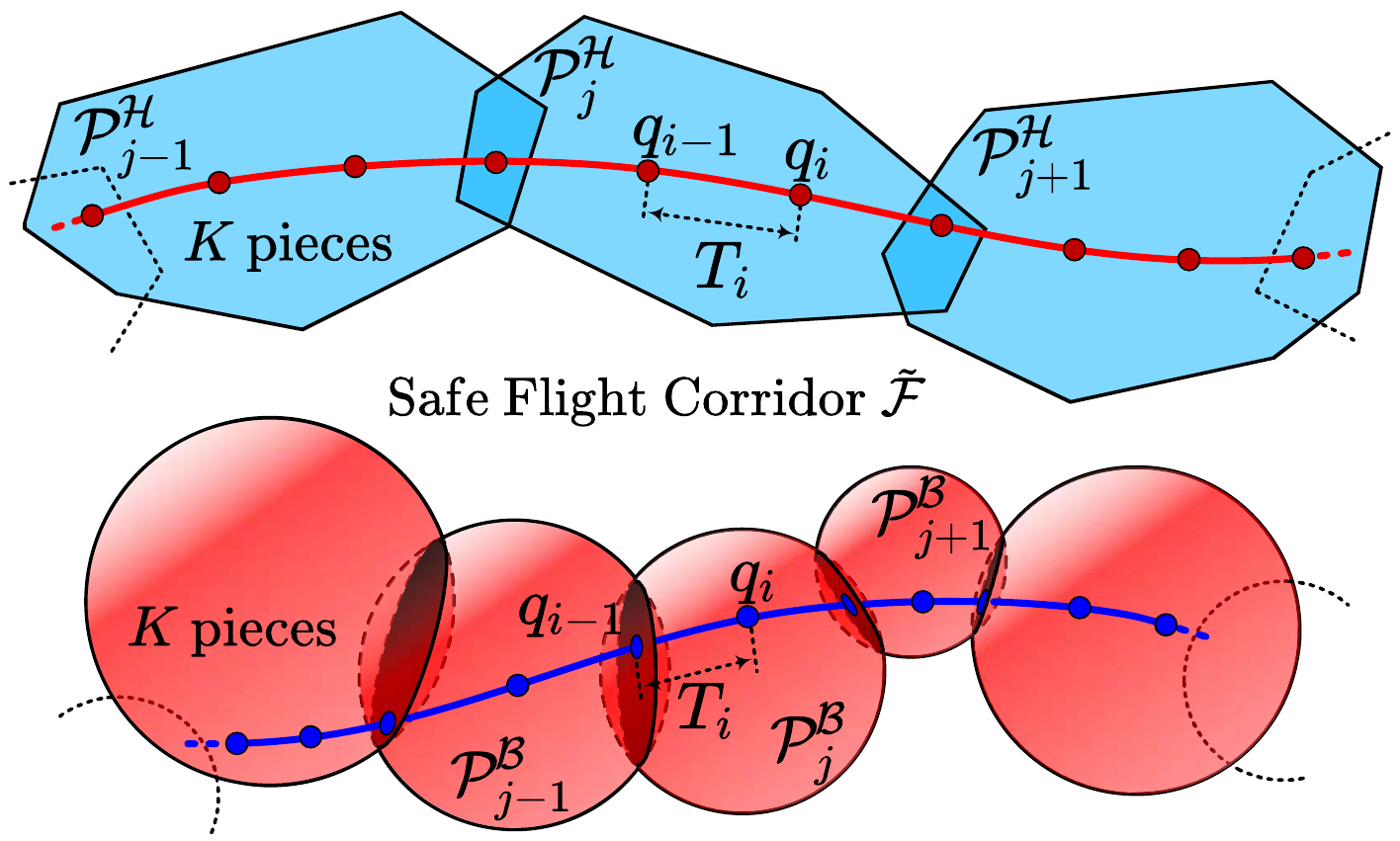}
    \caption{Piece assignment for a trajectory within different kinds of safe flight corridors in $\mathbb{R}^n$. Each geometrical primitive is assigned with $K$ pieces. An intermediate point $q_i$ is assigned to $\mathcal{P}_{\ceil{i/K}}\cap\mathcal{P}_{\ceil{(i+1)/K}}$. For ball-shaped corridors, a point is further anchored to an $(n-1)$-dimensional disk if it is assigned to the intersection of two $n$-dimensional balls.\label{fig:TrajSFCs}}
    \vspace{-0.0cm}
\end{figure}

\begin{figure}[ht]
    \centering
    \includegraphics[width=1.0\columnwidth]{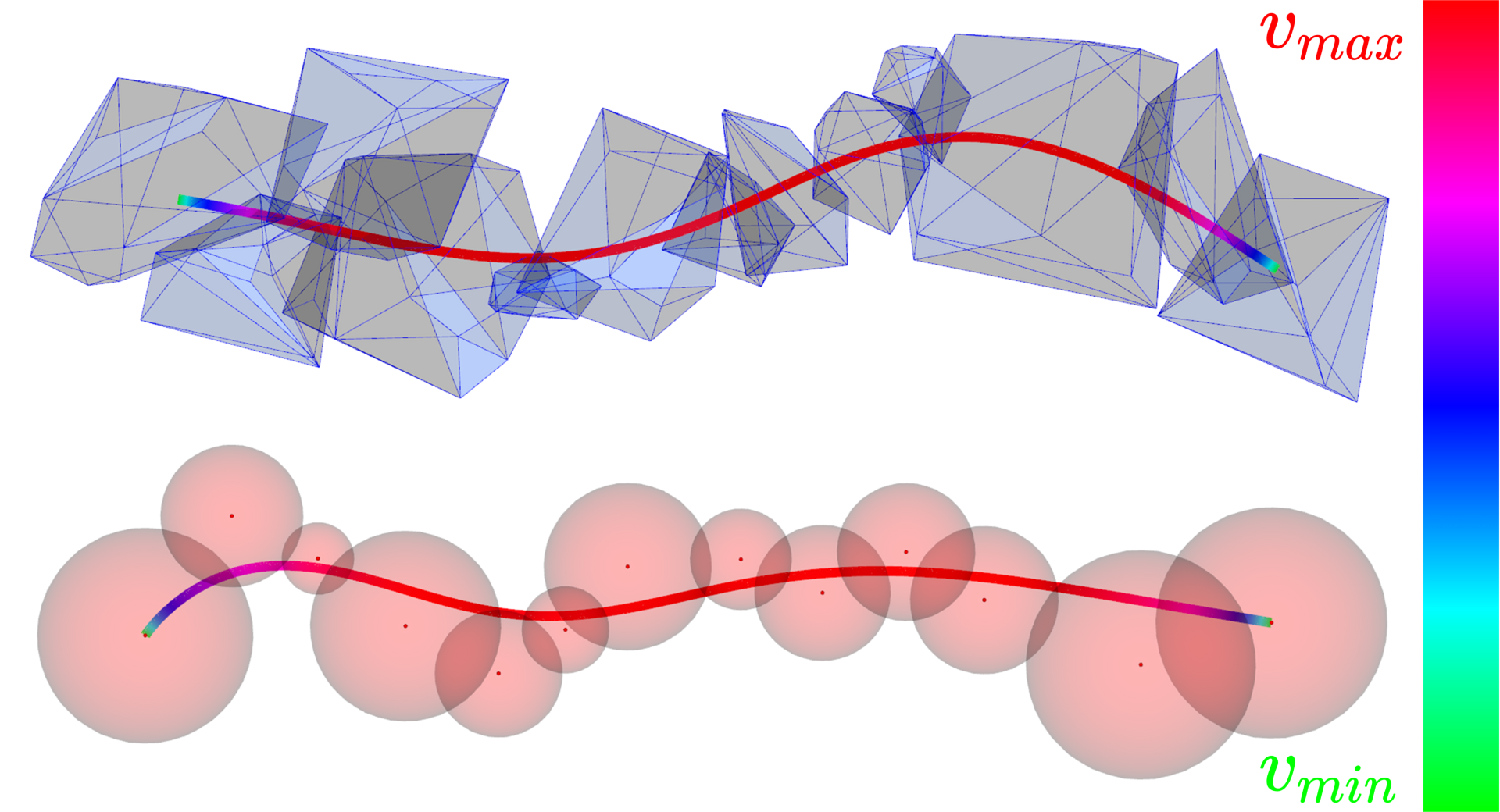}
    \caption{Optimized trajectories within different kinds of 3-D SFCs. The speed profile is colored according to its magnitude. The proposed method generates smooth trajectories within randomly generated SFCs. The speed persistently attains the maximum even if SFCs are narrow and twisted.\label{fig:TrajectoryInSFC}}
    \vspace{-0.0cm}
\end{figure}

As a special case of problem (\ref{eq:TrajectoryOptimization}), trajectory generation within 3-D Safe Flight Corridors (SFCs) has been widely adopted in real-world applications such as~\cite{Gao2020TeachRepeatReplanAC},~\cite{Honig2018TrajectorySWARM}, and~\cite{Gao2019FlyingPC}. The SFCs are usually generated by the \textit{front end} of a trajectory planning framework as an abstraction of the concerned configuration space, such as the Parallel Convex Cluster Inflation (PCCI)~\cite{Gao2020TeachRepeatReplanAC}, the Regional Inflation by Line Search (RILS)~\cite{Liu2017PlanningDF}, the Safe‐Region RRT* Expansion~\cite{Gao2019FlyingPC}, or the Iterative Regional Inflation by Semidefinite programming (IRIS)~\cite{Deits2015ComputingIRIS}. We assume that an SFC, either polyhedron-shaped or ball-shaped, is already obtained here as in (\ref{eq:FreeSpaceConvexDecomposition}) and (\ref{eq:LocallySequentialConnection}). Optimizing dynamically feasible trajectories within SFCs is usually taken as a \textit{back end} of such kind of frameworks.

\begin{figure}[t]
    \centering
    \includegraphics[width=1.0\columnwidth]{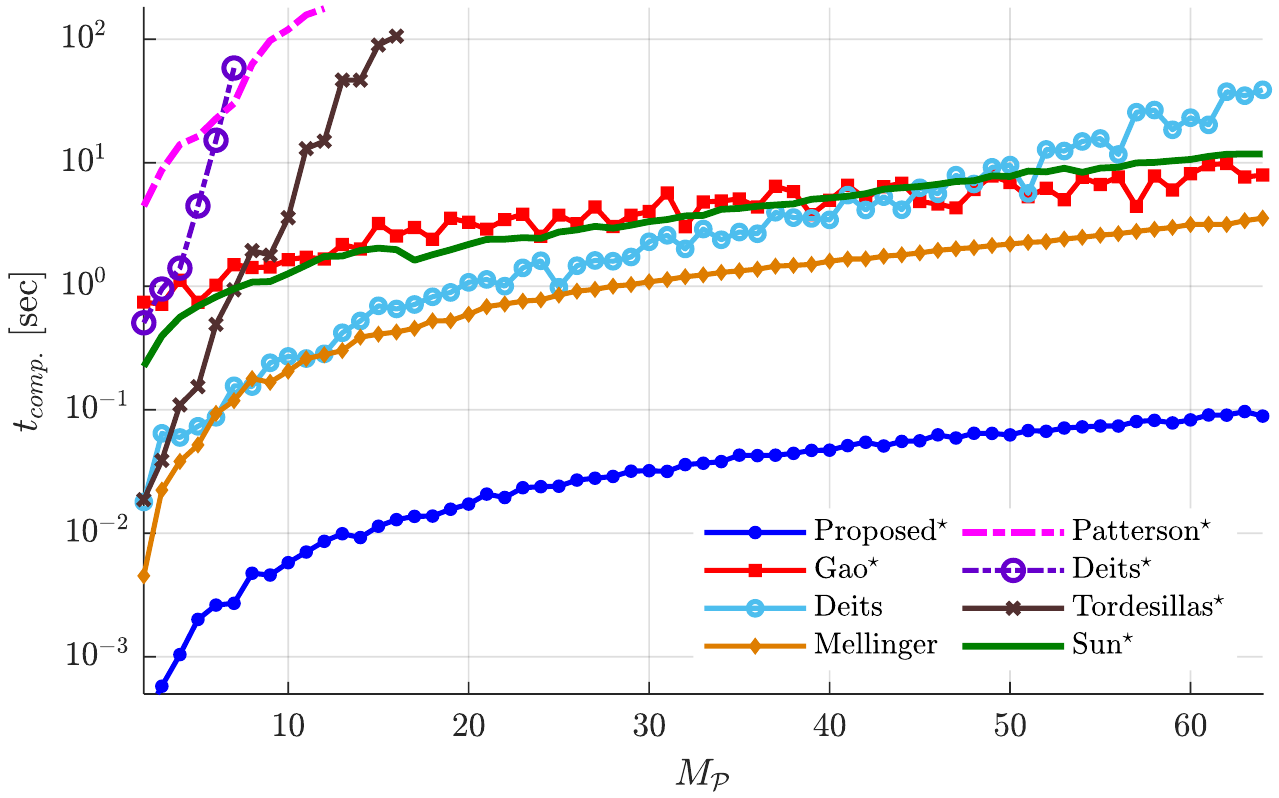}
    \caption{Benchmark on computation efficiency. The Proposed$^\star$ one outperforms other methods by orders of magnitudes. Methods from Tordesillas$^\star$ and Deits$^\star$ suffer from combinatorial explosion, but they are faster than Patterson$^\star$ on small-scale problems. Methods not supporting time or interval optimization consume less computation time at the sacrifice of quality. \label{fig:PolytopesSFCOptBenchmark}}
    \vspace{-0.0cm}
\end{figure}

\begin{figure}[t]
    \centering
    \includegraphics[width=0.95\columnwidth]{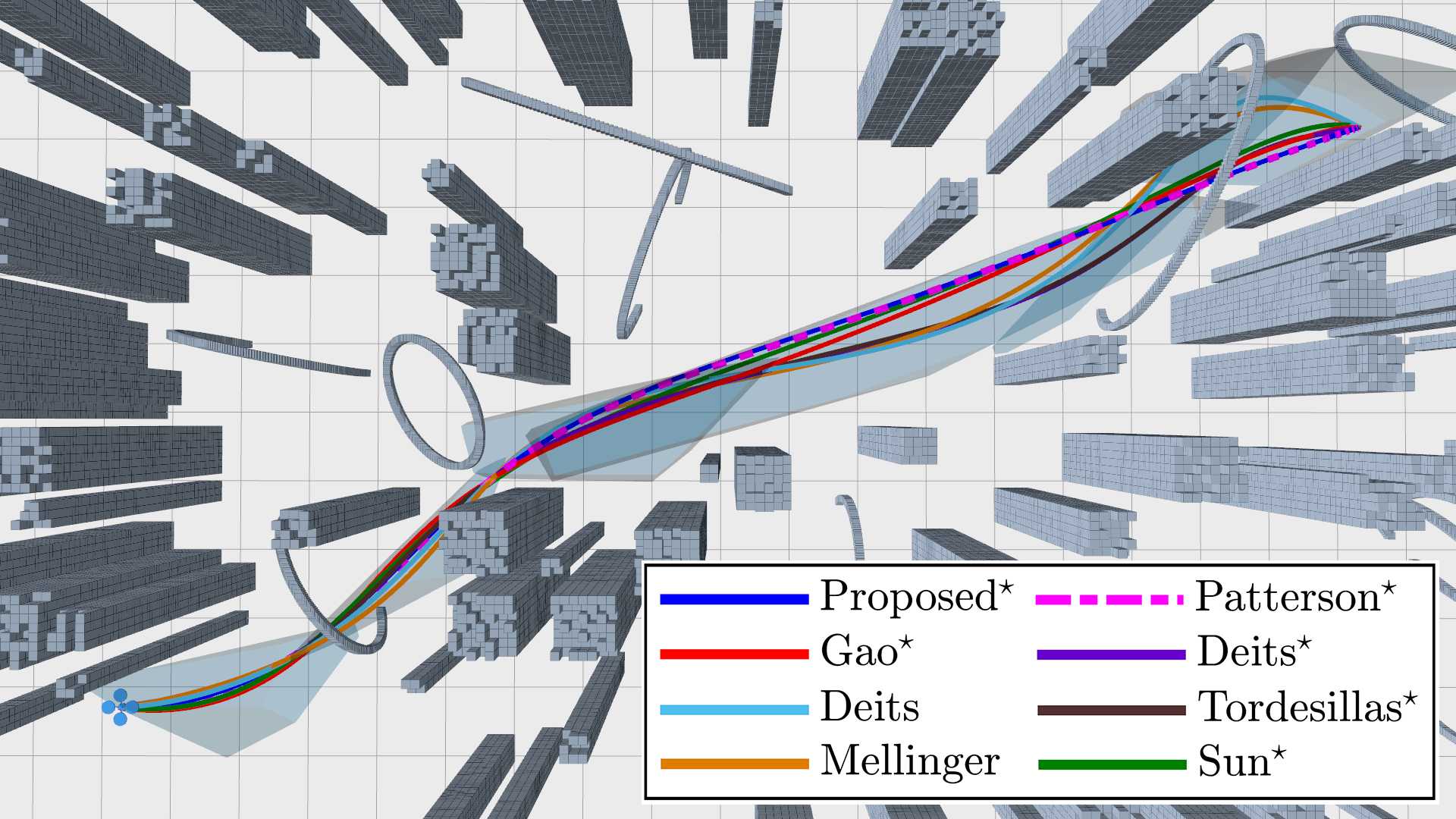}
    \caption{Geometrical profiles of trajectories generated by different methods in a random environment. The trajectory from the Proposed$^\star$ one is closer to the ground truth from Patterson$^\star$ than all other specialized ones.\label{fig:sfcTrajOptBenchmarkViz}}
        \vspace{-0.2cm}
\end{figure}

As is illustrated in Fig.~\ref{fig:TrajSFCs}, we consider two kinds of SFCs. Each convex primitive is assigned with $K$ trajectory pieces, thus $M=M_\mathcal{P}K$. The $i$-th trajectory piece $p_i(t):[0,T_i]\mapsto\mathbb{R}^3$ is assigned to $\mathcal{P}_{\ceil{i/K}}$. The intermediate point assignment of $\mathfrak{T}_{\mathrm{MINCO}}$ is also determined. Applying the constraint elimination, direct constraints on $\mathbf{T}$ and $\mathbf{q}$ are automatically satisfied, such as $\mathbf{T}\in\mathbb{R}_{>0}$ for $\rho_s(T)=k_\rho T$, $\norm{\mathbf{T}}_1<T_\Sigma$ for $\rho_f$, as well as $q_i\in\mathcal{P}_{\ceil{i/K}}\cap\mathcal{P}_{\ceil{(i+1)/K}}$ for all $i$. Constraints $\mathcal{G}$ are specified as follows to ensure both safety and dynamic limits:
\begin{equation}
\label{eq:SimpleContinuousConstraints}
\begin{cases}
p_i(t)\in\mathcal{P}_{\ceil{i/K}}, &\forall t\in[0,T_i],~\forall 1\leq i\leq M,\\
\norm{p_i^{(1)}(t)}^2\leq v_{max}^2, &\forall t\in[0,T_i],~\forall 1\leq i\leq M,\\
\norm{p_i^{(2)}(t)}^2\leq a_{max}^2, &\forall t\in[0,T_i],~\forall 1\leq i\leq M,\\
\end{cases}
\end{equation}
where $v_{max}$ and $a_{max}$ are dynamic limits. Then, the trajectory generation in $\tilde{\mathcal{F}}$ can be accomplished by solving the unconstrained NLP in (\ref{eq:Relaxation}). We show some optimization results in Fig.~\ref{fig:TrajectoryInSFC} for randomly generated SFCs. Both the polyhedron-shaped and ball-shaped SFCs are handled.

To further evaluate the performance of our method, we benchmark several existing methods over polyhedron-shaped SFCs. Technical details for all methods are listed as here:

\begin{itemize}
    \item Proposed$^\star$: Jerk energy minimization is conducted with either linear time regularization or fixed total time. Constraints in (\ref{eq:SimpleContinuousConstraints}) are enforced.
    \item Patterson$^\star$~\cite{Patterson2014GPOPS2}: The LQMT problem of a jerk-controlled system is solved using Gauss pseudospectral method. Each trajectory phase is confined within one polytope. Dynamic limits are enforced through path constraints.
    \item Gao$^\star$~\cite{Gao2020TeachRepeatReplanAC}: A geometrical curve is optimized via QP formed by jerk energy cost and linear safety constraints on control points of B\'ezier curves. Its temporal profile is then optimized by an SOCP for TOPP under (\ref{eq:SimpleContinuousConstraints}).
    \item Deits$^\star$~\cite{Deits2015EfficientMISOS}: The jerk energy and interval allocation of a trajectory is optimized by an MISOCP. Safety constraints and dynamic limits on $L_1$-norm of trajectory derivatives are exactly enforced through SOS conditions. Each trajectory piece is a $3$-degree polynomial.
    \item Deits: Details are the same as Deits$^\star$ except that intervals are allocated heuristically. No integer variable exists.
    \item Tordesillas$^\star$~\cite{Tordesillas2019Faster}: Details are the same as Deits$^\star$ except that safety is ensured by linear constraints on control points of B\'ezier curves. An MIQP is solved instead. The total time is determined by a well-designed algorithm.
    \item Mellinger~\cite{Mellinger2011MinimumST}: A trajectory is optimized in a QP formed by quadratic cost on jerk and linear safety constraints on sampled points. Its time allocation is generated with trapezoidal velocity profiles. Dynamic limits in (\ref{eq:SimpleContinuousConstraints}) are enforced by time scaling \cite{Liu2017PlanningDF}.
    \item Sun$^\star$~\cite{Sun2020BilevelTO}: A trajectory is optimized in a bilevel framework. The low-level QP is exact the same as Tordesillas$^\star$ except that $6$-degree polynomials are used. Its time allocation is optimized in the upper level optimization using analytical sensitivity of the low-level one.
\end{itemize}

\begin{figure}[t]
    \centering
    \includegraphics[width=1.0\columnwidth]{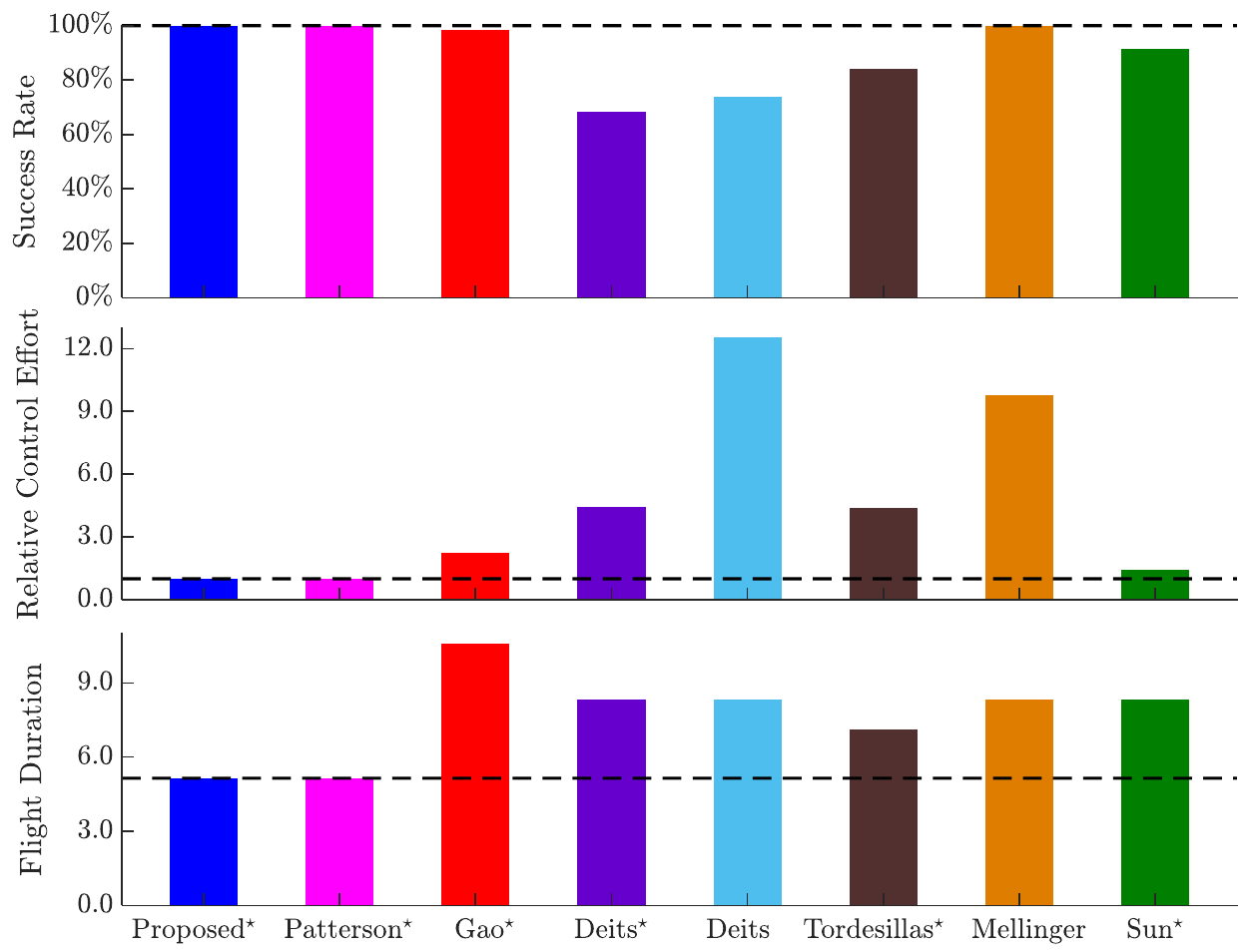}
    \caption{Benchmark on success rates, relative control effort, and flight durations. Methods from Deits$^\star$ and Tordesillas$^\star$ have relatively low success rates because they optimize interval allocation which involves integer variables. Methods from Deits and Mellinger have relatively large control effort because optimization on time or interval allocation is not supported. Note that some methods need preassigned total flight duration. \label{fig:sfcOptOverviewPlot}}
    \vspace{-0.2cm}
\end{figure}

\begin{figure}[ht]
    \begin{center}
        \subfigure[\label{fig:LongRangeFlightViz} Trajectories from different methods within a long SFC of the office.]
        {\includegraphics[width=1.0\columnwidth]{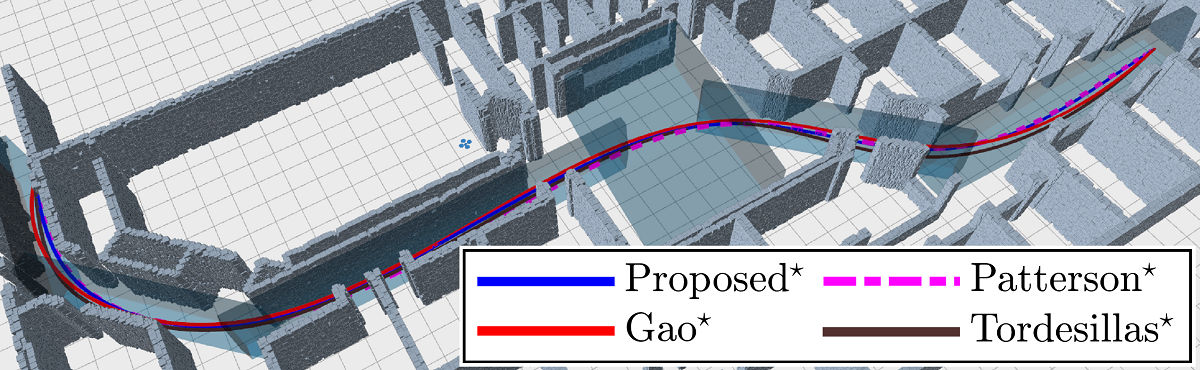}}
        \subfigure[\label{fig:ConstrainedVelAccNorm} The velocity and acceleration magnitude for different methods.]
        {\includegraphics[width=1.0\columnwidth]{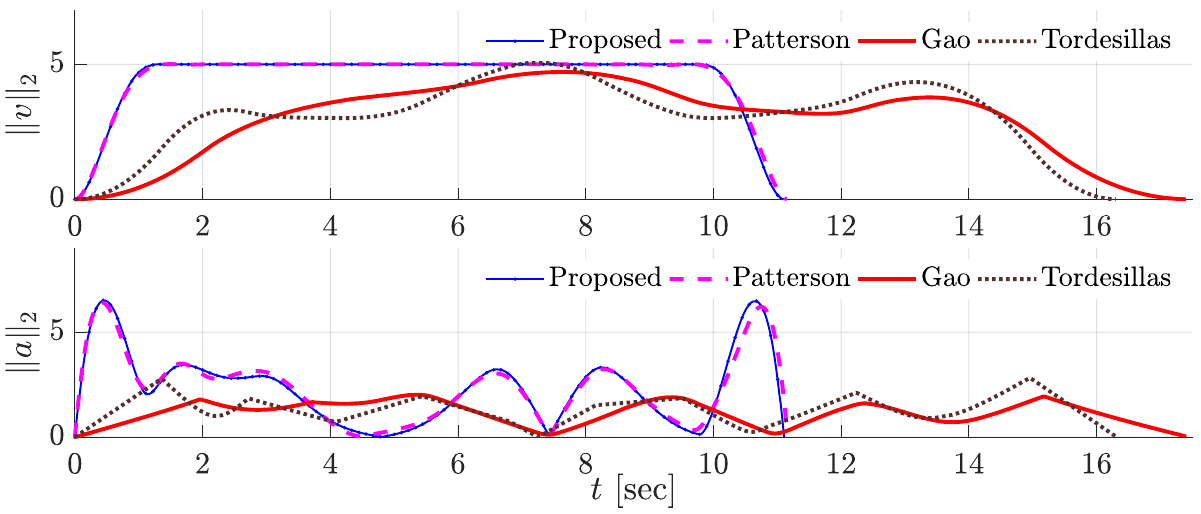}}
    \end{center}
    \caption{\label{fig:LongRangeFlightProfile} Trajectory profiles with large weight on time regularization. Only the Proposed$^\star$ one and the ground truth from Patterson$^\star$ generate persistently tight trajectories, considering the continuous-time constraints on norms of derivatives.}
    \vspace{-0.2cm}
\end{figure}

A method is asterisked if it supports optimizing time allocation or interval allocation. Dynamic limits are treated as the same for either $L_1$-norm or $L_2$-norm. Thus, constraints are indeed much tighter on methods from the Proposed$^\star$, Gao$^\star$, Mellinger, and Patterson$^\star$, that restrict $L_2$-norm of derivatives. As for total time, Deits$^\star$ and Sun$^\star$ need preassigned values, thus, we set their total time using trapezoidal velocity profiles. Patterson$^*$ handles the original problem directly, taking advantage of the exponential convergence of global collocation~\cite{Patterson2014GPOPS2}. Therefore, we take its trajectory as the ground truth.

The benchmark is conducted in randomly generated environments, one of which is shown in Fig.~\ref{fig:sfcTrajOptBenchmarkViz}. The corridor size $M_\mathcal{P}$ ranges from $2$ to $64$ where $10$ SFCs are generated for each size. The facet number of $\mathcal{P}^\mathcal{H}_i$ ranges from $8$ to $30$. We set $K=1$, $k_\rho = 1024.0$, $v_{max}=5.0m/s$, $a_{max}=7.0m/s^2$, $\kappa_i = 16$, the timeout as $3$ minutes, and the relative tolerance as $10^{-4}$. Static boundary conditions are assumed. As for programs, methods from the Proposed$^\star$ and Mellinger are both implemented in C++11 with a single thread for sequential computing. The general-purpose solver~\cite{Patterson2014GPOPS2} is directly adopted for Patterson$^\star$. A C++11 re-implementation of the original MATLAB one~\cite{Deits2015EfficientMISOS} is adopted for both Deits$^\star$ and Deits. Methods from Gao$^\star$, Tordesillas$^\star$, and Sun$^\star$ are taken from their open-source implementations. Besides, the commercial solver Gurobi~\cite{GurobiOPT} is used by Deits$^\star$, Deits, and Tordesillas$^\star$ with $6$ threads enabled for parallel computing. The commercial solver MOSEK~\cite{MosekOPT} is used by both Gao$^\star$ and Sun$^\star$.

The computation efficiency is provided in Fig.~\ref{fig:PolytopesSFCOptBenchmark}. Clearly, Deits$^\star$ and Tordesillas$^\star$ have to optimize integer variables, thus possessing approximately exponential complexity as $M_\mathcal{P}$ grows. Nonetheless, Tordesillas$^\star$ achieves acceptable performance for small $M_\mathcal{P}$ by using a more conservative but easier constraints than Deits$^\star$. Methods from Deits and Mellinger achieve satisfactory performance by tackling time allocation or interval allocation heuristically. Methods from Gao$^\star$ and Sun$^\star$ performs well in their scalability while the overhead for small $M_\mathcal{P}$ does not suit real-time applications. The method from Patterson$^\star$ suits offline scenarios where computation time is far less important than solution quality. The Proposed$^\star$ method improves the speed by more than an order of magnitude, while retaining optimization on time allocation.

The geometrical profile of trajectories is provided in Fig.~\ref{fig:sfcTrajOptBenchmarkViz}. Methods that do not optimize time or interval allocation are more likely to deviate from the ground truth. Trajectories by Deits$^\star$ and Tordesillas$^\star$ also deviate a lot from the ground truth because of the limited resolution of intervals. The success rates, relative control effort, and flight durations are all given in Fig.~\ref{fig:sfcOptOverviewPlot}. Interval allocation based methods have relatively low success rates. All control effort are normalized by that of the Proposed$^\star$ one, whose total time is fixed accordingly for fairness. Clearly, heuristic time or interval allocation causes relatively high control effort. Besides, the flight duration from the Proposed$^\star$ method is the closest to the ground truth.

To explore the temporal profile, we also test four complete methods in a long-distance flight as shown in Fig.~\ref{fig:LongRangeFlightProfile}. The trajectory from Gao$^\star$ is less aggressive than the others. The trajectory from Tordesillas$^\star$ has discontinuous jerk since $3$-degree polynomials are used. The results from the Proposed$^\star$ one have nearly the same quality as the ground truth. Profiting from the effectiveness of the penalty functional, our method can also achieve the maximum speed persistently.

In simulations, our method achieves comparable trajectory quality to the collocation based method~\cite{Patterson2014GPOPS2} in both the geometrical and temporal profile, while having superior computational speed against all benchmarked ones.

\begin{figure}[t]
    \begin{center}
        \subfigure[\label{fig:HeavyFPV}Hardware settings of the vehicle.]
        {\includegraphics[width=0.49\columnwidth]{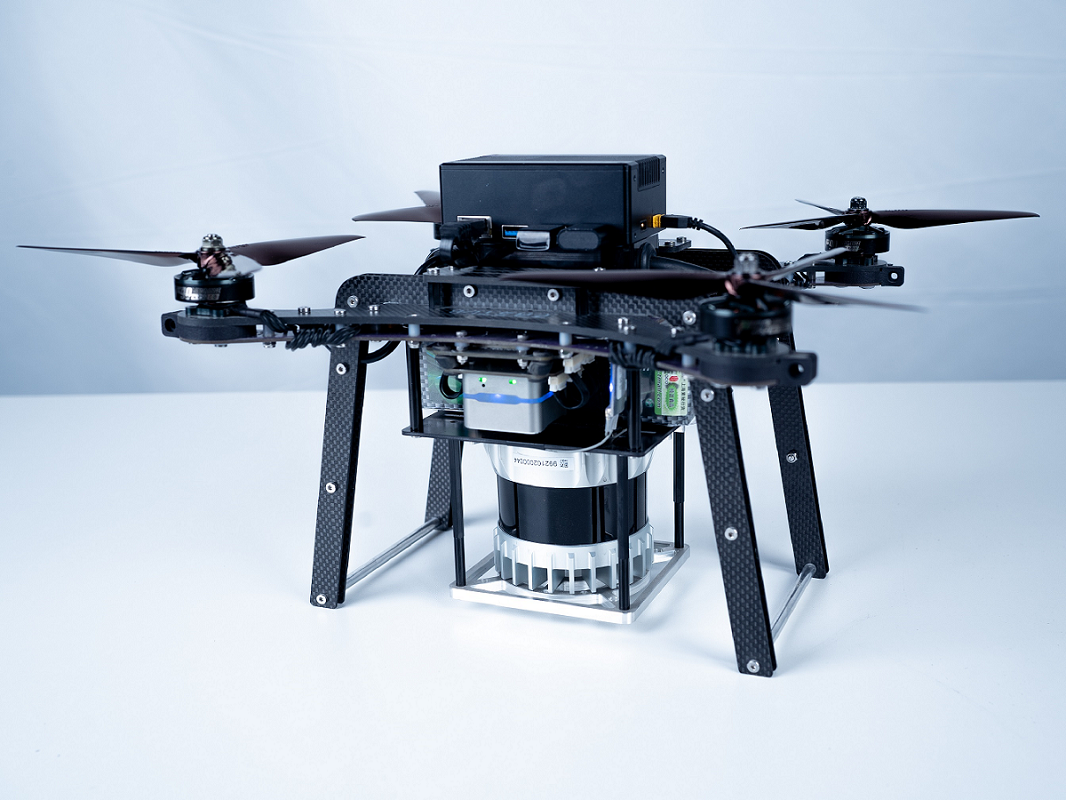}}%
        \hspace{0.02cm}
        \subfigure[\label{fig:HighSpeedFirstPersonView}An onboard camera image.]
        {\includegraphics[width=0.49\columnwidth]{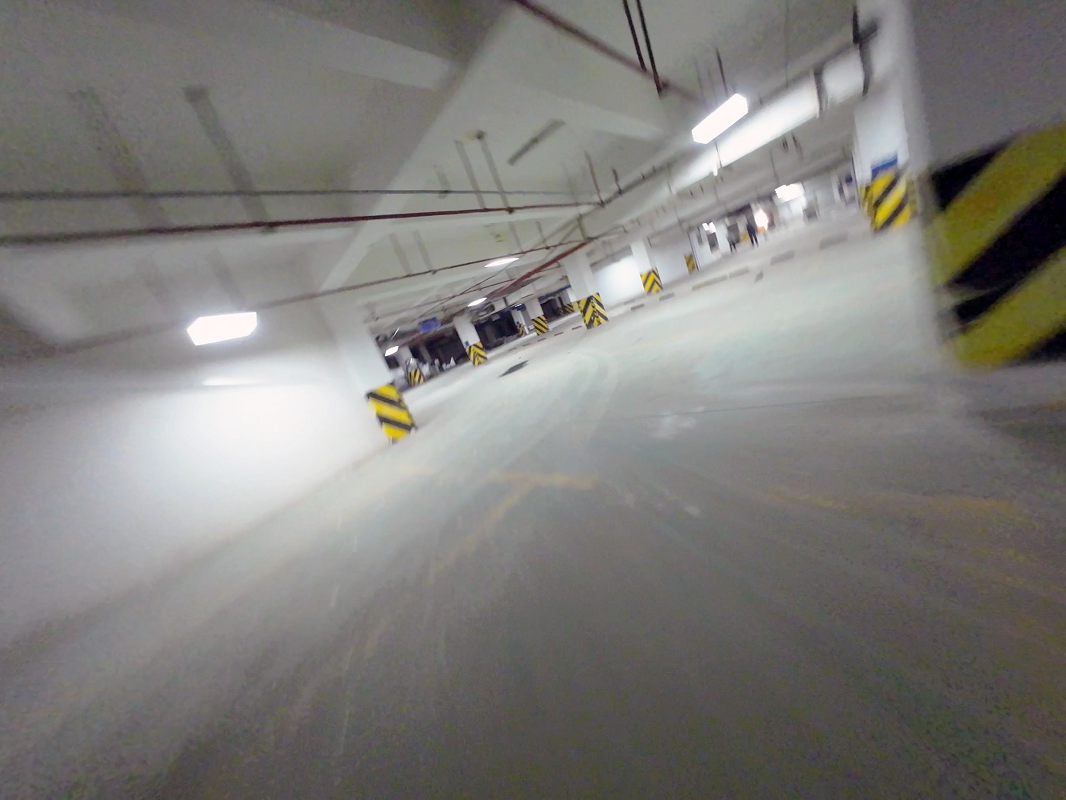}}
    \end{center}
    \caption{\label{fig:HighSpeedFlight}Left: Our autonomous multicopter equipped with an onboard computer and a LiDAR. Right: A snapshot of the first person view during our high-speed flight experiment in a garage.}
    \vspace{-0.2cm}
\end{figure}

We conduct experimental validation of our framework by enabling high-speed autonomous flights of a multicopter in an underground garage. All computations are performed by an onboard computer with an Intel Core i7-8550U CPU, which is shown in Fig.~\ref{fig:HighSpeedFlight}. We utilize FAST-LIO2~\cite{Xu2021FastLIO2} for highly robust LiDAR-based localization. Polyhedron-shaped safe flight corridors are generated by following~\cite{Gao2020TeachRepeatReplanAC}. Our method generates a $343.57m$ global trajectory in only $0.29s$ in the first track. The planning results are provided in Fig.~\ref{fig:HighSpeedGlobalView}. We believe this computation time validates our framework's efficiency even for long-distance trajectory planning. In this experiment, the vehicle speed reaches $12.0m/s$ while ensuring its safety among obstacles and keeping a low thrust-to-weight ratio. We further compare planning results for different parameters on $v_{max}$. It turns out that our method can always squeeze the capability of $v_{max}$ and $a_{max}$ if $k_\rho$ is large. More details about this experiment are given in the attached multimedia.

\subsection{\texorpdfstring{$\mathrm{SE}(3)$}{SE(3)} Motion Planning in Quotient Space}

\begin{figure*}[ht]
    \begin{center}
        \subfigure[\label{fig:LayoutForPolytopes}SFC layout]
        {\includegraphics[width=0.30\columnwidth]{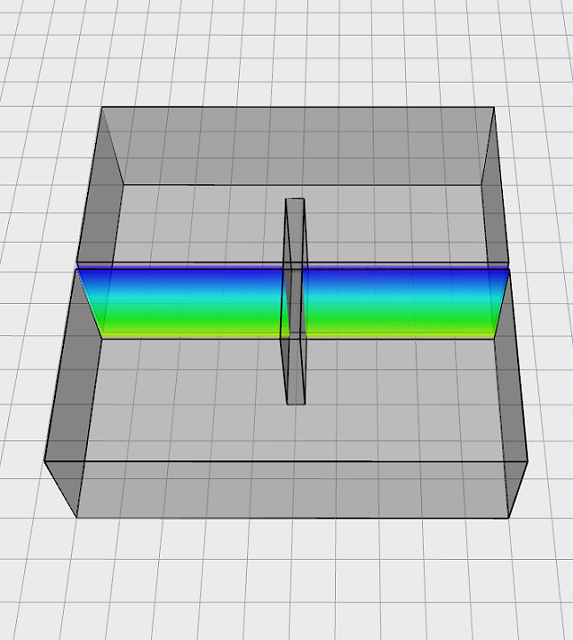}}%
        \hspace{0.01cm}
        \subfigure[\label{fig:NarrowGap30Deg}$\phi_{gap}=30^\circ$]
        {\includegraphics[width=0.30\columnwidth]{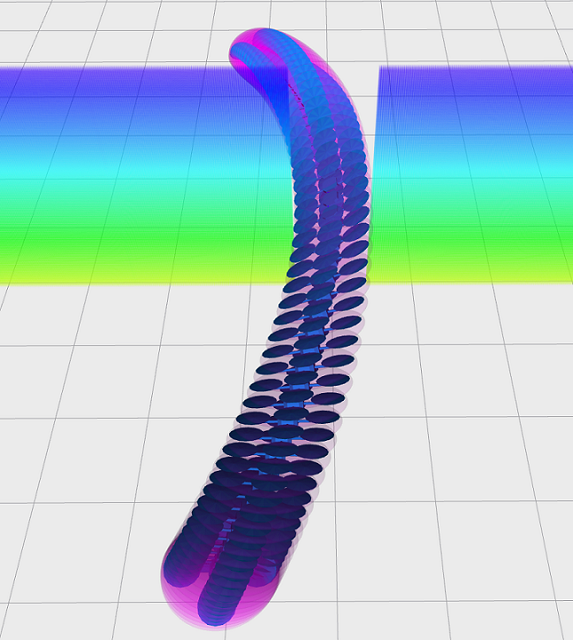}}%
        \hspace{0.01cm}
        \subfigure[$\phi_{gap}=45^\circ$]
        {\includegraphics[width=0.30\columnwidth]{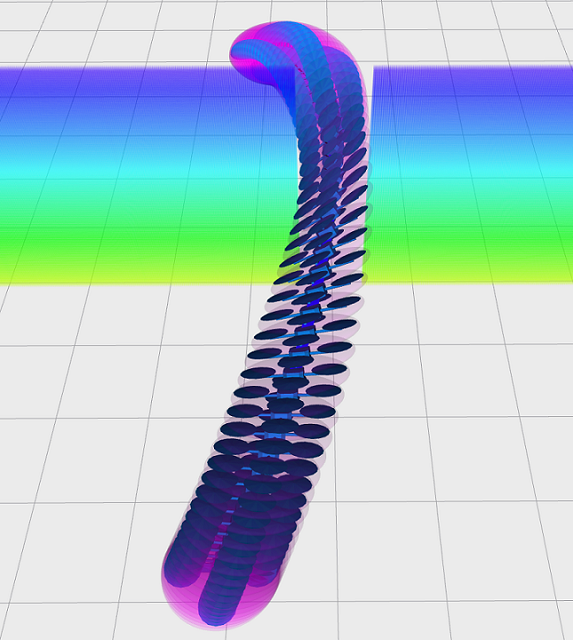}}%
        \hspace{0.01cm}
        \subfigure[$\phi_{gap}=60^\circ$]
        {\includegraphics[width=0.30\columnwidth]{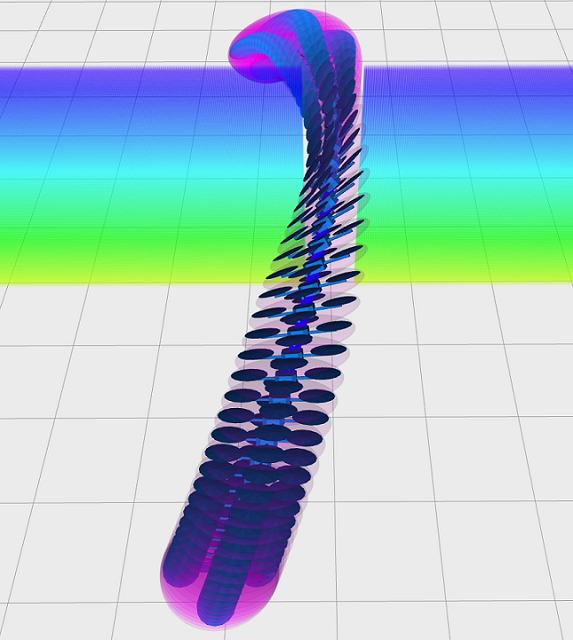}}%
        \hspace{0.01cm}
        \subfigure[$\phi_{gap}=75^\circ$]
        {\includegraphics[width=0.30\columnwidth]{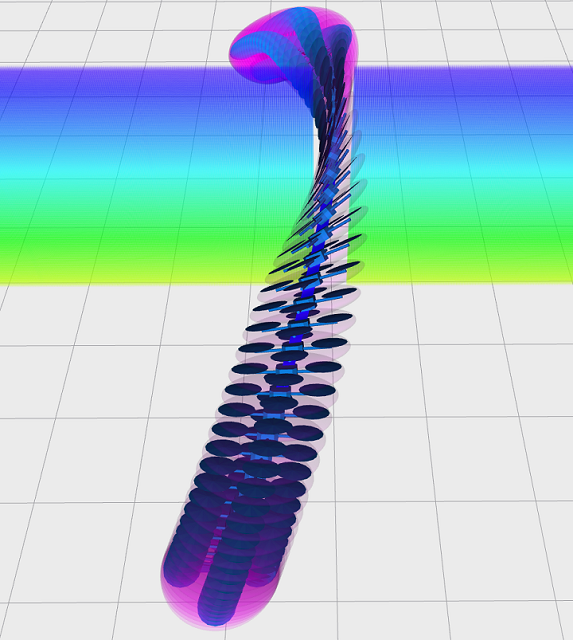}}%
        \hspace{0.01cm}
        \subfigure[\label{fig:NarrowGap85Deg}$\phi_{gap}=85^\circ$]
        {\includegraphics[width=0.30\columnwidth]{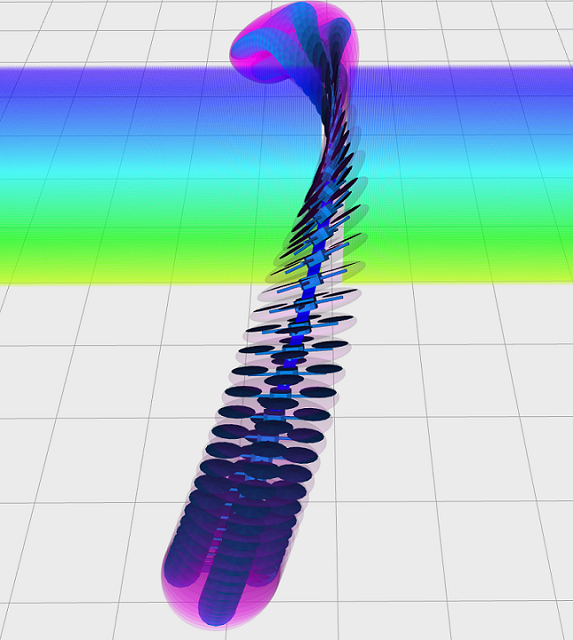}}
        \vspace{0.0cm}
        \subfigure[\label{fig:SE3TrajectoryConstraintProfile}The magnitude of angular velocity and the normalized thrust for different $\mathrm{SE}(3)$ trajectories.]
        {\includegraphics[width=1.95\columnwidth]{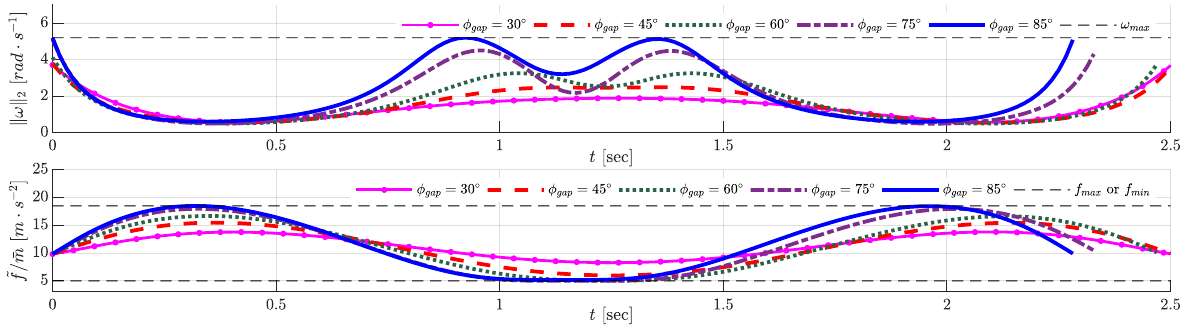}}
    \end{center}
    \caption{\label{fig:SE3PlanningForNarrowGaps} SFC layout for a narrow gap, $\mathrm{SE}(3)$ trajectories under different widths of gaps, and control inputs for different motions. As the gap becomes narrower, larger angular rates and higher thrust are needed for a safe flight. The proposed method persistently enforces limits on these control inputs under different settings while retains millisecond-level computation time.}
\end{figure*}

In dense obstacle environments, safe motions often do not exist for narrow spaces unless a multicopter agilely adjusts its attitude to avoid collisions. Therefore, we consider $\mathrm{SE}(3)$ motion planning in our framework. An important property for planning in $\mathrm{SE}(3)$ as a manifold with structure $\mathbb{R}^3\times\mathrm{SO}(3)$ is the necessary condition that a feasible pose for a rigid body at least contains a feasible translation for a dimensionless point. The subspace $\mathbb{R}^3$ is referred to as a \textit{Quotient Space}~\cite{Orthey2019REQST}.
Exploiting such a quotient-space decomposition~\cite{Orthey2018QSMP}, we consider the rotational safety based on a translational trajectory, instead of handling them jointly. Therefore, we can relax assumptions for (\ref{eq:FreeSpaceConvexDecomposition}) such that $\tilde{\mathcal{F}}$ is just a free region in the quotient space without considering multicopter's actual size.

We consider simplified quadcopter dynamics whose configuration is defined by its translation $p$ and rotation $\mathbf{R}$:
\begin{equation}
    \label{eq:SimplifiedDynamics}
    \left\{
    \begin{aligned}
    \dot{p}&=v,\\
    \bar{m}\dot{v}&=-\bar{m}\bar{g}e_3+\mathbf{R}\tilde{f}e_3,\\
    \dot{\mathbf{R}}&=\mathbf{R}\hat{\omega}.
    \end{aligned}
    \right.
\end{equation}
where $e_i$ is the $i$-th column of $\mathbf{I}_{3}$, $\bar{g}$ the gravitational acceleration, $\tilde{f}$ the thrust, $\omega$ the body rate input, and $\bar{m}$ the vehicle mass. The hat map $\hat{\cdot}:\mathbb{R}^3\mapsto\mathbb{R}^{3\times3}$ is defined by $\hat{a}b=a\times b$ for all $a,b\in\mathbb{R}^3$. Moreover, we model the geometrical shape of a symmetric multicopter as its outer L\"owner-John ellipsoid~\cite{Toth2017HandbookDCG},
\begin{equation}
\label{eq:EllipsoidTrajectory}
\mathcal{E}(t)=\cBrac{\mathbf{R}(t)\mathbf{Q}x+p(t) ~\Big|~\norm{x}_2\leq 1}
\end{equation}
where $\mathbf{Q}=\Diag\cbrac{r_e,r_e,h_e}$. $r_e$ and $h_e$ are the radius and the height of multicopter, respectively.

A feasible motion satisfies the safety and dynamic limits. By safety we mean $\mathcal{E}(t)\subset\tilde{\mathcal{F}},~\forall t\in[0,T]$, where $T$ is the total time of the motion. However, this safety constraint is indeed hard to enforce. We further make an assumption on $\mathcal{F}$ that all $\mathcal{P}_i^\mathcal{H}$ or their intersections are able to contain at least one ellipsoid of the multicopter. This assumption can be reasonably satisfied when $\tilde{\mathcal{F}}$ is generated incrementally. As a result, we can ensure safety through
\begin{equation}
\label{eq:RelaxedSafetyConstraintSE3}
\forall t\in[0,T],~\exists1\leq i\leq M_p,~\mathit{s.t.}~\mathcal{E}(t)\subset\mathcal{P}^\mathcal{H}_i.
\end{equation}
By dynamic limits we mean the velocity, thrust and body rate should have reasonable magnitude,
\begin{equation}
\label{eq:DynamicLimitsSE3}
\begin{cases}
\norm{p^{(1)}(t)}^2\leq v_{max}^2, &\forall t\in[0,T],\\
f_{min}\leq\tilde{f}(t)\leq f_{max}, &\forall t\in[0,T],\\
\norm{\omega(t)}_2^2\leq \omega_{max}^2, &\forall t\in[0,T].
\end{cases}
\end{equation}

Given a quotient-space trajectory $p(t):[0,T]\mapsto\mathbb{R}^3$, state-control trajectories of $p$, $v$, $\mathbf{R}$, and $\omega$ are all algebraically computed by flatness maps $\Psi_x$ and $\Psi_u$ of the dynamics (\ref{eq:SimplifiedDynamics}). The concrete forms of the algebraic maps are detailed in~\cite{Mellinger2011MinimumST} with fixes on the body rate~\cite{Faessler2018DiffFRD} for simple quadcopters thus are omitted here. Consequently, the entire $\mathrm{SE}(3)$ trajectory is also obtained. Denote by $\mathbf{R}(t)$ its rotational part. To generate $p(t)$ in $\tilde{\mathcal{F}}$, we follow the methodology of our previous experiment but with different constraints here.

The $i$-th trajectory piece $p_i(t):[0,T_i]\mapsto\mathbb{R}^3$ is assigned to the polytope $\mathcal{P}^\mathcal{H}_j$ defined in (\ref{eq:HPolytopeDescription}) with $j=\ceil{i/K}$. We denote by $\mathcal{E}_i(t)$ the ellipsoid induced by $p_i(t)$ and the corresponding $\mathbf{R}_i(t)$ as is defined in (\ref{eq:EllipsoidTrajectory}). As proposed by Wu et. al.~\cite{Wu2021ExternalFRSMP}, ensuring safety by confining the vehicle ellipsoid in a polyhedron also has an analytical form. Specifically,
\begin{equation}
\mathcal{E}_i(t)\in\mathcal{P}^\mathcal{H}_j,~j=\ceil{i/K},~\forall t\in[0,T_i],
\end{equation}
is equivalent to
\begin{subequations}
\label{eq:SafetyOnEllipsoid}
\begin{align}
\sBrac{\sbrac{\mathbf{A}_j\mathbf{R}_i(t)\mathbf{Q}}^2\mathbf{1}}^{\frac{1}{2}}+&\mathbf{A}_j p_i(t)-b_j\preceq\mathbf{0},\\
j=\ceil{i/K},&~\forall t\in[0,T_i],
\end{align}
\end{subequations}
where $\mathbf{1}$ is an all-ones vector with an appropriate length, $[\cdot]^2$ and $[\cdot]^{\frac{1}{2}}$ are entry-wise square and square root, respectively. Finally, we obtained the state-control constraint $\mathcal{G}_\mathcal{D}$ in (\ref{eq:OriginalStateControlConstraints}) for the considered dynamics in (\ref{eq:SimplifiedDynamics}). We choose to minimize $s=3$ because it is the highest derivative order for flatness of (\ref{eq:SimplifiedDynamics}) and also helpful in smoothing the angular rate.

We validate our framework in simulations where a relatively large quadcopter is required to fly through a narrow gap with much smaller width as shown in Fig.~\ref{fig:SE3PlanningForNarrowGaps}. The settings are $r_e=0.5m$, $h_e=0.1m$, $f_{min}/\bar{m}=5.0m/s^2$, $f_{max}/\bar{m}=18.5m/s^2$, $v_{max}=6.5m/s$, and $\omega_{max}=5.2rad/s$. Intuitively, the quadcopter can only achieve no more than $1$ revolution per second ($rps$), making it less agile than small quadcopters~\cite{Kushleyev2013TowardsSAMQ} that can achieve $5rps$. The computation times, required roll angles, and $\mathrm{SE}(3)$ motions for different $d_{gap}$ are shown in the Table~\ref{Tab:ComputationTimeForGaps} and Fig.~\ref{fig:NarrowGap30Deg}-\ref{fig:NarrowGap85Deg}.

\begin{table}[ht]
    \centering
    \caption{\label{Tab:ComputationTimeForGaps}Computation times and roll angles for different gaps}
    \begin{tabular}{c|c|c|c|c|c}
        \hline
        $d_{gap}$           & $0.88m$     & $0.76m$     & $0.60m$     & $0.40m$     & $0.25m$     \\ \hline
        $\phi_{gap}$ & $30^\circ$ & $45^\circ$ & $60^\circ$ & $75^\circ$ & $85^\circ$ \\ \hline
        $t_{comp.}$        & $4.7ms$     & $4.4ms$     & $6.0ms$    & $6.6ms$    & $7.4ms$    \\ \hline
    \end{tabular}
    \vspace{0.0cm}
\end{table}

As the gap becomes narrower, the required roll angle becomes larger and the feasible space becomes smaller in view of dynamic limits. Our method is still able to find all the feasible motions. The superior computation speed makes it possible to solving $\mathrm{SE}(3)$ planning at a high frequency (at least $100Hz$). Constraint functions are visualized in Fig.~\ref{fig:SE3TrajectoryConstraintProfile}. The body rate and thrust satisfy dynamic limits all the time. The continuous-time tightness of $f_{min}$ for $\phi_{gap}\in\cbrac{60^\circ, 75^\circ, 85^\circ}$ shows the effectiveness of our penalty functional.

\begin{figure}[h]
    \begin{center}
        \subfigure[\label{fig:QuadrotorCfg}The custom-made quadcopter.]
        {\includegraphics[width=0.572\columnwidth]{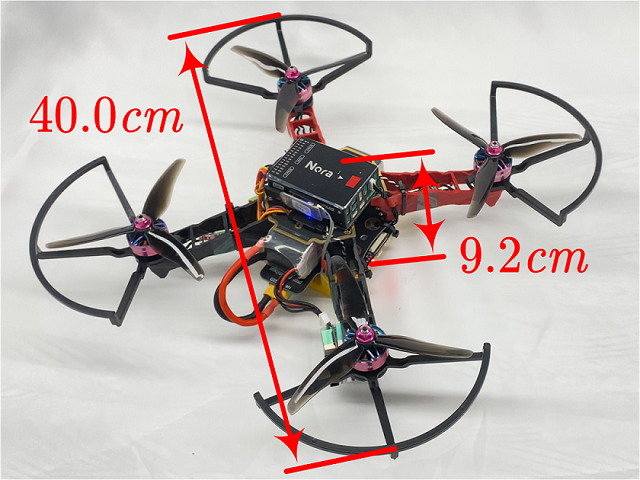}}%
        \hspace{-0.0cm}
        \subfigure[\label{fig:WindowCfg}The window.]
        {\includegraphics[width=0.24\columnwidth]{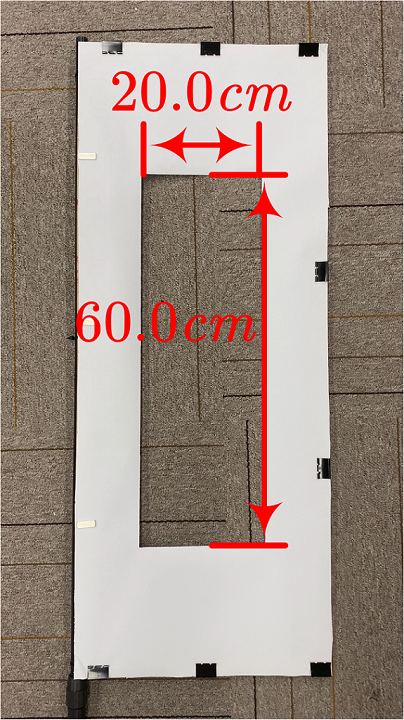}}
    \end{center}
    \caption{\label{fig:QuadrotorWindowCfg}Sizes of the quadcopter and the narrow window.}
\end{figure}

\begin{figure*}[ht]
    \begin{center}
        \subfigure[\label{fig:InteractiveScenario}The interactive scenario.]
        {\includegraphics[width=0.63\columnwidth]{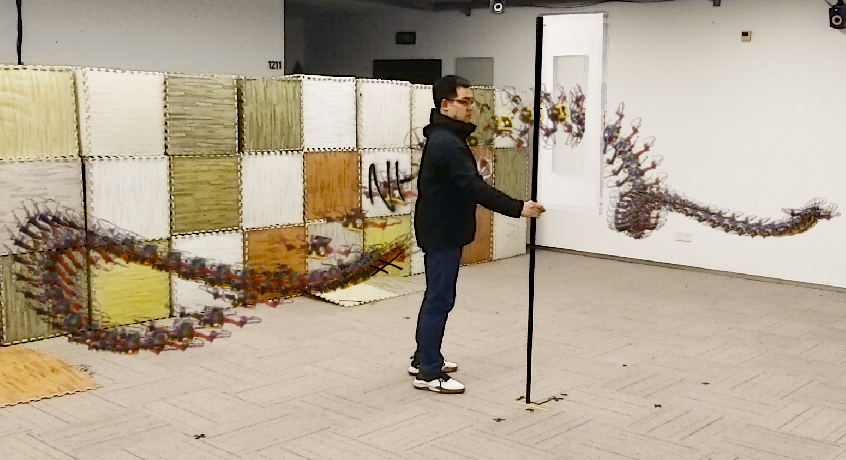}}%
        \hspace{0.01cm}
        \subfigure[\label{fig:DoubleSnapShotsMarked}Flying through two consecutive windows.]
        {\includegraphics[width=0.63\columnwidth]{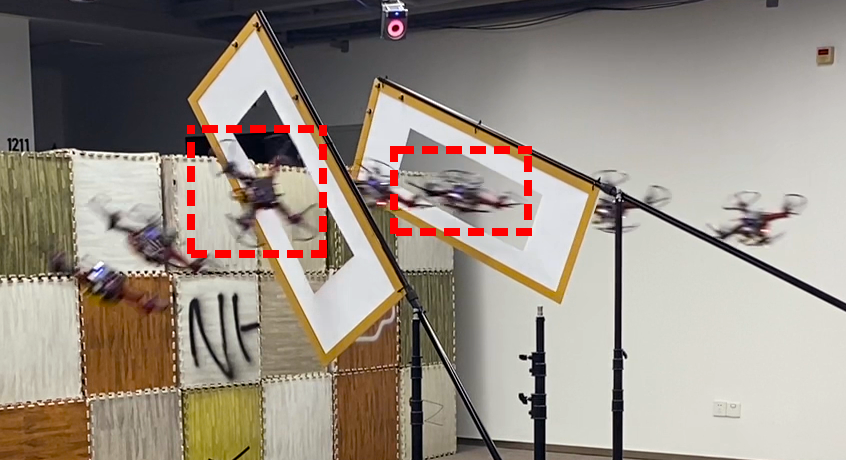}}%
        \hspace{0.01cm}
        \subfigure[\label{fig:TripleSnapShotsMarked}Flying through three consecutive windows.]
        {\includegraphics[width=0.63\columnwidth]{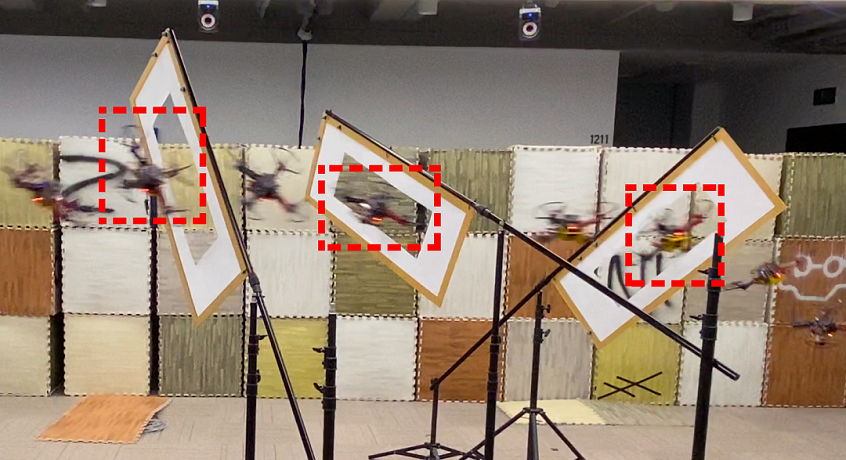}}
        \vspace{0.00cm}
        \subfigure[\label{fig:DoublePlanned} $\mathrm{SE}(3)$ planning for two windows.]
        {\includegraphics[width=0.95\columnwidth]{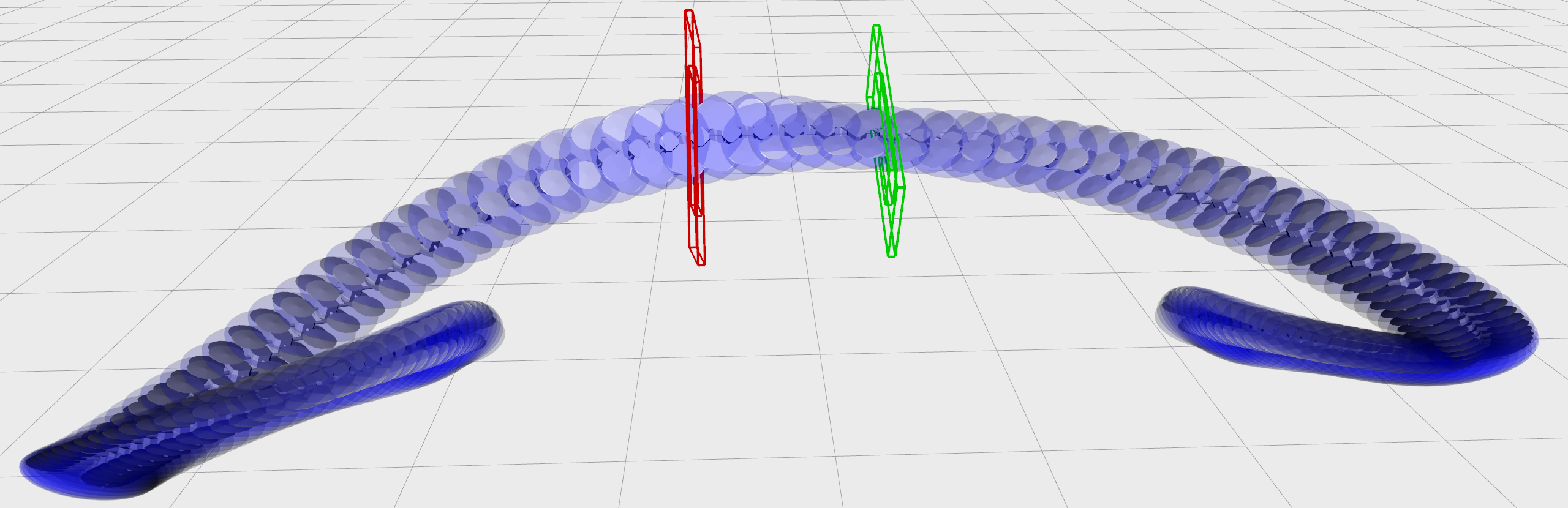}}%
        \hspace{0.01cm}
        \subfigure[\label{fig:TriplePlanned} $\mathrm{SE}(3)$ planning for three windows.]
        {\includegraphics[width=0.95\columnwidth]{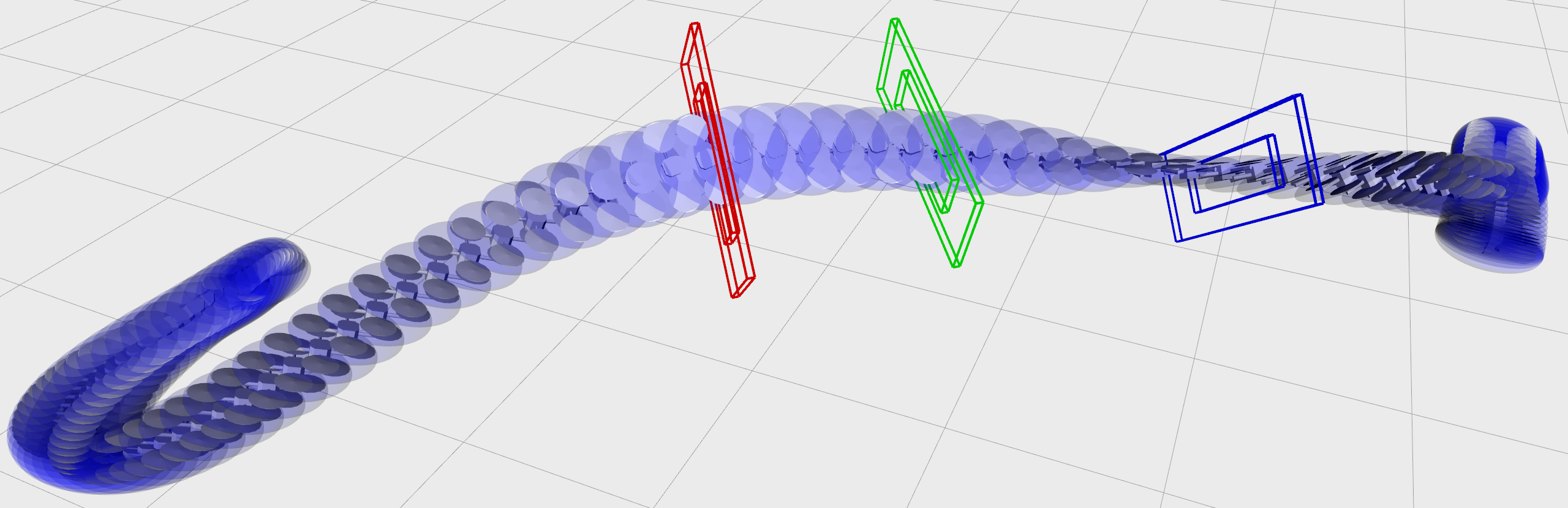}}
        \vspace{0.00cm}
    \end{center}
    \caption{\label{fig:FlyThroughMultipleWindows} Experiment results for three $\mathrm{SE}(3)$ planning scenarios. The Fig.~\ref{fig:InteractiveScenario} gives a snapshot for the interactive scenario. Fig.~\ref{fig:DoubleSnapShotsMarked}-\ref{fig:TripleSnapShotsMarked} show two snapshots for real flights through consecutive windows. Fig.~\ref{fig:DoublePlanned}-\ref{fig:TriplePlanned} show corresponding $\mathrm{SE}(3)$ trajectories generated by the proposed method.}
\end{figure*}

We evaluate the performance of our planner in a real-world experiment where a quadcopter flies through several narrow windows. Sizes of the quadcopter and windows are given in Fig.~\ref{fig:QuadrotorWindowCfg}. The quadcopter weights $794.2g$. The safety margin of the short side is only $5.4cm$, implying that the feasible motion space is extremely small. The settings are $r_e = 20.0cm$, $h_e = 4.6cm$, $v_{max}=4.0m/s$, $f_{min}/\bar{m} = 3.0m/s^2$, $f_{max}/\bar{m} = 18.0m/s^2$, $\omega_{max} = 6.0rad/s$, and $K=2$. The flying space is a restricted volume of $6.5\times6.0\times2.0m^3$. All poses of narrow windows and the quadcopter are provided by a motion capture system running at $100Hz$. The obstacle-free region $\tilde{\mathcal{F}}$ is geometrically computed for multiple narrow windows in the free volume. The planner is run on an offboard computer where a human operator arbitrarily chooses the goal position. We adopt the control algorithm by Faessler et al.~\cite{Faessler2018DiffFRD} for onboard $\mathrm{SE}(3)$ trajectory tracking.

The first scenario contains consecutive windows with roll angles ranging from $30^\circ$ to $90^\circ$. The quadcopter has to fly through them and reach a randomly selected goal as shown in Fig.~\ref{fig:DoubleSnapShotsMarked}-\ref{fig:TriplePlanned}. The second scenario is an interactive one where a human operator randomly holds a narrow window for real-time planning as given in Fig.~\ref{fig:InteractiveScenario}. The third scenario requires the quadcopter persistently fly back and forth through multiple windows for a long duration as shown in Fig.~\ref{fig:Consecutive} and Fig.~\ref{fig:ConsecutiveSE3Trajectory}. Our planner guides the quadcopter to fly back and forth through windows for about $20.0s$ while ensuring the safety and physical limits all the time. More details about this experiment are given in the attached multimedia.

In this experiment, the short distance between consecutive windows, the small acceleration/deceleration space, and the limited vehicle maneuverability are challenges that our planner must confront. We believe that these results constitute a strong evidence for its constraint fidelity, motion quality, computation efficiency, and robustness. However, we do observe the limitation of optimization-based methods. For example, if two $90^\circ$ windows are asymmetrically placed, a multicopter has to pass them in sequence. Each window only allows two roll angles $\pm90^\circ$. The combinations are $4$ locally optimal maneuvers but only one can be the global optimum. Thus, the other three are shallow local minima inevitable for local methods.

\section{Discussion and Conclusion}

\subsection{Extensions}
Profiting from the flexibility and efficiency, our framework has many applicative and algorithmic extensions. First, no assumption is ever made on concrete forms of vehicle dynamics and $\mathcal{G}_\mathcal{D}$. More accurate dynamics such as the rotor drag~\cite{Faessler2018DiffFRD} can be adopted to fully exploit physical limits via real-time high-fidelity planning and control. Time-dependent constraints for moving obstacles can also be supported by $\mathcal{G}_\mathcal{D}$. Second, our framework is inherently parallelizable to further squeeze its performance. Computation-demanding operations on $I_\mathcal{G}\sbrac{p}$ are independent at each timestamp, thus parallelization can effectively speedup our optimization. Moreover, it is possible to extend our methodology to other vehicle types whose flat-output space overlaps the configuration space.
An example is the fixed-wing aircraft in~\cite{Bry2015AggressiveFO} whose flights are mainly restricted by the trajectory curvature. Bry et al. propose Dubins–Polynomial trajectories~\cite{Bry2015AggressiveFO} for this restriction while the curvature constraint is a special case of $\mathcal{G}$ for MINCO.

To demonstrate the extendibility, we apply our framework to a swarm of multicopters to enable their autonomous navigation in unknown environments. All details of the formulation (\ref{eq:TrajectoryOptimization}) and real-world flights are given in a technical report~\cite{Zhou2021DecentralizedSWARM}.

\subsection{Limitations}
Our framework, like most optimization-based ones, focuses on local solutions of trajectory planning, thus suffering from shallow local minima. This can be alleviated by interleaving sampling-based or graph-search-based strategies into our framework, as proposed in~\cite{Zucker2013Chomp, Campos2017HybridOTP, Natarajan2021InterleavingGSTO}. A major limitation of the framework originates from MINCO itself. If $\mathcal{G}$ exist, optimal solutions cannot in general be represented by polynomial splines, let alone MINCO. Thus optimizing MINCO is just a relaxation to the original problem. However, our results show that MINCO can still represent high-quality solutions comparable to the ground truth, but with several orders of magnitudes faster computing. There are also limitations caused by the penalty functional. To achieve zero constraint violations, an unbounded smoothing factor or penalty weight and an unbounded quadrature resolution are both required. However, small constraint violations are empirically acceptable for multicopter navigation. As a reward, this method does not need initial feasible guesses.

\subsection{Conclusion}
In this article, we proposed a flexible multicopter trajectory planning framework powered by several core features, such as the MINCO trajectory based on our optimality conditions, constraint elimination schemes based on smooth maps, the penalty functional method based on constraint transcription, and the backward differentiation of the flatness maps from flat outputs.
All these components enjoy the efficiency and generality originating from low complexity and less preliminary assumptions. We performed extensive benchmarks against many kinds of multicopter trajectory planning methods to show the speedup over orders of magnitude and the top-level solution quality.
A variety of applications demonstrated the versatility of our framework. We also presented further discussions about several unlisted applications or extensions as future work.

\section{Acknowledgment}
The authors would like to thank Shaohui Yang for his profound insight into the experiment design and applications of this framework, Hongkai Ye and Yuwei Wu for their help in the benchmark, and Yuman Gao, Tiankai Yang, and Neng Pan for the hardware platform for high-speed flights.

\appendix

\subsection{Proof of Sufficiency in Theorem \ref{thm:OptimalityConditions}}
\label{apd:OptimalityConditionsProof}
\begin{proof}
    We consider the space of $M$-piece polynomial $2s$-order splines defined over $[t_0, t_M]$ where consecutive pieces on any $x:[t_0, t_M]\mapsto\mathbb{R}$ satisfy $x_{i-1}^{(j)}(t_i)=x_i^{(j)}(t_i)$ for $0\leq j<\bar{d}_i$ and $1\leq i<M$.  In (\ref{eq:MultistageMinimumControl}), $d_i\leq s$ holds for each $i$. For brevity, we define $D_{i,j}$ as $D_{i,j}=i\cdot s+\sum_{k=1}^{j}{d_k}$. According to Theorem 4.4 in~\cite{Schumaker2007SplineFBT}, this spline space is actually a linear space of dimension $\bar{D}=D_{2,M-1}$.

    Moreover, an explicit basis of the space exists. Based on the original partition $t_0<t_1<\dots<t_M$, we define an \textit{extended partition} $\bar{t}_1\leq\bar{t}_2\leq\dots\leq\bar{t}_{\bar{M}}$ of length $\bar{M}=D_{4,M-1}$ as
    \begin{equation}
    \label{eq:ExtendedPartition}
    \bar{t}_i=\begin{cases} t_0 & \mathit{if}~1\leq i\leq D_{2,0}, \\ t_j & \mathit{if}~D_{2,j-1}<i\leq D_{2,j}, \\ t_M & \mathit{if}~D_{2,M-1}<i\leq\bar{M}. \end{cases}
    \end{equation}
    Based on this extended partition, Theorem 4.9 in~\cite{Schumaker2007SplineFBT} explicitly constructs $\bar{D}$ functions $\cbrac{B_i(t):[t_0,t_M]\mapsto\mathbb{R}}_{i=1}^{\bar{D}}$ which form a basis for the considered spline space.

    Now we consider (\ref{eq:InitialTerminalConditions}) and (\ref{eq:IntermediateConditions}) in the spanned linear space. These conditions specify derivative values on timestamps of the original partition to be interpolated by the basis $\cbrac{B_i(t)}_{i=1}^{\bar{D}}$. We only needs the specified orders along with their timestamps instead of the specified derivative values. Denote by $\tau_i$ the $i$-th specified timestamps, where
    \begin{equation}
    \label{eq:SpecifiedStamps}
    \tau_i=\begin{cases} t_0 & \mathit{if}~1\leq i\leq D_{1,0}, \\ t_j & \mathit{if}~D_{1,j-1}<i\leq D_{1,j}, \\ t_M & \mathit{if}~D_{1,M-1}<i\leq\bar{D}. \end{cases}
    \end{equation}
    Denote by $\nu_i$ the specified order at $\tau_i$, written as
    \begin{equation}
    \label{eq:SpecifiedOrderSequence}
    \nu_i=\begin{cases} i-1 & \mathit{if}~1\leq i\leq D_{1,0}, \\ i-1-D_{1,j-1} & \mathit{if}~D_{1,j-1}<i\leq D_{1,j}, \\ i-1-D_{1,M-1} & \mathit{if}~D_{1,M-1}<i\leq\bar{D}. \end{cases}
    \end{equation}
    Then, the conditions (\ref{eq:InitialTerminalConditions}) and (\ref{eq:IntermediateConditions}) generate a linear equation system on the basis, whose coefficient matrix is
    \begin{equation}
    \mathbf{B}=
    \begin{pmatrix}
    B_1^{(\nu_1)}(\tau_1) & B_2^{(\nu_1)}(\tau_1) & \cdots & B_{\bar{D}}^{(\nu_1)}(\tau_1) \\
    B_1^{(\nu_2)}(\tau_2) & B_2^{(\nu_2)}(\tau_2) & \cdots & B_{\bar{D}}^{(\nu_2)}(\tau_2) \\
    \vdots & \vdots & \ddots & \vdots \\
    B_1^{(\nu_{\bar{D}})}(\tau_{\bar{D}}) & B_2^{(\nu_{\bar{D}})}(\tau_{\bar{D}}) & \cdots & B_{\bar{D}}^{(\nu_{\bar{D}})}(\tau_{\bar{D}})
    \end{pmatrix}.
    \end{equation}
    It is obvious that $\mathbf{B}$ is a square matrix for any possible solution to Theorem \ref{thm:OptimalityConditions} in each dimension.

    According to Theorem 4.67 in~\cite{Schumaker2007SplineFBT}, $\mathbf{B}$ is nonsingular if and only if
    \begin{equation}
    \label{eq:NonsingularityCondition}
    \tau_i\in\delta_i=\begin{cases} [\bar{t}_i, \bar{t}_{i+2s}) & \mathit{if}~\nu_i+\alpha_i-2s\geq0, \\ (\bar{t}_i, \bar{t}_{i+2s}) & \mathit{if}~\nu_i+\alpha_i-2s<0, \end{cases}
    \end{equation}
    holds for any $i=1,\dots,\bar{D}$, where $\alpha_i$ is defined as
    \begin{equation}
    \alpha_i=\cbrac{\mathit{max}~j~:~\bar{t}_i=\dots=\bar{t}_{i+j-1}}.
    \end{equation}
    We show that (\ref{eq:NonsingularityCondition}) is always true in our case. It is obvious that $\alpha_i$ can be computed as
    \begin{equation}
    \label{eq:OffsetSequence}
    \alpha_i=\begin{cases} D_{2,0}-i+1 & \mathit{if}~1\leq i\leq D_{2,0}, \\ D_{2,j}-i+1 & \mathit{if}~D_{2,j-1}<i\leq D_{2,j}. \end{cases}
    \end{equation}
    Combining (\ref{eq:SpecifiedOrderSequence}) and (\ref{eq:OffsetSequence}), we know that $\nu_i<s$ and $\alpha_i\leq s$ always hold for $i>s$, which means
    \begin{equation}
    \begin{cases}
    \nu_i+\alpha_i-2s=0 & \mathit{if}~1\leq i\leq s,\\
    \nu_i+\alpha_i-2s<0 & \mathit{if}~s<i\leq\bar{D}.
    \end{cases}
    \end{equation}
    Thus, the interval $\delta_i$ is computed as
    \begin{equation}
    \delta_i=\begin{cases} [\bar{t}_i, \bar{t}_{i+2s}) & \mathit{if}~1\leq i\leq s, \\ (\bar{t}_i, \bar{t}_{i+2s}) & \mathit{if}~s<i\leq\bar{D}. \end{cases}
    \end{equation}
    Consequently, we have
    \begin{equation}
    \label{eq:FirstCaseInCondition}
    \tau_i=t_0\in[t_0,t_1)\subseteq[\bar{t}_i,\bar{t}_{i+2s})=\delta_i,~1\leq i\leq s.
    \end{equation}
    When $i>s$, we denote $\bar{t}_i=t_k$, $\bar{t}_{i+2s}=t_l$ and $\tau_i=t_j$. As is shown in (\ref{eq:ExtendedPartition}) and (\ref{eq:SpecifiedStamps}), we have
    \begin{equation}
    D_{2,k-1}<i,~(i+2s)\leq D_{2,l},~D_{1,j-1}<i\leq D_{1,j}.
    \end{equation}
    Due to the fact that $d_i\leq s$ holds for any $1\leq i<M$, the following two inequalities always hold.
    \begin{equation}
    \label{eq:BoundOnK}
    D_{2,k-1}<i\leq D_{1,j}=(D_{2,j}-s)\leq D_{2,j-1},
    \end{equation}
    \begin{equation}
    \label{eq:BoundOnL}
    D_{2,j}=(D_{1,j}+s)\leq(D_{1,j-1}+2s)<(i+2s)\leq D_{2,l}.
    \end{equation}
    Inequalities (\ref{eq:BoundOnK}) and (\ref{eq:BoundOnL}) imply $k<j$ and $j<l$, thus
    \begin{equation}
    \label{eq:SecondCaseInCondition}
    \tau_i=t_j\in(t_k,t_l)=(\bar{t}_i,\bar{t}_{i+2s})=\delta_i,~s<i\leq\bar{D},
    \end{equation}
    always holds. Combining (\ref{eq:FirstCaseInCondition}) and (\ref{eq:SecondCaseInCondition}) gives (\ref{eq:NonsingularityCondition}). Therefore, the coefficient matrix $\mathbf{B}$ on basis is always nonsingular for settings on the original problem, implying the existence and uniqueness of solution.

    The optimality conditions guarantee one unique solution in each decoupled dimension, which gives its sufficiency.
\end{proof}

\subsection{Proof of Proposition \ref{ps:DiffeomorphismKeepsLocalMin}}
\label{apd:DiffeomorphismKeepsLocalMin}
\begin{proof}
    Denote by $\mathbf{J}$ the Jacobian of $\mathbf{G}$. For any $x\in\mathbb{D}_\mathrm{F}$ and $y\in\mathbb{R}^N$, satisfying $x=\mathbf{G}(y)$ or $y=\mathbf{G}^{-1}(x)$, we have
    \begin{equation}
    \grad{H(y)}=\mathbf{J}(y)\tp\grad{F(x)}.
    \end{equation}
    Then, the nonsingularity of $\mathbf{J}$ implies that the first statement always holds. Denote by $\mathbf{K}_i$ the Hessian of the $i$-th entry in $\mathbf{G}$. If $x$ and $y$ are stationary points, the Hessian of $H$ is
    \begin{align}
    \hessian{H(y)}&=\mathbf{J}(y)\tp\hessian{F(x)}\mathbf{J}(y)+\sum_{i=1}^{N}{\frac{\partial F(x)}{\partial x_i}\mathbf{K}_i(y)} \nonumber\\
    &=\mathbf{J}(y)\tp\hessian{F(x)}\mathbf{J}(y).
    \end{align}
    Then, the nonsingular $\mathbf{J}$ implies that $\hessian{F(x)}$ and $\hessian{H(y)}$ are congruent~\cite{Horn2012MatrixA}. Thus the second statement holds.
\end{proof}

\bibliography{references}

\vspace{-1.2cm}

\begin{IEEEbiography}[{\includegraphics[width=1in,height=1.25in,clip,keepaspectratio]{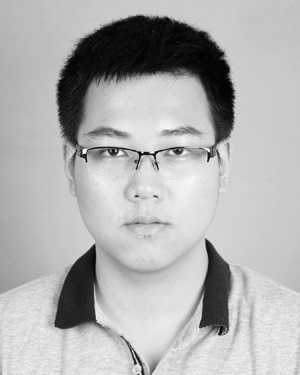}}]{Zhepei Wang}
    received the B.Eng. degree in control science and engineering from Zhejiang University, Hangzhou, China, in 2017.

    He is currently working toward the Ph.D. degree in control science and engineering from Zhejiang University, Hangzhou, China. His research interests include motion planning, discrete and computational geometry, numerical optimization, and autonomous navigation of unmanned vehicles.
\end{IEEEbiography}

\vspace{-1.2cm}

\begin{IEEEbiography}[{\includegraphics[width=1in,height=1.25in,clip,keepaspectratio]{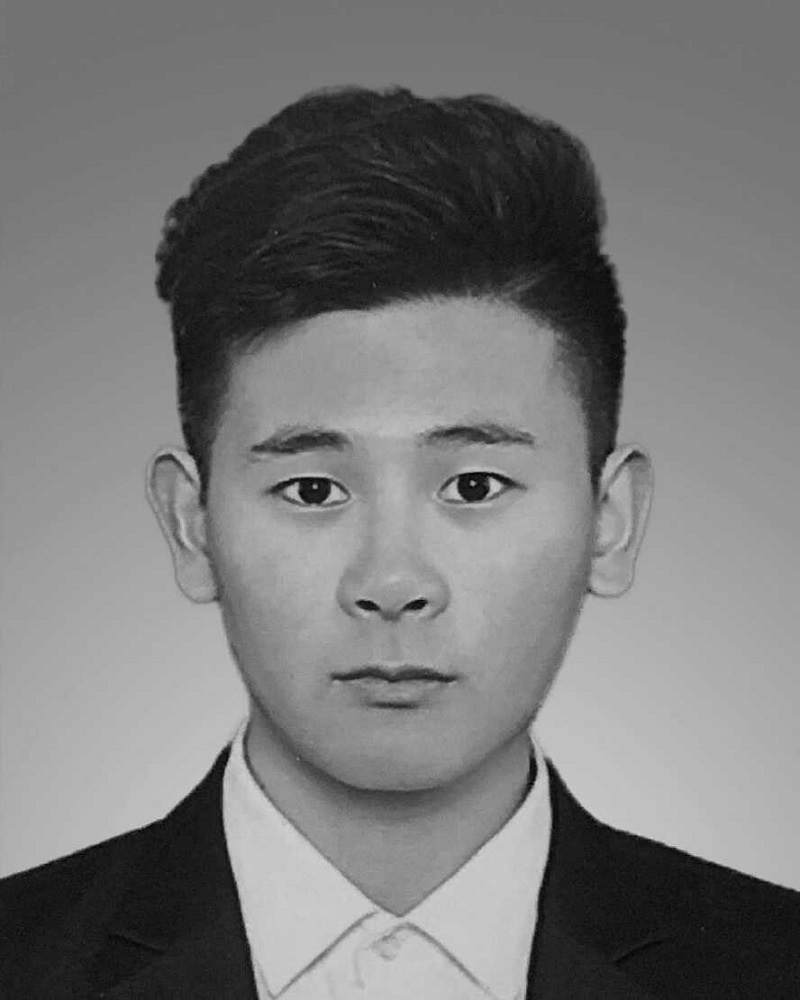}}]{Xin Zhou}
    received the B.Eng. degree in electrical engineering and automation from China University of Mining and Technology, Xuzhou, China, in 2019.

    He is currently working toward the Ph.D. degree in control engineering from Zhejiang University, Hangzhou, China. His research interests include motion planning and mapping for aerial swarm robotics.
\end{IEEEbiography}

\vspace{-1.2cm}

\begin{IEEEbiography}[{\includegraphics[width=1in,height=1.25in,clip,keepaspectratio]{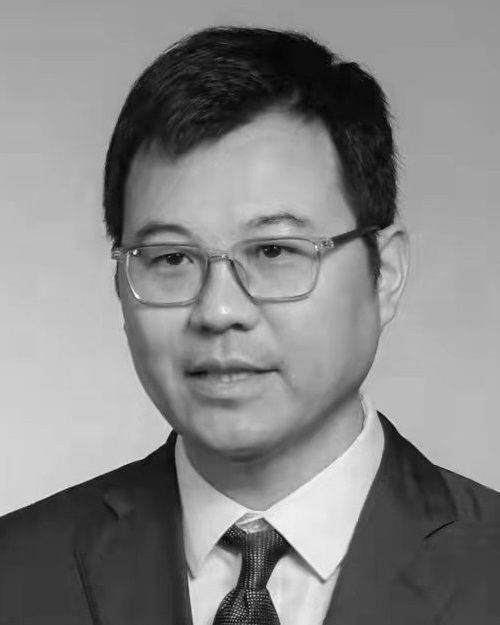}}]{Chao Xu}
    received the Ph.D. degree in mechanical engineering from Lehigh University, Bethlehem, PA, USA, in 2010.

    He is the Professor of Cyber-Systems \& Robotics, the Associate Dean of the College of Control Science and Engineering, and the Founding Dean of Huzhou Institute of Zhejiang University. He founded the FAST (Field Autonomous System and Computing) Lab. His research interests are robot mechanics and control. He is currently the Managing-Editor for the Journal of Industrial and Management Optimization, and the Founding Managing-Editor for IET Cyber-Systems \& Robotics.
\end{IEEEbiography}

\vspace{-1.0cm}

\begin{IEEEbiography}[{\includegraphics[width=1in,height=1.25in,clip,keepaspectratio]{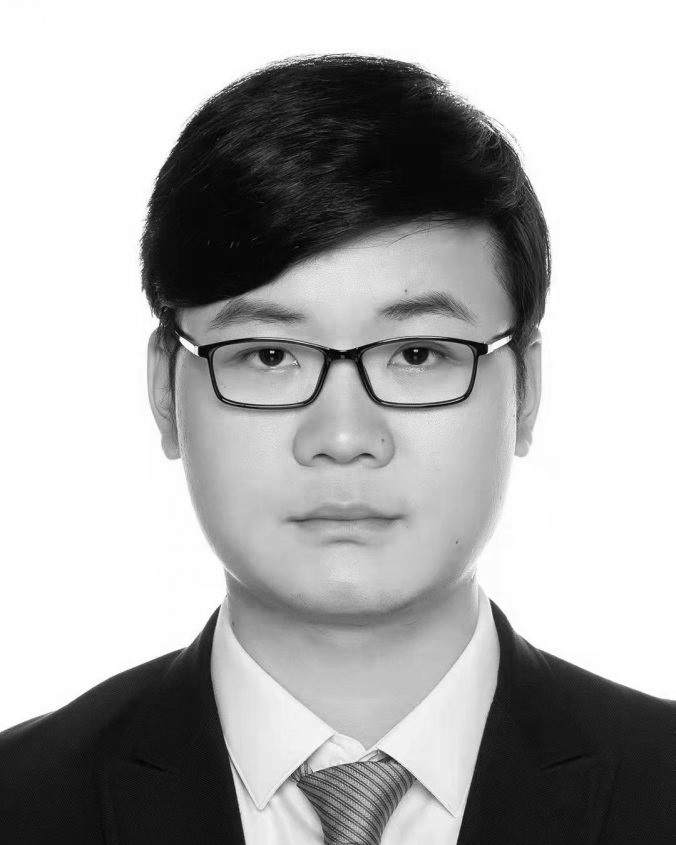}}]{Fei Gao}
    received the Ph.D. degree in electronic and computer engineering from the Hong Kong University of Science and Technology, Hong Kong, in 2019.

    He is currently an Assistant Professor with the College of Control Science and Engineering, Zhejiang University, where he co-directs the FAST (Field Autonomous System and Computing) Lab and leads the FAR (Flying Autonomous Robotics) Group. His research interests include aerial robots, swarms, autonomous navigation, motion planning, and localization and mapping.
\end{IEEEbiography}

\end{document}